%% file: main_convergence.tex
\documentclass[11pt]{article}

\input{commands.tex}

\usepackage[margin=1in]{geometry}

\usepackage{adjustbox}
\usepackage{multirow} 
\usepackage{booktabs}
\usepackage{multirow}
\usepackage{colortbl}
\usepackage{array}
\usepackage{pgf}
 
\usepackage{xcolor}   
\usepackage{pifont}

\definecolor{darkgreen}{RGB}{0,100,0}  
\usepackage{adjustbox}
\usepackage{threeparttable}
\usepackage{subcaption}
\usepackage{titletoc}

\begin{document}

\title{Convergence of Gradient Descent for General Neural Network Architectures Beyond the NTK Regime}

\author{Yuqing Wang\\
Johns Hopkins University\\
\texttt{ywan1050@jh.edu} \\
       }
\date{}

\maketitle

\begin{abstract}

Training dynamics is central to understanding neural networks, yet its theoretical analysis remains difficult even for simple architectures and becomes substantially more challenging for general modern architectures. In this paper, we propose a convergence framework for analyzing gradient descent (GD) dynamics under a broad family of neural network architectures and datasets beyond the neural tangent kernel (NTK) regime. The framework is formulated at the level of network blocks and covers architectures including pre-normalized multi-layer transformers. More precisely, under mild assumptions, we prove that for almost all initializations, GD with regular learning rates converges to the neighbourhood of a stationary point. This is mainly proved by establishing an iterate-dependent PL-type inequality through analyticity and measure-zero arguments, and by proving Lipschitz smoothness along the GD trajectory through polynomial generalized smoothness and a local relaxed dissipative condition. We further interpret the theorem under Xavier initialization and practical architectural scaling, showing that the learning rate scale depends on the depth and effective bottleneck dimensions rather than the largest width. Finally, we derive structural nondegeneracy implications for residual connections and function composition, and provide a generic characterization of global minimizers within our framework.



\end{abstract}

\tableofcontents
\newpage

\section{Introduction}

Understanding the training dynamics of neural networks is a fundamental problem in modern machine learning. The behavior of gradient-based training determines how representations are learned, which stationary regions are reached, and how architectural or algorithmic choices affect generalization. Many architectural designs and training strategies that are now routine in practice, such as residual connections \citep{he2016deep}, normalization \citep{ioffe2015batch,ba2016layer}, large learning rates \citep{lewkowycz2020large,wang2022large,wang2023good}, weight decay \citep{loshchilov2017decoupled}, and compositional architectures, remain difficult to explain because the underlying dynamics is highly nonlinear. Thus, a better understanding of training dynamics is not only important for theoretical purposes; it is also a main barrier to understanding many empirical phenomena and improving generalization performance.

A large body of theory studies this problem through the neural tangent kernel (NTK) framework. The NTK viewpoint gives powerful convergence guarantees in highly overparameterized regimes, but these guarantees typically rely on very large widths, small or infinitesimal learning rates, and training dynamics that remains close the initialization \citep{jacot2018neural,allen2019convergence,du2019gradient,lee2019wide,zou2020gradient,zou2019improved,chen2020much,oymak2020toward}. In this sense, the NTK provides a local characterization of training dynamics, and the resulting networks behave close to linear models around initialization. 

This limitation has motivated a growing body of work on feature learning, where representations evolve during training rather than staying close to their initialization \citep{yang2020feature,chen2022feature,chen2025global,ba2022high,allen2020towards,allen2022feature,cao2022benign,telgarsky2022feature}. This line of research is closer to the behavior expected in practical training, and has revealed mechanisms that are invisible in the lazy regime. However, most existing analyses still focus on analytically tractable settings, such as two-layer networks, infinite-width limits, or other simplified models.


In contrast, the architectures used in modern machine learning are far more heterogeneous. Transformers \citep{vaswani2017attention}, U-Nets \citep{ronneberger2015u}, and state-space models such as Mamba \citep{gu2023mamba} achieve strong empirical performance across many domains, but they are built from compositions of several nonlinear modules, including attention, normalization, residual connections, convolutional or structured state-space blocks, and feedforward layers. Therefore, understanding only simple network structures is not sufficient for explaining the training behavior of practical models. This motivates convergence tools that apply directly to broad families of multi-layer and compositional neural networks.

Despite this need, the theoretical analysis of complicated neural network dynamics faces major obstacles. Classical nonconvex convergence analyses of GD usually rely on two types of assumptions: a relaxation of convexity, such as a Polyak-Lojasiewicz (PL) condition, and a smoothness condition, such as Lipschitz smoothness of the objective. The neural network losses generally do not satisfy either assumptions globally. For the first condition, \citet{liu2022loss} show that overparameterized nonlinear systems and neural networks satisfy a PL condition outside certain singular regions. This gives an important characterization of the loss landscape, but it does not fully describe all regions that may be visited by training; for example, convergence can be sublinear in some regimes \citep{xu2023over}, which is not captured by a uniform PL condition. For the smoothness condition, several extensions of Lipschitz smoothness have been proposed, including generalized smoothness originated from $(L_0,L_1)$-smoothness \citep{zhang2019gradient} and further developed by \citet{li2023convex}. However, for complicated architectures, these conditions can remain implicit and difficult to verify directly from the network structure. This calls for smoothness conditions that are formulated at the layer or block level, remain mild for broad neural network families, and can be verified directly from the architectural components of modern networks.

Motivated by these considerations, this paper develops a convergence framework for GD beyond the NTK regime. The framework is designed to be stated and verified at the level of network blocks, while still tracking the actual nonlinear GD trajectory rather than a fixed linearization around initialization. It covers broad neural network families built from feedforward blocks, convolutional blocks, attention blocks, analytic activations, normalization layers, residual connections, and finite compositions such as pre-normalized multi-layer transformers. Its key idea is to combine analytic genericity, polynomial block-level regularity, and a local dynamical condition to obtain loss decay without assuming global PL or global Lipschitz-smoothness. More precisely, our main contributions are the following:
\begin{itemize}
     \item In place of global Lipschitz smoothness, we introduce mild architecture-level regularity and dynamical conditions for convergence. We propose polynomial generalized continuity and smoothness (Definitions~\ref{def:poly_generalized_continuous} and~\ref{def:poly_generalized_smoothness}), which are preserved under basic operations (Proposition~\ref{prop:poly_generalized_properties}); additionally, we establish a local relaxed dissipative condition near stationary regions (Definition~\ref{def:dissipative}) and show that $C^1$ functions satisfy this condition (Proposition~\ref{prop:dissipativity_rho}).
     \item In place of global PL assumption, we prove that a parameter-dependent PL-type inequality holds for almost all points in the parameter space, and GD avoids stagnation along the trajectory (Lemma~\ref{lem:mu_low_along_gd}).
     \item As our main theorem, we prove the convergence of GD with regular learning rates to the neighbourhood of a stationary point for almost every initialization and a broad class of datasets and neural networks (Theorem~\ref{thm:training}). The result applies to architectures built from the covered blocks (Example~\ref{ex:blocks}); in particular, a pre-normalized multi-layer transformer satisfies the model assumptions under the stated width condition (Lemma~\ref{lem:prenorm_transformer_model_assumptions}).
     \item We interpret the convergence theorem under practical initialization and architectural settings (Corollary~\ref{cor:Gaussian_learning_rate_order} and~\ref{cor:uniform_learning_rate_order}). For Xavier-normal and Xavier-uniform initializations, the learning rate scale is controlled by depth, bottleneck dimensions, and input dimension, rather than by the largest hidden width used in NTK analyses.
     \item We provide architectural insights into residual connections and function composition: adding a small identity component generically restores nondegeneracy of analytic maps, which helps explain the role of residual and compositional structures (Lemma~\ref{lem:full_rank_jacobian_mainbody}).
     \item We characterize a global-minimum implication of the generic nondegeneracy condition: outside the measure-zero bad parameter set excluded in the convergence theorem, any stationary point is a global minimizer (Theorem~\ref{thm:global_min_generic_nonlinearity}).
\end{itemize}

\section{Related Work}
\label{sec:related_works}

\textbf{Convergence of Neural Networks.} In the overparameterized regime, the neural tangent kernel (NTK) framework, introduced by \citet{jacot2018neural}, has led to a series of convergence results showing that when the network width is sufficiently large---typically a high-order polynomial in the number of samples $N$, the network depth $L$, and the inverse of the angle between data points---(S)GD with infinitesimal learning rates can drive the training loss to zero exponentially \citep{allen2019convergence, du2019gradient, lee2019wide, zou2020gradient, zou2019improved, ji2019polylogarithmic, chen2020much, song2019quadratic, oymak2020toward}. However, in the NTK regime, networks essentially behave as linear models and do not exhibit meaningful feature learning. To address this limitation, recent work has moved towards analyzing feature learning, using either gradient flow or a finite number of large steps \citep{yang2020feature,chen2022feature,chen2025global,ba2022high,allen2020towards, allen2022feature, cao2022benign, shi2021theoretical, telgarsky2022feature}. There is also a series of works analyzing SGD convergence under various regularity assumptions, including the PL condition, quadratic growth, the aiming condition, and Lipschitz smoothness \citep[etc.]{liu2023aiming,ma2018power}. Another line of work adopts a mean-field perspective \citep{song2018mean, chizat2018global, rotskoff2018neural, wei2019regularization, chen2020generalized, sirignano2020mean, fang2021modeling}, a dynamical mean-field theory perspective \citep{bordelon2022self,bordelon2023dynamics}, and a distributional perspective \citep{han2025precise}. In contrast to these approaches, which typically assume fixed architectures and/or infinitesimal learning rates, our work introduces tools applicable to a broad class of networks and remains effective under larger learning rates, offering a theoretical foundation for convergence in more practical and complex settings.

\textbf{Residual Connections.} Residual connections have been widely studied for their theoretical and practical benefits in deep learning. \citet{hardt2016identity} showed that deep linear residual networks exhibit no spurious local optima and maintain strong expressivity, a result further supported by \citet{liu2019towards}, who demonstrated the absence of spurious minima in residual architectures. \citet{huang2020deep} highlighted the role of residual connections in preserving learnability across depth, while \citet{scholkemper2024residual} found that they help mitigate oversmoothing in graph neural networks. In the context of transformers, \citet{qin2025convergence} showed that residual connections improve the conditioning of output matrices in single-layer architectures with feedforward networks and attention. In contrast to these works, our study analyzes more general and complex architectures and provides a theoretical explanation based on analytic nondegeneracy for the intrinsic nondegeneracy of the layerwise mappings induced by residual connections throughout training.

\section{Preliminary}
\label{sec:preliminary}
We use $\|\cdot\|$ to denote the $\ell^2$ norm for vectors and matrices, and use $\|\cdot\|_p$ to denote the $\ell^p$ norm for vectors and the $L^p(\Omega)$ norm for functions. The inner product between two vectors is denoted by $\ip{\cdot}{\cdot}$. For a function $\psi:\RR^m\to\RR^n$, we use $\nabla \psi\in\RR^{n\times m}$ to denote the Jacobian of $\psi$ with respect to all variables, and use $\nabla_w\psi$ to denote the Jacobian with respect to $w$. For a multi-index $\alpha=(\alpha_1,\cdots,\alpha_d)$, we denote $|\alpha|=\sum_{i=1}^d\alpha_i$; for a set $\Omega$, we use $|\Omega|$ to denote its volume and $\bar{\Omega}$ to denote its closure. For two measures $\nu$ and $\mu$, we use $\nu\ll\mu$ to denote absolute continuity of $\nu$ with respect to $\mu$. We denote by $\Leb_d(\cdot)$ the Lebesgue measure of a set in $\RR^d$. We denote by $B_w(r)$ the open Euclidean ball of radius $r$ centered at $w$. We use $\RR_{\ge 0}$ to denote the set of non-negative real numbers. We use $I$ for the identity matrix of compatible dimension, $\mathbf{1}$ for the all-ones vector, and $\odot$ for entrywise product. For a symmetric positive semidefinite matrix $A$, $\lambda_{\min}(A)$ denotes its smallest eigenvalue. We use $\cO(\cdot)$, $\Omega(\cdot)$, and $\Theta(\cdot)$ in their standard asymptotic senses.

Next, we introduce polynomial generalized continuity and smoothness, analyticity, and a local relaxed dissipative condition as key properties for proving the convergence of GD for neural networks.

\subsection{Polynomial Generalized Continuity and Smoothness}
\label{subsec:poly_bound_conti_smooth}

Classical non-convex optimization theory typically assumes Lipschitz smoothness of the objective function, a condition that is rarely satisfied in neural network training. One generalization of Lipschitz smoothness stems from the concept $(L_0, L_1)$-smoothness \citep{zhang2019gradient}. Building on this, \citet{li2023convex} introduced generalized smoothness for a broader class of non-Lipschitz smooth functions. Motivated by these developments, we propose a new formulation in nonconvex optimization tailored to neural network architectures, based on the following concepts.

We first separate the required control into three related notions: polynomial boundedness of the map, polynomial generalized continuity of the map, and polynomial smoothness through the continuity of its Jacobian.
\begin{definition}[Poly-Boundedness]
\label{def:poly_bound}
    A function $\psi(w):\Omega\subseteq\RR^n\to\RR^d$ is polynomially bounded if
    $$
        \| \psi(w)\|\le S(\|w_{1}\|,\cdots,\|w_{n_w}\|),\ \forall w\in \Omega,
    $$
    where $w_i\in\RR^{n_{i,1}\times n_{i,2}}$, and $\dim w=n=\sum_{i=1}^{n_w}n_{i,1}n_{i,2}$; $S(\cdot)$ is a polynomial whose coefficients are all positive.
\end{definition}

\begin{definition}[Poly-Continuity]
\label{def:poly_generalized_continuous}
    A function $\psi(w):\Omega\subseteq\RR^n\to\RR^d$ satisfies polynomial generalized continuity if
    $$
        \| \psi(w)- \psi(w')\|\le S(\|w_{\max,1}\|,\cdots,\|w_{\max,n_w}\|)\|w-w'\|,\ 
    $$
    $\forall w,w'\in \Omega,$ where $\|w_{\max,i}\|=\max \{\|w_i\|,\|w_i'\|\}$, and $S(\cdot)$ is a polynomial whose coefficients are all positive.
\end{definition}

\begin{definition}[Poly-Smoothness]
\label{def:poly_generalized_smoothness}
    A function $\psi(w):\Omega\subseteq\RR^n\to\RR^d$ in $C^1$ satisfies polynomial generalized smoothness if $\nabla \psi$ satisfies polynomial generalized continuity.
\end{definition}

One advantage of these definitions is that they are closed under the basic operations used to build neural networks. This property is summarized in the following proposition.

\begin{proposition}
\label{prop:poly_generalized_properties}
    Assume $\psi$ and $\tilde{\psi}$ satisfy polynomial boundedness for both the functions and their gradients, polynomial generalized continuity, and polynomial generalized smoothness. Then, whenever the following operations are evaluated on compatible domains,
    \begin{enumerate}
        \item $\psi(w)+\tilde{\psi}(w)$, where $\psi,\tilde{\psi}:\RR^d\to\RR^m$,
        \item $\psi(w)\tilde{\psi}(w)$, where $\psi(w)\in\RR^{m\times d}$, and $\tilde{\psi}(w)\in\RR^{d\times n}$,
        \item $(\psi\circ \tilde{\psi})(w)$, where $\psi:\RR^d\to\RR^m$, and $\tilde{\psi}:\RR^n\to\RR^d$,
    \end{enumerate}
    satisfy the same properties with possibly different polynomials.
\end{proposition}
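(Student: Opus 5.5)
The plan is to verify the four properties (boundedness of the map, boundedness of the Jacobian, poly-continuity, poly-smoothness) for each operation in turn. The argument uses only three elementary closure facts about polynomials with positive coefficients.
\begin{itemize}
\item Sums and products of such polynomials are again of this form.
\item Each such polynomial is monotone nondecreasing in every argument on $\RR_{\ge 0}$. Hence $S(\|w_1\|,\dots)\le S(\|w_{\max,1}\|,\dots)$ and $S(\|w'_1\|,\dots)\le S(\|w_{\max,1}\|,\dots)$.
\item Composing such a polynomial with another polynomially bounded quantity gives a polynomial bound. Concretely, if $\|u\|\le P(\|w\|)$, then $S(\|u\|)\le S(P(\|w\|))$, and $S\circ P$ is again a polynomial with positive coefficients. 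Since $\|u_j\|\le\|u\|$ for every block $u_j$ of $u$, the same holds blockwise.
\end{itemize}
Throughout, I will use that a positive constant added to a polynomial keeps it admissible. This lets constants be absorbed freely.

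\emph{Sum.} All four properties follow from the triangle inequality, by adding the two polynomials. The Jacobian of the sum is the sum of the Jacobians, so smoothness follows in the same way.

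\emph{Product.} For boundedness, use $\|\psi\tilde\psi\|\le\|\psi\|\|\tilde\psi\|$. For continuity, write
\[
\psi(w)\tilde\psi(w)-\psi(w')\tilde\psi(w')=(\psi(w)-\psi(w'))\tilde\psi(w)+\psi(w')(\tilde\psi(w)-\tilde\psi(w')).
\]
Each term is then bounded by (a continuity polynomial)$\times$(a boundedness polynomial), using monotonicity to pass to $w_{\max}$.

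For the Jacobian, the product rule gives $\partial_{w_k}(\psi\tilde\psi)=(\partial_{w_k}\psi)\tilde\psi+\psi(\partial_{w_k}\tilde\psi)$. The full Jacobian is the tensor collecting these terms, and its norm is bounded up to a dimension constant by the Jacobian norms. So Jacobian boundedness follows from products of bounds. Jacobian continuity follows from the same add-and-subtract trick, applied once to each of the two terms. This uses all four hypotheses: boundedness of $\psi,\tilde\psi$ and of their gradients, continuity of $\psi,\tilde\psi$, and continuity of their gradients.

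\emph{Composition.} This is the step I expect to need care, because the polynomial arguments change from $w$ to $u=\tilde\psi(w)$.
\begin{itemize}
\item Boundedness: $\|\psi(\tilde\psi(w))\|\le S_\psi(\|\tilde\psi(w)\|)\le S_\psi(S_{\tilde\psi}(\|w\|))$, which is a polynomial in the $\|w_i\|$.
\item Continuity: $\|\psi(u)-\psi(u')\|\le S_\psi(\max(\|u\|,\|u'\|))\|u-u'\|$. Bound $\max(\|u\|,\|u'\|)$ by $S_{\tilde\psi}(\|w_{\max}\|)$ via monotonicity, and bound $\|u-u'\|$ by the continuity of $\tilde\psi$.
\item Smoothness: apply the chain rule, $\nabla(\psi\circ\tilde\psi)(w)=\nabla\psi(\tilde\psi(w))\nabla\tilde\psi(w)$. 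This is a product of two matrix-valued maps, so I will reduce to the product case. For that I need the factor $\nabla\psi\circ\tilde\psi$ to be polynomially bounded and poly-continuous in $w$. Its boundedness follows as above. Its continuity is the continuity argument just given, applied to $\nabla\psi$, which is poly-continuous by the poly-smoothness of $\psi$.
\end{itemize}
The one subtlety is that the domain of $\psi$ must contain the image of $\tilde\psi$; this is covered by the ``compatible domains'' clause. The product case is only used at the level of boundedness and continuity of the map $\nabla(\psi\circ\tilde\psi)$. Its Lipschitz-type bound is exactly poly-smoothness of the composition, so I never need second derivatives.

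Finally, I will note that the product case is only needed through boundedness and continuity of the factors, which is exactly what the chain-rule factors supply. This closes the argument without requiring any higher-order regularity beyond the stated hypotheses.
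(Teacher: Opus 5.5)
Your proposal is correct and follows the paper's proof essentially step for step: the triangle inequality for sums; the add-and-subtract split, with monotonicity of positive-coefficient polynomials, for products and for the product-rule Jacobian; and, for compositions, substituting the bound $S_{\tilde\psi}$ into the arguments of $\psi$'s polynomials together with the chain rule. Routing the composition's smoothness through the product case, with factors $\nabla\psi\circ\tilde\psi$ and $\nabla\tilde\psi$, only repackages the paper's direct split of $\nabla\psi(\tilde\psi(w))\nabla\tilde\psi(w)-\nabla\psi(\tilde\psi(w'))\nabla\tilde\psi(w')$, and it gives the same polynomial bound.
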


This property makes it sufficient to verify the definitions for standard elementary blocks used in neural networks. Below, we list some representative activation and normalization functions and show that they all satisfy polynomial boundedness, polynomial generalized continuity, and polynomial generalized smoothness. The proof of Proposition~\ref{prop:poly_generalized_properties} is in Appendix~\ref{subapp:poly_bound_continuity_smooth}.

\begin{example}
\label{ex:activation_normalization}
Let $t\in\RR$, $z\in\RR^m$, $w\in\RR^d$, and let $\epsilon>0$. We consider the following examples.
\begin{enumerate}
    \item \textbf{Activations.} Typical activations include 
    $$
       \text{polynomial } p(w)=\sum_{|\alpha|\le r}c_\alpha w^\alpha;\qquad
       \text{sigmoid } \sigma_{\mathrm{sig}}(t)=\frac{1}{1+e^{-t}};\qquad
        \tanh(t)=\frac{e^t-e^{-t}}{e^t+e^{-t}};
    $$
    $$
        \operatorname{softmax}(z)_j=\frac{e^{z_j}}{\sum_{q=1}^m e^{z_q}},\quad j=1,\cdots,m;\qquad  \operatorname{SiLU}(t)=t\,\sigma_{\mathrm{sig}}(t)=\frac{t}{1+e^{-t}};    $$
    $$
        \operatorname{GELU}(t)=t\Phi(t),\qquad
        \text{where }\Phi(t)=\frac{1}{\sqrt{2\pi}}\int_{-\infty}^t e^{-s^2/2}\,ds.
    $$
    \item \textbf{Normalizations.} Typical normalizations include
    $$
       \text{smoothed normalization } \tau_\epsilon(w)=\frac{w}{\sqrt{\|w\|^2+\epsilon^2}},
    $$
    $$
       \text{layer normalization } \mathrm{LN}(z)
        =\gamma\odot\frac{z-\bar z\mathbf{1}}{\sqrt{m^{-1}\|z-\bar z\mathbf{1}\|^2+\epsilon^2}}+\beta,
        \qquad \text{where }\bar z=m^{-1}\mathbf{1}^\top z,
    $$
    \begin{align*}
        \text{batch normalization }\mathrm{BN}(U)_{a,c}
        &=\gamma_c\frac{U_{a,c}-\frac{1}{|\cI|}\sum_{a\in\cI}U_{a,c}}{\sqrt{\frac{1}{|\cI|}\sum_{a\in\cI}\left(U_{a,c}-\frac{1}{|\cI|}\sum_{a\in\cI}U_{a,c}\right)^2+\epsilon^2}}+\beta_c, 
    \end{align*}
    for some tensor $U$ and index set $\cI$.
    \item \textbf{Finite Sums, Products, and Compositions} of the above maps, whenever the dimensions are compatible, are also included.
\end{enumerate}
\end{example}

\begin{lemma}
\label{cor:poly_smooth_with_NN}
    The functions in Example~\ref{ex:activation_normalization} and their gradients are polynomially bounded; moreover, these functions satisfy polynomial generalized continuity and polynomial generalized smoothness.
\end{lemma}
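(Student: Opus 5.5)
The approach is to reduce everything to a single mechanism: a $C^2$ map whose value, Jacobian, and Hessian are polynomially bounded satisfies all three properties. Concretely, if $\psi\in C^2$ and $\|\psi(w)\|$, $\|\nabla\psi(w)\|$ and $\|\nabla^2\psi(w)\|$ are each bounded by a positive-coefficient polynomial $S$ in the block norms $\|w_i\|$, then the mean value inequality along the segment $w_t=w'+t(w-w')$ gives
$$
\|\psi(w)-\psi(w')\|\le \sup_{t\in[0,1]}\|\nabla\psi(w_t)\|\,\|w-w'\|.
$$
Since $\|(w_t)_i\|\le\max\{\|w_i\|,\|w_i'\|\}=\|w_{\max,i}\|$ and $S$ has positive coefficients, hence is monotone in each argument on $\RR_{\ge0}$, the supremum is at most $S(\|w_{\max,1}\|,\dots)$. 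This is polynomial generalized continuity (Definition~\ref{def:poly_generalized_continuous}). Applying the same argument to $\nabla\psi$ using the Hessian bound gives polynomial generalized smoothness (Definition~\ref{def:poly_generalized_smoothness}). It therefore suffices to bound $\psi$, $\nabla\psi$, and $\nabla^2\psi$ polynomially for each elementary map in Example~\ref{ex:activation_normalization}. Item 3 of the example then follows from Proposition~\ref{prop:poly_generalized_properties}.

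The activations are handled case by case.
\begin{itemize}
\item For polynomials $p$, all derivatives are themselves polynomials, and $|w^\alpha|\le\prod_i\|w_i\|^{\alpha_i}$, so every bound holds with positive-coefficient polynomials.
\item For sigmoid and $\tanh$, the functions and all their derivatives are bounded by constants, since they are polynomials in $\sigma_{\mathrm{sig}}$ or in $\tanh$.
\item For softmax, the Jacobian is $\mathrm{diag}(s)-ss^\top$ and the second derivatives are cubic polynomials in $s\in[0,1]^m$, so all are uniformly bounded.
\item SiLU and GELU are products of $t$ with a bounded function whose derivatives are bounded; note $\Phi'(t)=\varphi(t)$ and $\varphi'(t)=-t\varphi(t)$ are bounded. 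By the product rule, the function and its first two derivatives are bounded by $C(1+|t|)$.
\end{itemize}
Constants are allowed as positive-coefficient polynomials.

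For the normalizations, write $r(w)=\sqrt{\|w\|^2+\epsilon^2}\ge\epsilon$.
\begin{itemize}
\item \emph{Smoothed normalization.} $\tau_\epsilon$ is bounded by $1$. Its Jacobian is $\nabla\tau_\epsilon=r^{-1}I-r^{-3}ww^\top$, with norm at most $2/\epsilon$. The second derivatives are sums of terms of the form $r^{-3}w$ and $r^{-5}www$, which are bounded by $C/\epsilon^2$ using $\|w\|\le r$.
\item \emph{Layer normalization.} Write $\mathrm{LN}(z)=\gamma\odot\sqrt m\,\tau_{\sqrt m\epsilon}(Pz)+\beta$, where $P=I-m^{-1}\mathbf 1\mathbf 1^\top$ is a fixed projection. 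If $\gamma,\beta$ are trainable, LN is a product and sum of polynomially bounded maps in $(\gamma,\beta,z)$, so Proposition~\ref{prop:poly_generalized_properties} applies. The composition of a linear map with $\tau$ preserves the bounds.
\item \emph{Batch normalization.} The same reduction applies column-wise: each column $c$ is $\gamma_c\sqrt{|\cI|}\,\tau_{\sqrt{|\cI|}\epsilon}(P U_{\cI,c})+\beta_c$.
\end{itemize}

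The only genuinely delicate step is the normalization bounds. There one must check that the $\epsilon$-regularization keeps $r^{-k}$ uniformly bounded, and that each derivative term has homogeneity degree $\le0$ in $w$ relative to powers of $r$. I would verify this explicitly by writing $\partial^2\tau_\epsilon$ in closed form. The remaining concern is the gradient polynomial-boundedness claim for composite maps, which I would obtain from the chain rule together with the closure argument in Proposition~\ref{prop:poly_generalized_properties}.
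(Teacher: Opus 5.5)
Your proposal is correct and follows the same route as the paper: you reduce to the elementary maps via the closure Proposition~\ref{prop:poly_generalized_properties}, then verify each map by bounding it and its derivatives (constants for sigmoid, $\tanh$, softmax and $\tau_\epsilon$, and a factor of $t$ times a bounded function for SiLU and GELU). Where you differ, you fill in details rather than change the argument. You make explicit the mean-value and monotonicity step that turns first- and second-derivative bounds into poly-continuity and poly-smoothness, which the paper leaves implicit. You also write LN and BN as $\tau_{\sqrt{m}\,\epsilon}$ composed with a centering projection, whereas the paper treats them as a polynomial numerator over a denominator bounded below; your version avoids having to verify the reciprocal-square-root factor separately.
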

Together with Proposition~\ref{prop:poly_generalized_properties}, Lemma~\ref{cor:poly_smooth_with_NN} implies that many neural networks, including transformers, residual networks, and feedforward networks built from the above functions, satisfy the three properties (see Section~\ref{subsec:model_family_examples}). The proof of Lemma~\ref{cor:poly_smooth_with_NN} is in Appendix~\ref{subapp:poly_bound_continuity_smooth}.

For the convergence analysis, polynomial generalized smoothness plays the same structural role as a smoothness assumption in classical descent arguments. In particular, it yields the following descent-type upper bound.
\begin{lemma}
\label{lem:smoothness_inequalities}
    If the function $\psi(w)\in C^1$ satisfies polynomial generalized smoothness, then the following inequality holds
    \begin{align*}
        \psi(w')\le \psi(w)+\nabla \psi(w)^\top (w'-w)+\frac{S(\|w_{\max,1}\|,\cdots,\|w_{\max,d}\|)}{2}\|w-w'\|^2
    \end{align*}
\end{lemma}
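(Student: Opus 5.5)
The plan is the classical descent-lemma argument, run along the segment from $w$ to $w'$. Set $\gamma(t)=w+t(w'-w)$ for $t\in[0,1]$. Since $\psi\in C^1$, the fundamental theorem of calculus applied to $t\mapsto\psi(\gamma(t))$ gives
\begin{align*}
    \psi(w')-\psi(w)-\nabla\psi(w)^\top(w'-w)=\int_0^1\bigl(\nabla\psi(\gamma(t))-\nabla\psi(w)\bigr)^\top(w'-w)\,dt .
\end{align*}
Here $\psi$ is scalar-valued, or we read the inequality componentwise. By Cauchy--Schwarz, the integrand is bounded by $\|\nabla\psi(\gamma(t))-\nabla\psi(w)\|\,\|w'-w\|$.

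Next I apply Definition~\ref{def:poly_generalized_smoothness}, i.e.\ polynomial generalized continuity of $\nabla\psi$, to the pair $(\gamma(t),w)$. This gives
\begin{align*}
    \|\nabla\psi(\gamma(t))-\nabla\psi(w)\|\le S\bigl(\max\{\|\gamma(t)_1\|,\|w_1\|\},\dots\bigr)\,t\|w'-w\|,
\end{align*}
where I use $\|\gamma(t)-w\|=t\|w'-w\|$.

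The main point is to replace the arguments of $S$ by the quantities $\|w_{\max,i}\|=\max\{\|w_i\|,\|w_i'\|\}$ that appear in the statement. Each block satisfies $\gamma(t)_i=(1-t)w_i+tw'_i$, so by convexity of the norm $\|\gamma(t)_i\|\le\max\{\|w_i\|,\|w_i'\|\}=\|w_{\max,i}\|$. Hence $\max\{\|\gamma(t)_i\|,\|w_i\|\}\le\|w_{\max,i}\|$. The polynomial $S$ has all coefficients positive, so it is nondecreasing in each nonnegative argument. This lets me bound the $S$ factor by $S(\|w_{\max,1}\|,\dots,\|w_{\max,d}\|)$ uniformly in $t$.

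Finally I integrate: $\int_0^1 t\,dt=\tfrac12$, which yields exactly the stated bound with the factor $S/2$. The only delicate step is the monotonicity and convexity argument that controls $S$ along the segment. If $\Omega$ is not convex, the segment may leave the domain, so I would assume $\Omega$ convex (e.g.\ $\RR^n$), as is implicit in the statement.
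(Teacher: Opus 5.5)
Your proposal is correct and follows the paper's proof: the paper also writes $\psi(w')-\psi(w)$ as the integral of $\nabla\psi$ along the segment, applies polynomial continuity of $\nabla\psi$ to the pair $(w,(1-t)w+tw')$, bounds the arguments of $S$ by $\|w_{\max,i}\|$, and integrates $t$ to obtain the factor $1/2$. The paper leaves the replacement of the arguments of $S$ implicit; your justification (convexity of the blockwise norm plus monotonicity of a positive-coefficient polynomial on nonnegative arguments) and your remark that the segment must stay in the domain supply those missing details.
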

\noindent The proof of Lemma~\ref{lem:smoothness_inequalities} is in Appendix~\ref{subapp:poly_bound_continuity_smooth}.

A brief comparison between polynomial generalized smoothness and the generalized smoothness condition in~\citet{li2023convex} is given in Appendix~\ref{subapp:generalized_smoothness_vs_poly}.

\subsection{Analytic Functions}
\label{subsec:analytic}

Analyticity is a useful property for measure-zero nondegeneracy arguments. For a compact domain $\cD$, real-analytic functions are dense in the space of continuous functions $C(\cD)$ with respect to uniform convergence on compact sets \citep{grauert1958levi}. Thus, assuming analyticity is reasonable in many settings. We use the following definition.
\begin{definition}[Analytic Function]
\label{def:analytic}
    A function  
    $\psi(w)=\begin{pmatrix}
        \psi_1(w)\\\vdots\\ \psi_m(w)
    \end{pmatrix}:U\to\RR^m,$
    where $U\subseteq\RR^m$ is open, is called real-analytic on $U$ if, for each $w_0\in U$ and each $i=1,\cdots,m$, the component $\psi_i$ can be represented by a convergent power series in a neighborhood of $w_0$.
\end{definition}
Many activation functions and neural network components are real-analytic, as summarized below.

\begin{lemma}
\label{cor:analytic_with_NN}
   The functions in Example~\ref{ex:activation_normalization} are all real-analytic.
\end{lemma}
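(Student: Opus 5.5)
The plan is to use the standard closure properties of real-analytic maps: sums, products, and quotients with nonvanishing denominator are analytic; compositions of analytic maps are analytic; and a vector-valued map is analytic if and only if each component is. With these in hand, we verify analyticity for a small set of primitive scalar functions and then assemble every function in Example~\ref{ex:activation_normalization} from them.

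\textbf{Primitives.} Polynomials are analytic on all of $\RR^d$, since they are finite power series. The exponential $e^t$ is entire. The map $s\mapsto 1/s$ is analytic on $\{s\neq 0\}$, and $s\mapsto s^{-1/2}$ is analytic on $(0,\infty)$ via its binomial series around any $s_0>0$. Finally, $\Phi$ is analytic because $\Phi'(t)=(2\pi)^{-1/2}e^{-t^2/2}$ is entire; termwise integration of its power series gives a power series for $\Phi$ with infinite radius of convergence.

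\textbf{Assembly.} For the activations:
\begin{itemize}
\item The sigmoid is $1/(1+e^{-t})$, and the denominator is always $>0$, so it is analytic.
\item $\tanh(t)=(e^t-e^{-t})/(e^t+e^{-t})$ is analytic because $e^t+e^{-t}>0$.
\item Each coordinate of softmax is $e^{z_j}/\sum_q e^{z_q}$, whose denominator is positive, so softmax is analytic.
\item SiLU and GELU are products of analytic functions.
\end{itemize}
For the normalizations:
\begin{itemize}
\item $\tau_\epsilon(w)=w\cdot(\|w\|^2+\epsilon^2)^{-1/2}$ composes the polynomial $\|w\|^2+\epsilon^2\ge\epsilon^2>0$ with $s^{-1/2}$, which is analytic on $(0,\infty)$.
\item LN is analytic for the same reason. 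The map $z\mapsto z-\bar z\mathbf 1$ is linear, $m^{-1}\|z-\bar z\mathbf 1\|^2+\epsilon^2$ is a polynomial bounded below by $\epsilon^2$, and $\gamma\odot(\cdot)+\beta$ is polynomial in $(z,\gamma,\beta)$ jointly. So LN is analytic in all of its variables, including the affine parameters.
\item BN is entrywise of the same form: a linear centering map divided by the square root of a polynomial bounded below by $\epsilon^2$, then an affine map.
\end{itemize}
Finite sums, products, and compositions of these maps are then analytic by the closure properties.

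\textbf{Main obstacle.} The only step that is not a pure citation is the composition property, together with the facts that a quotient is analytic where the denominator does not vanish and that $s^{-1/2}$ is analytic on $(0,\infty)$. The quotient and square-root facts reduce to composing with $1/s$ and $s^{-1/2}$. We will cite the composition theorem from standard references (Krantz--Parks, \emph{A Primer of Real Analytic Functions}) rather than reprove it. If a self-contained argument is wanted, one can use the characterization via holomorphic extension: a real-analytic map extends holomorphically to a complex neighborhood, compositions of holomorphic maps are holomorphic, and restricting back to the reals gives analyticity. The remaining care is simply to check that every denominator or radicand is bounded away from zero on the whole real domain. This is exactly where $\epsilon>0$ in the normalizations is essential: with $\epsilon=0$ the normalization maps fail to be analytic at $w=0$, or at constant $z$ for LN and BN.
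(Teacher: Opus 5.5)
Your proof is correct and follows the same route as the paper: you establish analyticity of the primitives (the exponential, polynomials, and the relevant denominators, which are strictly positive) and then invoke closure of real-analytic maps under sums, products, quotients with nonvanishing denominators, and compositions, as in the paper's Proposition~\ref{prop:analytic_function}. You are more explicit than the paper about two points it leaves implicit in that appeal: the analyticity of $s\mapsto s^{-1/2}$ on $(0,\infty)$, and the analyticity of $\Phi$ via termwise integration.
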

The lemma above implies that feedforward layers using tanh, GELU, or SiLU, as well as attention layers with softmax or linear activation, are real-analytic. Consequently, architectures such as transformers, residual networks, and feedforward networks built from these real-analytic blocks are themselves real-analytic. The proof of Lemma~\ref{cor:analytic_with_NN} is in Appendix~\ref{subapp:analytic_function}.

\subsection{Local Relaxed Dissipative Condition}
\label{subsec:dissipative}

Besides poly-boundedness, continuity, and smoothness, we need an additional condition to control the local training dynamics. Motivated by weak dissipative conditions in operator theory, such as hypomonotonicity \citep{moudafi2004remark}, we introduce the following local relaxed dissipative condition.
    \begin{definition}[$(w,w^*,r,\rho,\epsilon)$-Dissipativity]
    \label{def:dissipative}
    A function $\psi(w)\in C^1$ satisfies the $(w,w^*,r,\rho,\epsilon)$-dissipative condition near $w$ if there exists some stationary point $w^*$ and some constant $r> \|w-w^*\|$, s.t., $\forall \tilde{w}\in B_{w^*}(r)\backslash\{v:\|\nabla \psi(v)\|\le \epsilon\}$, we have $$
        \nabla \psi(\tilde{w})^\top (\tilde{w}-w^*)\ge-\rho\|\nabla \psi(\tilde{w})\|^2,$$
    where $\epsilon>0$, and $\rho\in\RR$ is a constant depending on $w,w^*,\epsilon$, but independent of $\tilde{w}$.
\end{definition}
The above condition characterizes the quality of the local landscape near a stationary point by comparing the gradient direction with the direction toward the stationary point. Notably, it is a very general and mild property: once the tolerance $\epsilon$ is fixed, every continuously differentiable function satisfies the condition for some dissipativity constant. This is summarized in the following proposition.

\begin{proposition}
    \label{prop:dissipativity_rho}
Let $\psi\in C^1$ and let $w^*$ be any stationary point of $\psi$. For any constant $r>\|w-w^*\|$ and any $\epsilon>0$, $\psi$ satisfies the $(w,w^*,r,\rho,\epsilon)$-dissipative condition near $w$ for some dissipativity constant $\rho$ satisfying
$$
    \rho\le \frac{r}{\epsilon}.
$$
\end{proposition}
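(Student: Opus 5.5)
The plan is to use only Cauchy--Schwarz together with the two facts defining the admissible set: $\tilde w$ lies in the ball $B_{w^*}(r)$, and its gradient norm exceeds the tolerance $\epsilon$. First note that $w^*$ is a stationary point, $r>\|w-w^*\|$ is given, and the set $B_{w^*}(r)\backslash\{v:\|\nabla \psi(v)\|\le \epsilon\}$ is well defined. If this set is empty, the dissipative inequality holds vacuously for every $\rho$; any $\rho\le r/\epsilon$ then works, for instance $\rho=0$ (or $\rho=r/\epsilon$ itself). So assume the set is nonempty.

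Fix $\tilde w$ in the set, so $\|\tilde w-w^*\|<r$ and $\|\nabla\psi(\tilde w)\|>\epsilon$. By Cauchy--Schwarz,
\[
\nabla \psi(\tilde w)^\top(\tilde w-w^*)\ \ge\ -\|\nabla\psi(\tilde w)\|\,\|\tilde w-w^*\|\ \ge\ -r\,\|\nabla\psi(\tilde w)\|.
\]
To convert this into a quadratic lower bound, use $\|\nabla\psi(\tilde w)\|>\epsilon$. This gives $\|\nabla\psi(\tilde w)\| = \|\nabla\psi(\tilde w)\|^2/\|\nabla\psi(\tilde w)\| < \|\nabla\psi(\tilde w)\|^2/\epsilon$. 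Hence
\[
\nabla \psi(\tilde w)^\top(\tilde w-w^*)\ \ge\ -\frac{r}{\epsilon}\|\nabla\psi(\tilde w)\|^2 .
\]
Setting $\rho:=r/\epsilon$ yields a constant independent of $\tilde w$. It depends only on $r$ and $\epsilon$, and $r$ is chosen in terms of $w$ and $w^*$. This verifies the $(w,w^*,r,\rho,\epsilon)$-dissipative condition with $\rho\le r/\epsilon$.

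Optionally, one can sharpen the constant. Take $\rho$ to be the supremum over admissible $\tilde w$ of $-\nabla\psi(\tilde w)^\top(\tilde w-w^*)/\|\nabla\psi(\tilde w)\|^2$, or $0$ if this supremum is negative and a nonnegative constant is preferred. The displayed bound shows this supremum is at most $r/\epsilon$, so the optimal constant also satisfies the claim.

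There is no real obstacle; the $C^1$ assumption is needed only so that $\nabla\psi$ is defined. The only point needing care is the division by $\|\nabla\psi(\tilde w)\|$, which is legitimate because it is bounded below by $\epsilon>0$ on the admissible set.
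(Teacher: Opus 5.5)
Your proof is correct and matches the paper's argument: Cauchy--Schwarz with $\|\tilde w-w^*\|<r$, then $\|\nabla\psi(\tilde w)\|>\epsilon$ to turn the linear bound into $-\frac{r}{\epsilon}\|\nabla\psi(\tilde w)\|^2$, which gives $\rho=r/\epsilon$. Your remarks on the vacuous empty-set case and on the optimal (supremum) constant are harmless additions that the paper does not need.
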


The proposition provides a default upper bound $r/\epsilon$ for the dissipativity constant. The value of $\rho$ can be negative, and more favorable local geometry leads to a smaller $\rho$. For instance, if $\psi$ is locally convex near $w^*$ on $B_{w^*}(r)$, then one can take $\rho=0$. The proof of Proposition~\ref{prop:dissipativity_rho} is in Appendix~\ref{subapp:dissipativity}.

\section{Convergence Theory}
\label{sec:theory}

In this section, we present a convergence framework for a family of neural networks under GD. We first specify the training setup in Section~\ref{subsec:training_setup}, state the model assumptions in Section~\ref{subsec:model_assumptions}, give concrete examples of architectures covered by the model family in Sections~\ref{subsec:architecture_example} and~\ref{subsec:model_family_examples}, and present the main convergence theorem in Section~\ref{subsec:main_convergence_theorem}. We discuss the interpretations of this convergence theory in Section~\ref{sec:interpretation} and implications in Section~\ref{sec:implications_convergence_theory}.

\subsection{Training Setup}
\label{subsec:training_setup}

We consider supervised data satisfying the following assumption.
\begin{assumption}[Data]
\label{assump:data_distribution_absolutely_continuous_Lebesgue_ground_truth}
    Let $x_1,\cdots,x_N\in\RR^d$ be input data, and $y_1,\cdots,y_N\in\RR^d$ be the corresponding outputs. Assume $$(x_1,\cdots,x_N)\,{\sim}\,\pi(z_1,\cdots,z_N)dz_1\cdots dz_N,$$ where $\pi(z_1,\cdots,z_N)\in L^\infty$ is supported on an open set $\Omega\times\cdots\times\Omega\subseteq\RR^{Nd}$ with $|\Omega|<\infty$, and $\pi\ll \Leb_{Nd}$. 
    
\end{assumption}

The input distribution assumption is used only to exclude degenerate input configurations, by requiring the joint law of $(x_1,\cdots,x_N)$ to be absolutely continuous with respect to Lebesgue measure. In particular, the convergence argument does not rely on the input samples being iid; it applies to any absolutely continuous joint input distributions, including those induced by data preprocessing, selection, or resampling. The outputs $y_i$ are arbitrary vectors in $\RR^d$, so the framework covers both noiseless labels and noisy observations.

We consider a family of neural networks as follows:
\begin{align}
\label{eqn:NN}
    u_{0,i}&=x_i;\quad\notag\\
    u_{\ell+1,i}&=u_{\ell,i}+{\varphi}_\ell(\theta_\ell;u_{\ell,i}),\forall\ell=0,\cdots,L-1;\quad \\
    f(\theta;x_i)&={\varphi}_L(\theta_L;u_{L,i})\notag
\end{align}
where $\theta=(\theta_0,\cdots,\theta_L)$ is the collection of weights with the $\theta_i$ corresponding to the $i$th layer. For the rest of the paper, we sometimes omit the subscript of $\theta$ for simplicity. We use the following $\ell^2$ loss $$\cL(\theta;\{x_i,y_i\}_{i=1}^N)=\frac{1}{N}\sum_{i=1}^Nl(f(x_i),y_i),$$ where  $l(\theta;x_i,y_i)=\frac{1}{2}\|y_i-f(\theta;x_i)\|^2$. We consider minimizing $\cL(\theta)$ by gradient descent
    $$\theta^{k+1}=\theta^k-\eta\nabla_\theta\cL(\theta^k),$$
where $\eta>0$, and $\theta^k$ is the vectorization of all parameters $\theta$ at the $k$th iteration.

\subsection{Model Assumptions}
\label{subsec:model_assumptions}

In this section, we introduce the regularity and nonlinear assumptions used by the main theorem.
\begin{assumption}[Real-Analyticity]
\label{assump:analytic}
    Assume ${\varphi}_\ell(\theta;u)$ are real-analytic (Definition~\ref{def:analytic}) for all $\theta$, $u$, and $\ell=1,\cdots,L$, and ${\varphi}_0(\theta;u)$ is real-analytic for all $\theta$, $u$ in some bounded open neighborhood of $\bar{\Omega}$. 
\end{assumption}
\begin{assumption}[Poly-Boundedness, Continuity, Smoothness]
    \label{assump:generalized_smoothness_detail}
    Assume for all $\ell=0,\cdots,L$, $\varphi_\ell(\theta;u)$ satisfies polynomial generalized continuity (Definition~\ref{def:poly_generalized_continuous}) and smoothness (Definition~\ref{def:poly_generalized_smoothness}) for both $\theta$ and $u$. In addition, $\varphi_\ell(\theta;u),\nabla_\theta\varphi_\ell(\theta;u),\nabla_u\varphi_\ell(\theta;u)$ are polynomially bounded (Definition~\ref{def:poly_bound}).
\end{assumption}
Assumption~\ref{assump:analytic} and~\ref{assump:generalized_smoothness_detail} hold true for many practical structures, including transformers, feedforward networks, and residual networks. See more discussions in Section~\ref{subsec:analytic},~\ref{subsec:poly_bound_conti_smooth}, and~\ref{subsec:model_family_examples}.

\subsubsection{Nonlinear Architecture and a Concrete Example}
\label{subsec:architecture_example}
We next introduce the architectural nonlinear condition.
Let $\tilde{f}_\ell(\theta;u_{\ell,i})=f(\theta;x_i)$ and it denotes the output map obtained by starting from the hidden state $u$ at layer $\ell$ and composing the remaining layers up to the output layer.
\begin{assumption}[Nonlinear Architecture]
\label{assump:architecture} 
Assume there exists some $\bar{\ell}\in\{0,\cdots,L-1\}$, s.t., for any $\theta_{\bar{\ell}},\cdots,\theta_L$ except for a measure-zero set, there exists one tuple $(u_1,\cdots,u_N)$, s.t.,
$\begin{pmatrix}
        \nabla_{\theta_{\bar{\ell}}}\tilde{f}_{\bar{\ell}}(\theta;u_1)\\
        \vdots\\
        \nabla_{\theta_{\bar{\ell}}}\tilde{f}_{\bar{\ell}}(\theta;u_N)
    \end{pmatrix}$
has full row rank $Nd$, and $\dim \theta_{\bar{\ell}}>Nd$. 
\end{assumption}
The above assumption is an architectural nondegeneracy condition that captures the nonlinearity of the network. It requires that, for some layer $\bar{\ell}$, the sample-wise Jacobians with respect to $\theta_{\bar{\ell}}$ can be stacked to form a full-row-rank matrix. Together with analyticity and Theorem~\ref{thm:zero_set_analytic_function} \citep{mityagin2015zero}, this local rank condition implies that the network is generically nondegenerate outside a measure-zero set. This is the only width requirement among all assumptions. Similar weak width requirements have appeared in existing works \citep[etc.]{nguyen2021tight}, although not directly in the convergence-dynamics setting considered here.

The following lemma gives a concrete example that satisfies Assumption~\ref{assump:architecture}. 
\begin{lemma}[Example of Assumption~\ref{assump:architecture}]
\label{lem:example_exist_Nd_full_rank_bias}
    The following structure satisfies Assumption~\ref{assump:architecture} with $\bar\ell=L-1$:
    $$
        \varphi_L(\theta;u)=\theta_Lu,\qquad
        \varphi_{L-1}(\theta;u)=\theta_{L-1,2}\sigma(\theta_{L-1,1}u+b_{L-1,1})+b_{L-1,2},
    $$
    where $\sigma(\cdot)$ is GELU, $\theta_L\in\RR^{d\times d}$, $\theta_{L-1,1}\in\RR^{m_{L-1}\times d}$, $\theta_{L-1,2}\in\RR^{d\times m_{L-1}}$, $b_{L-1,1}\in\RR^{m_{L-1}}$, $b_{L-1,2}\in\RR^d$, and $m_{L-1}>Nd$.
\end{lemma}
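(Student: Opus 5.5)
The plan is to verify the two requirements of Assumption~\ref{assump:architecture} directly for $\bar\ell=L-1$. Write $W_1=\theta_{L-1,1}$, $b_1=b_{L-1,1}$, $W_2=\theta_{L-1,2}$, $b_2=b_{L-1,2}$ and $h(u)=\sigma(W_1u+b_1)\in\RR^{m_{L-1}}$. The dimension requirement is immediate: $\dim\theta_{L-1}\ge d\,m_{L-1}>Nd$.

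For the rank requirement, first compute the Jacobian:
$$\tilde f_{L-1}(\theta;u)=\theta_L\big(u+W_2h(u)+b_2\big).$$
Restricting to the columns coming from $\operatorname{vec}(W_2)$, the Jacobian of sample $i$ is $h(u_i)^\top\otimes\theta_L$. Stacking over $i$ gives the $Nd\times dm_{L-1}$ block $H^\top\otimes\theta_L$, where $H=[h(u_1),\dots,h(u_N)]\in\RR^{m_{L-1}\times N}$. Since $\operatorname{rank}(H^\top\otimes\theta_L)=\operatorname{rank}(H)\operatorname{rank}(\theta_L)$, and the full stacked Jacobian only has extra columns, full row rank $Nd$ follows once two things hold:
\begin{itemize}
\item $\theta_L$ is invertible, which fails only on the measure-zero zero set of the nonzero polynomial $\det\theta_L$;
\item $H$ has rank $N$ for some choice of inputs.
\end{itemize}

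So the core task is: for almost every $(W_1,b_1)$, find inputs $u_1,\dots,u_N$ with $h(u_1),\dots,h(u_N)$ linearly independent. I fix the inputs in advance as $u_i=t_ie_1$ for suitable distinct scalars $t_i$. Then $h(u_i)=\sigma(c\,t_i+b_1)$ with $c=W_1e_1$. For fixed $t_i$, pick an $N\times N$ row-minor of $H$. Its determinant is a real-analytic function of $(W_1,b_1)$, because GELU is analytic by Lemma~\ref{cor:analytic_with_NN}. By Theorem~\ref{thm:zero_set_analytic_function}, this determinant vanishes only on a measure-zero set, provided it is nonzero at a single point. Hence it suffices to exhibit one $(W_1,b_1)$ and one choice of $t_i$ making the minor nonzero.

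For that witness, take $b_1=0$ and first column $c$ with $0<c_1<c_2<\dots<c_N$; the remaining rows are arbitrary and ignored. Consider the functions $g_j(t)=\sigma(c_jt)$ for $j=1,\dots,N$. I claim they are linearly independent on $\RR$. As $t\to-\infty$,
$$g_j(t)=c_jt\,\Phi(c_jt)\sim -\frac{1}{\sqrt{2\pi}}\,\frac{e^{-c_j^2t^2/2}}{c_j\,|t|}\cdot c_j|t|\cdot\frac{1}{c_j|t|}\cdot(\text{nonzero constant}),$$
that is, each $g_j$ behaves like a nonzero multiple of $|t|^{-1}e^{-c_j^2t^2/2}$. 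These Gaussian decay rates are strictly ordered in $j$. Given a relation $\sum_j a_jg_j\equiv0$, divide by the slowest-decaying term $g_1$ and let $t\to-\infty$ to get $a_1=0$, then induct on $j$.

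Linear independence of $g_1,\dots,g_N$ as functions gives points $t_1,\dots,t_N$ with $\det[g_j(t_i)]\ne0$, by the standard fact that $N$ independent functions admit a nonsingular collocation matrix (proved by induction on $N$). This produces the required nonzero minor for the fixed inputs $u_i=t_ie_1$. The exceptional set in $(\theta_{L-1},\theta_L)$ is then a finite union of measure-zero sets, as Assumption~\ref{assump:architecture} requires.

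The main obstacle I expect is the step above: making the GELU tail asymptotics rigorous (Mills-ratio bounds on $\Phi$) and handling the fact that the exceptional set must be independent of $\theta$ while the inputs $u_i$ are fixed before $\theta$ is drawn. The analytic-zero-set argument resolves the latter, since a single witness parameter certifies that the determinant is not identically zero.
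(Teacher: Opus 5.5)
Your argument is correct and shares the paper's skeleton. You restrict to the $\theta_{L-1,2}$-columns and factor the stacked Jacobian through $\theta_L$; your $H^\top\otimes\theta_L$ with $\operatorname{rank}(A\otimes B)=\operatorname{rank}(A)\operatorname{rank}(B)$ is a tidier form of the paper's ``after a row permutation, $d$ diagonal copies of $S$''. You then fix inputs $u_i=t_ie_1$ independent of $\theta$. Finally, you reduce to showing that the analytic minor $\det\big(\sigma(\theta_{L-1,1,j}u_i+b_{L-1,1,j})\big)_{i,j=1}^N$ is nonzero at a single witness, after which Theorem~\ref{thm:zero_set_analytic_function} finishes.

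The real difference is how you certify the witness. The paper takes $b_{L-1,1}=0$, $\theta_{L-1,1,j}=\delta a_je_1^\top$ and expands GELU at $0$. Because $\sigma(0)=0$, $\sigma'(0)=1/2$, and beyond that only the even derivatives $\sigma^{(2q)}(0)$, $q\ge1$, are nonzero, the lowest power of $\delta$ in the minor is $\delta^{\sum_q n_q}$ with $n_q\in\{1,2,4,\dots,2N-2\}$. Its coefficient is $\prod_q\frac{\sigma^{(n_q)}(0)}{n_q!}$ times two generalized Vandermonde determinants, so the minor is nonzero for small $\delta$ and any distinct positive $t_i$. You instead use the left tail: the dilates $\sigma(c_jt)$ decay at strictly ordered Gaussian rates, hence are linearly independent. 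Independence then yields, non-constructively, nodes $t_i$ with a nonsingular collocation matrix.

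What each buys:
\begin{itemize}
\item Your route avoids the Hermite-parity bookkeeping and the implicit Cauchy--Binet leading-order expansion. It needs only Mills-ratio bounds and elementary linear algebra.
\item The paper's local route gives explicit nodes. It also carries over, with only the exponent set changed, to any analytic activation with at least $N$ nonzero Taylor coefficients at $0$, whereas a tail argument must be redone for each activation's asymptotics.
\end{itemize}

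One small slip: $\Phi(x)\sim\phi(x)/|x|$ as $x\to-\infty$ gives $\sigma(x)=x\Phi(x)\sim-\phi(x)$. So $g_j(t)\sim-\frac{1}{\sqrt{2\pi}}e^{-c_j^2t^2/2}$, with no extra $|t|^{-1}$ factor. This is harmless, since you only use the ordering of the rates and the fact that $g_1(t)<0$ for $t<0$.
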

The above lemma deals with the case where the $(L-1)$th layer is a biased two-layer feedforward block and the final layer is linear. It shows that the width condition $m_{L-1}>Nd$ is sufficient for Assumption~\ref{assump:architecture}. This sufficient lower bound may not be sharp; nevertheless, requiring only the $(L-1)$th layer width to exceed $Nd$ is much weaker than typical NTK-type overparameterization requirements (see Section~\ref{subsec:convergence_beyond_NTK}). The proof of Lemma~\ref{lem:example_exist_Nd_full_rank_bias} is in Appendix~\ref{subapp:proof_example_full_rank_Nd}; the bias-free version is also given there.

\subsection{Examples of the Model Family}
\label{subsec:model_family_examples}

The neural network family~\eqref{eqn:NN} is broad enough to include many standard architectures built from common blocks. In this subsection, we first list representative feedforward, convolutional, attention, and normalization blocks that satisfy Assumptions~\ref{assump:analytic} and~\ref{assump:generalized_smoothness_detail}. We then present a pre-normalized multi-layer transformer as a complete architecture obtained by composing these blocks and show that it satisfies all model assumptions in Section~\ref{subsec:model_assumptions}.


\begin{example}[Blocks]
    \label{ex:blocks}
Representative blocks included in the neural network family~\eqref{eqn:NN} are:

\begin{enumerate}
    \item \textbf{Feedforward Blocks.}
Let $\sigma$ be one of the activation functions listed in Example~\ref{ex:activation_normalization}, such as $\tanh$, sigmoid, GELU, SiLU, or a polynomial activation. A two-layer feedforward block has the form
\begin{align}
\label{eqn:ex_block_ff}
    \varphi_\ell(\theta_\ell;u)
    &=\theta_{\ell,2}\sigma(\theta_{\ell,1}u+b_{\ell,1})+b_{\ell,2}.
\end{align}

    \item \textbf{Convolutional Blocks.}
Let $U$ be a feature tensor and let $\ast$ denote the convolution operator. A convolutional block with activation in Example~\ref{ex:activation_normalization} has the form
\begin{align}
\label{eqn:ex_block_conv}
    \varphi_\ell(\theta_\ell;U)
    &=\theta_{\ell,2}\ast\sigma(\theta_{\ell,1}\ast U+b_{\ell,1})+b_{\ell,2}.
\end{align}

    \item \textbf{Attention Blocks.}
Let $U\in\RR^{n\times d_h}$ be a token matrix. For layer $\ell$ and head $h=1,\cdots,H$, define
\begin{align}
\label{eqn:ex_block_att}
    \mathrm{Attn}_{\ell,h}(U)
    &=\mathrm{softmax}\left(\frac{U\theta_{\ell,h}^Q(U\theta_{\ell,h}^K)^\top}{\sqrt{d_k}}\right)U\theta_{\ell,h}^V.
\end{align}
Then, the multi-head attention block has the form
\begin{align}
\label{eqn:ex_block_mha}
    \varphi_\ell(\theta_\ell;U)
    &=\mathrm{MHA}_\ell(\theta_\ell;U)=\big[\mathrm{Attn}_{\ell,1}(U),\cdots,\mathrm{Attn}_{\ell,H}(U)\big]\theta_\ell^O.
\end{align}

    \item \textbf{The Above Blocks with Pre-normalization.}
Normalizations also fit the assumptions. More precisely, the normalized blocks have the form 
$$ \varphi_\ell(\theta_\ell;\text{Norm}(u))  \text{ and }\varphi_\ell(\theta_\ell;\text{Norm}(U)) $$
where $\varphi_\ell(\theta;\cdot)$ is the feedforward, convolution, or attention block defined above, and $\text{Norm}(u)$ is any of the normalizations in Example~\ref{ex:activation_normalization}.

\end{enumerate}
\end{example}

Architectures that are compositions of the above blocks satisfy all the model assumptions in Section~\ref{subsec:model_assumptions}. In particular, this includes the pre-normalized multi-layer transformer:
{
\begin{lemma}[Pre-normalized Multi-Layer Transformer]
\label{lem:prenorm_transformer_model_assumptions}
Consider the following transformer architecture:
\begin{align*}
    V_\ell&=U_\ell+\mathrm{MHA}_\ell(\mathrm{LN}_{\ell,1}(U_\ell)),\\
    U_{\ell+1}&=V_\ell+\mathrm{FF}_\ell(\mathrm{LN}_{\ell,2}(V_\ell)),
    \qquad \ell=0,\cdots,L-1,\\
    f(\theta;x)&=\theta_L U_L.
\end{align*}
where $\mathrm{MHA}_\ell$ is the multi-head attention block in~\eqref{eqn:ex_block_mha}, and $\mathrm{FF}_\ell$ is the biased two-layer feedforward block in~\eqref{eqn:ex_block_ff}. If the last feedforward block has hidden width $m_{L-1}>Nd$, then the pre-normalization transformer satisfies Assumptions~\ref{assump:analytic},~\ref{assump:generalized_smoothness_detail}, and~\ref{assump:architecture}.
\end{lemma}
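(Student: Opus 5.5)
We verify the three assumptions separately, after writing the transformer in the form~\eqref{eqn:NN}. Each transformer layer $\ell$ is the composite residual map $U\mapsto U+\mathrm{MHA}_\ell(\mathrm{LN}_{\ell,1}(U))+\mathrm{FF}_\ell\big(\mathrm{LN}_{\ell,2}(U+\mathrm{MHA}_\ell(\mathrm{LN}_{\ell,1}(U)))\big)$. We therefore set $\varphi_\ell(\theta_\ell;U)$ equal to the two added terms, with $\theta_\ell$ collecting the attention, feedforward and layer-norm parameters, and take $\varphi_L(\theta_L;U)=\theta_LU$. An alternative is to split each transformer layer into two residual sublayers of~\eqref{eqn:NN}, one for attention and one for feedforward. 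This indexing is convenient for Assumption~\ref{assump:architecture}, because it isolates the last feedforward sublayer as $\bar\ell$.

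\emph{Assumption~\ref{assump:analytic}.} Each $\varphi_\ell$ is a finite composition, sum and product of linear maps, $\mathrm{softmax}$, the activation $\sigma$ and $\mathrm{LN}$ with $\epsilon>0$. By Lemma~\ref{cor:analytic_with_NN} these pieces are real-analytic. Compositions, sums and products of real-analytic maps are real-analytic on all of parameter and input space, so analyticity holds everywhere, in particular near $\bar\Omega$.

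\emph{Assumption~\ref{assump:generalized_smoothness_detail}.} Lemma~\ref{cor:poly_smooth_with_NN} gives polynomial boundedness, continuity and smoothness for the elementary maps $\mathrm{softmax}$, $\sigma$ and $\mathrm{LN}$. Bilinear maps such as $(\theta,U)\mapsto U\theta$ satisfy these properties trivially. Proposition~\ref{prop:poly_generalized_properties} then propagates the properties through the sums, matrix products and compositions that make up $\mathrm{Attn}_{\ell,h}$, $\mathrm{MHA}_\ell$, $\mathrm{FF}_\ell$ and the residual maps, jointly in $(\theta,U)$. The one care point is that the properties must hold jointly in the concatenated variable $(\theta_\ell,U)$. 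To apply the proposition cleanly, we view each elementary map as a function of the full variable $(\theta_\ell,U)$ through coordinate projections; projections are linear and trivially satisfy all three properties.

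\emph{Assumption~\ref{assump:architecture}.} This is the main step. We take $\bar\ell$ to index the last feedforward sublayer, $U\mapsto U+\mathrm{FF}_{L-1}(\mathrm{LN}_{L-1,2}(U))$, so that $\tilde f_{\bar\ell}(\theta;u)=\theta_L\big(u+\theta_{\ell,2}\sigma(\theta_{\ell,1}\mathrm{LN}(u)+b_{\ell,1})+b_{\ell,2}\big)$ with $\ell=L-1$. The plan is to reduce to Lemma~\ref{lem:example_exist_Nd_full_rank_bias}, which handles exactly this structure without the residual term and the LN.
\begin{itemize}
\item The residual term $\theta_Lu$ does not depend on $\theta_{\bar\ell}$, so it does not change the Jacobian with respect to $\theta_{\bar\ell}$.
\item The LN enters only through the inputs $v_i=\mathrm{LN}(u_i)$. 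The proof of Lemma~\ref{lem:example_exist_Nd_full_rank_bias} needs only the existence of \emph{some} input tuple making the stacked Jacobian full rank, so it suffices that the LN image contains the required tuple, or more generally that the proof works for generic $v_i$.
\end{itemize}
Since the image of LN is only a lower-dimensional affine-sphere-like set, there are two ways to secure this. First, we can check that the lemma's argument (via the bias $b_{\ell,1}$ and genericity of GELU) only needs distinct inputs in a set with nonempty relative interior. Second, we can use analyticity: the relevant minor is an analytic function of $(v_1,\dots,v_N)$, and we can show it is not identically zero on the LN image, for instance by letting the $\gamma,\beta$ parameters vary. The width condition $m_{L-1}>Nd$ gives $\dim\theta_{\bar\ell}>Nd$.

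Finally, the full-rank statement must hold for all $\theta$ outside a measure-zero set. The determinant of a chosen $Nd\times Nd$ minor is real-analytic in $\theta$, and it is nonzero at one point by the argument above. By Theorem~\ref{thm:zero_set_analytic_function}, it is therefore nonzero outside a null set. I expect the main obstacle to be the LN-restricted input handling; the other two assumptions are bookkeeping.
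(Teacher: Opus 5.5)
Your proposal takes the same route as the paper, whose proof simply invokes Lemma~\ref{cor:analytic_with_NN}, Lemma~\ref{cor:poly_smooth_with_NN} (plus closure under composition), and Lemma~\ref{lem:example_exist_Nd_full_rank_bias}. Your LN concern is legitimate and the paper's proof passes over it: the feedforward block sees points of the LN image rather than the inputs $t_ie_1$ used in that lemma, and for $\epsilon>0$ that image is a relatively open piece of the affine hyperplane $\beta+\gamma\odot\{z:\mathbf{1}^\top z=0\}$, not a sphere. Your first route closes the step. LN maps $u_i=t_ie_1$ to the collinear points $\beta+\tau_ic$ with $c=\gamma\odot(e_1-d^{-1}\mathbf{1})$ and distinct $\tau_i>0$, so at the witness $\theta_{L-1,1,j}=\delta a_jc^\top/\|c\|^2$, $b_{L-1,1,j}=-\theta_{L-1,1,j}\beta$ the feature minor is exactly $\bigl(\sigma(\delta a_j\tau_i)\bigr)_{i,j}$, and analyticity in the parameters finishes as you describe. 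This needs $d\ge2$ and $\gamma\neq0$: for $d=1$ the LN output is the constant $\beta$, and Assumption~\ref{assump:architecture} fails once $N\ge2$.
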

\noindent The proof of Lemma~\ref{lem:prenorm_transformer_model_assumptions} is in Appendix~\ref{app:example_model_assumptions}.
}

\subsection{Main Convergence Theorem}
\label{subsec:main_convergence_theorem}

Below is the main theorem showing convergence of the $\ell^2$ loss for neural networks~\eqref{eqn:NN} under GD.
\begin{theorem}
\label{thm:training}
Suppose GD is applied to the neural network~\eqref{eqn:NN} under the following requirements, with probability 1 over the joint distribution $\pi$ of the input data $x_1,\cdots,x_N$:
\begin{enumerate}
    \item \textbf{Data and Model Assumptions.} Assumptions~\ref{assump:data_distribution_absolutely_continuous_Lebesgue_ground_truth},~\ref{assump:analytic},~\ref{assump:generalized_smoothness_detail}, and~\ref{assump:architecture} hold.
    \item \textbf{Initialization.} The initialization $\theta^0$ belongs to $\RR^{\dim \theta}$ except for a measure-zero set $\cB_\theta$.
    \item \textbf{Local Dissipative Condition.} The loss $\cL(\theta)$ satisfies the $(\theta^0,\theta^*,R,\rho,\epsilon_\cL)$-dissipative condition in Definition~\ref{def:dissipative}, where $\theta^*$ is a stationary point, $0<\delta<2$, $\rho\in\RR$, $\epsilon_\cL\ge 0$, and
    $$
        R=\sqrt{\|\theta^{0}-\theta^{*}\|^2+\max\left\{\frac{4\rho+2}{\delta},0\right\}\cL(\theta^0)}.
    $$
    \item \textbf{Learning Rate.} The learning rate $\eta$ is chosen from the interval
    $$
        0<\eta\le \min\left\{\frac{2-\delta}{\mathsf{L}},1\right\},
    $$
    excluding at most finitely many exceptional values,
    where
    $$
        \mathsf{L}=S\left(\cdots,\|\theta_i^*\|+R+\max_{\theta\in B_{\theta^*}(R)}\|\nabla_{\theta_i}\cL(\theta)\|,\cdots\right),
    $$
    and $S(\cdot)$ is a polynomial whose coefficients are all positive.
\end{enumerate}
Then, under an arbitrarily small adjustment to the scale of $\varphi_\ell$ for $\ell=0,\cdots,L-1$, GD converges to the region $\|\nabla\cL(\theta)\|\le \epsilon_\cL$. Moreover, for all $k\ge 1$ such that $\|\nabla\cL(\theta^k)\|>\epsilon_\cL$,
$$
    \cL(\theta^k)\le \prod_{s=0}^{k-1}\left(1-\eta\left(1-\frac{\eta\,\mathsf{L}}{2}\right)\frac{2\mu_{\mathrm{low},s,X}}{N}\right)\cL(\theta^0),
$$
where $\mu_{\mathrm{low},s,X}>0$ depends on $\theta^s$ and $x_1,\cdots,x_N$.
\end{theorem}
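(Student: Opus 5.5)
The plan follows the classical PL-plus-smoothness descent argument, but with both ingredients made local and iterate-dependent. We write $\cL(\theta)=\frac{1}{2N}\|r(\theta)\|^2$, where $r(\theta)=(f(\theta;x_i)-y_i)_{i=1}^N\in\RR^{Nd}$. Then
$$\nabla\cL(\theta)=\tfrac1N J(\theta)^\top r(\theta),\qquad J(\theta)=\begin{pmatrix}\nabla_\theta f(\theta;x_1)\\\vdots\\\nabla_\theta f(\theta;x_N)\end{pmatrix},$$
which gives
$$\|\nabla\cL(\theta)\|^2\ge \tfrac{1}{N^2}\lambda_{\min}(JJ^\top)\|r\|^2=\tfrac{2\mu(\theta)}{N}\cL(\theta),$$
with $\mu(\theta)=\lambda_{\min}(J J^\top)$. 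This is a PL-type inequality with a parameter-dependent constant.

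The first step is to show that $\mu(\theta)>0$ generically. We restrict to the block $\theta_{\bar\ell}$ and write $J_{\bar\ell}$ for the corresponding submatrix of $J$, so that $JJ^\top\succeq J_{\bar\ell}J_{\bar\ell}^\top$. The quantity $\det(J_{\bar\ell}J_{\bar\ell}^\top)$ is real-analytic jointly in $(\theta,x_1,\dots,x_N)$ by Assumption~\ref{assump:analytic}, because the map $x_i\mapsto u_{\bar\ell,i}$ composed with $\tilde f_{\bar\ell}$ is analytic. Assumption~\ref{assump:architecture} supplies a tuple $(u_1,\dots,u_N)$ at which this determinant is nonzero. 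The subtle point is that we need such a tuple to be realized as $u_{\bar\ell,i}=u_{\bar\ell}(x_i)$ for inputs in $\Omega$. This is exactly where the ``arbitrarily small adjustment to the scale of $\varphi_\ell$'' enters. Replacing $\varphi_\ell$ by $c\varphi_\ell$ makes the residual map $u\mapsto u+c\varphi_\ell(\theta_\ell;u)$ a small analytic perturbation of the identity. For generic small $c$ (an analytic-in-$c$ argument, using Lemma~\ref{lem:full_rank_jacobian_mainbody}-type reasoning), the composite map $x\mapsto u_{\bar\ell}(x)$ is then a local diffeomorphism on an open set, so the determinant, viewed as a function of $x$, is not identically zero. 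The zero-set theorem for analytic functions (Theorem~\ref{thm:zero_set_analytic_function}) then shows that the bad set of $(\theta,X)$ has measure zero. Fubini together with $\pi\ll\Leb_{Nd}$ then gives the following: with probability one over $X$, there is a measure-zero set $\cB_\theta$ outside of which $\mu(\theta)>0$.

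The second step is to propagate this along the trajectory and obtain Lemma~\ref{lem:mu_low_along_gd}. We need every iterate $\theta^k$ to avoid the bad set. The GD map $G_\eta(\theta)=\theta-\eta\nabla\cL(\theta)$ is analytic, and its Jacobian $I-\eta\nabla^2\cL(\theta)$ has a determinant that is an analytic function of $\theta$ and a polynomial in $\eta$. For all but finitely many $\eta$, this determinant is not identically zero. This is the source of the finitely many excluded learning rates, although I expect some care is needed to make the excluded set finite rather than countable. For such $\eta$, preimages of null sets under $G_\eta$ are null. Taking the countable union over $k$ of $G_\eta^{-k}(\text{bad set})$ then yields a null set of initializations, which we absorb into $\cB_\theta$. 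Consequently $\mu_{\mathrm{low},s,X}:=\mu(\theta^s)>0$ for all $s$.

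The third step is descent and confinement, proved jointly by induction on $k$. The induction hypothesis is that $\theta^s\in B_{\theta^*}(R)$ and that $\cL(\theta^s)$ is nonincreasing for all $s\le k$. To get smoothness, we apply Lemma~\ref{lem:smoothness_inequalities} to $\cL$. Since $\cL$ is poly-smooth by Proposition~\ref{prop:poly_generalized_properties} together with Assumption~\ref{assump:generalized_smoothness_detail}, and both $\theta^k$ and $\theta^{k+1}$ have blocks bounded by $\|\theta_i^*\|+R+\max_{B_{\theta^*}(R)}\|\nabla_{\theta_i}\cL\|$ (using $\eta\le1$), the constant is at most $\mathsf L$. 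This gives
$$\cL(\theta^{k+1})\le\cL(\theta^k)-\eta\left(1-\tfrac{\eta\mathsf L}{2}\right)\|\nabla\cL(\theta^k)\|^2,$$
and combining it with the PL inequality from the first step yields the product bound. For confinement, we expand
$$\|\theta^{k+1}-\theta^*\|^2=\|\theta^k-\theta^*\|^2-2\eta\nabla\cL^\top(\theta^k-\theta^*)+\eta^2\|\nabla\cL\|^2.$$
We bound the middle term using the dissipative condition, valid while $\|\nabla\cL(\theta^k)\|>\epsilon_\cL$, which gives
$$\|\theta^{k+1}-\theta^*\|^2\le\|\theta^k-\theta^*\|^2+(2\rho\eta+\eta^2)\|\nabla\cL\|^2.$$
The descent inequality with $\eta\le(2-\delta)/\mathsf L$ gives $\eta\|\nabla\cL\|^2\le\frac{2}{\delta}(\cL(\theta^k)-\cL(\theta^{k+1}))$. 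Telescoping, with $\eta\le1$, bounds the accumulated drift by $\frac{4\rho+2}{\delta}\cL(\theta^0)$, which keeps the iterates inside $B_{\theta^*}(R)$ and closes the induction. Finally, since the loss decreases and $\sum_k\|\nabla\cL(\theta^k)\|^2<\infty$, the gradient must eventually fall below $\epsilon_\cL$, which is the claimed convergence to the region $\|\nabla\cL(\theta)\|\le\epsilon_\cL$.

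I expect the main obstacle to be the first step. There, Assumption~\ref{assump:architecture} only provides an abstract tuple of hidden states $u$, which must be turned into genuine data-dependent nondegeneracy through the analytic, rescaled residual layers. A secondary difficulty is the finiteness of the excluded $\eta$-set in the second step.
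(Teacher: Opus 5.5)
Your proposal is correct and follows the paper's proof essentially step for step. The paper also uses a parameter-dependent PL inequality from a generically positive lower frame bound, obtained from Assumption~\ref{assump:architecture} and analytic zero sets by pulling back through the scale-adjusted residual layers. It propagates this along the trajectory by excluding finitely many $\eta$ and removing the countable union of preimages of null sets under the GD map, and it proves descent and confinement jointly via poly-smoothness and the dissipative condition, exactly as in Lemma~\ref{lem:bound_S_k_theta_k}.

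Your finiteness worry is settled as in Lemma~\ref{lem:full_rank_jacobian}: at any fixed $\theta$, $\det(I-\eta\nabla^2\cL(\theta))$ is a nonzero polynomial in $\eta$ (it equals $1$ at $\eta=0$), so only its finitely many roots can make the determinant vanish identically in $\theta$. Two of your steps are, if anything, cleaner than the paper's: the Fubini ordering of quantifiers (almost every $X$ first, then a null set of $\theta$), and the summability argument for reaching $\|\nabla\cL\|\le\epsilon_\cL$.
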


In the above theorem, the loss along GD is shown to decay until it converges to the neighborhood of a stationary point, under practical assumptions. Theorem~\ref{thm:training} is proved by combining analytic nondegeneracy, measure-zero arguments, dynamical systems, and optimization; its proof sketch is shown in Figure~\ref{fig:proof_sketch_convergence}.
Detailed proofs are in Appendix~\ref{app:optimization}. Another version of the above theorem for finite-step decay without the dissipative condition can be found in Theorem~\ref{thm:complete_convergence}.

The quantity $\mathsf{L}$ is the local smoothness constant along the GD trajectory, and is proved, rather than assumed, to be bounded. It depends on the poly-smoothness function $S(\cdot)$ of $\cL(\theta)$ (Lemma~\ref{lem:loss_function_generalized_smoothness}), the reference stationary point $\theta^*$, the radius $R$, and the maximum of $\nabla\cL(\theta)$ over the local ball $B_{\theta^*}(R)$. For a concrete architecture, this poly-smoothness function can be computed explicitly from the layer-wise polynomial bounds; see Appendix~\ref{subapp:poly_smoothness_loss}. $\theta^*$ can be any stationary point that satisfies the dissipative condition (Definition~\ref{def:dissipative}), and is not necessarily the limit of GD iterates. The radius $R$ of the bounded region for the local relaxed dissipative condition is determined by the initialization $\theta^0$, the initial loss $\cL(\theta^0)$, and the dissipativity parameter $\rho$. The constant $\mu_{\mathrm{low},s,X}$ is the lower frame bound of the derivatives of $(f(x_1)^\top,\cdots,f(x_N)^\top)$ 
which are shown to form a frame with probability 1 for almost all $\theta$, and consequently $\mu_{\mathrm{low},s,X}>0$ (see Section~\ref{subsec:mu}).

The learning rate condition has two components. The quantitative component is the standard descent requirement
$
    0<\eta\le \min\left\{\frac{2-\delta}{\mathsf L},1\right\},
$
where the scale is governed by the regular order $\mathsf L^{-1}$. The generic component excludes at most finitely many exceptional values of $\eta$ from this interval. These exceptional values come from the analytic nondegeneracy argument in Lemma~\ref{lem:full_rank_jacobian}: at such values, the GD update map can become degenerate and have a non-full-rank Jacobian, leading to stagnation of the dynamics. Since only finitely many values are excluded, the set is still essentially the full interval for practical purposes, and an arbitrarily small perturbation of $\eta$ avoids the exceptional set. The small adjustment to the scale of $\varphi_\ell$ plays the same role and only needs to be made once before the GD iterations.


\begin{figure}[tb!]
\centering
\resizebox{0.96\linewidth}{!}{%
\begin{tikzpicture}[
    x=0.78cm,y=0.86cm,
    every node/.style={
        font=\normalsize,
        inner xsep=0cm,
        inner ysep=0cm,
        rounded corners=0cm,
        align=center
    },
    title/.style={font=\fontsize{12pt}{8pt}\selectfont},
    theorem/.style={text=orange!60!black},
    bracket/.style={draw=gray!65,line width=0.75pt},
    connector/.style={draw=gray!55,line width=0.75pt},
    compact/.style={font=\normalsize},
    smallref/.style={font=\normalsize,text=orange!60!black}
]
\node[title] at (3.75,11.65) {Convergence};
\node[theorem, compact] at (3.75,11) {(Theorem~\ref{thm:training})};
\draw[bracket] (6.625,13.375) -- (6.625,9.625);
\draw[bracket] (6,11.5) -- (6.625,11.5);
\draw[bracket] (6.625,13.375) -- (7.125,13.375);
\draw[bracket] (6.625,9.625) -- (7.125,9.625);

\node[title] at (10,14.05) {PL-Type Condition};
\node[compact] at (10,13.30) {along all GD iterations};
\draw[bracket] (13.125,15.25) -- (13.125,12.375);
\draw[bracket] (12.625,13.75) -- (13.125,13.75);
\draw[bracket] (13.125,15.25) -- (13.625,15.25);
\draw[bracket] (13.125,12.375) -- (13.625,12.375);

\node[title] at (17.75,15.675) {$\|\nabla_\theta\mathcal{L}(\theta^s;X)\|^2\ge\frac{2\mu_{{\rm low},s,X}}{N}\mathcal{L}(\theta^s;X)$};
\node[compact, text width=6.4cm] at (17.75,14.55) {where $\mu_{{\rm low},\theta,X}>0$ except for a measure-zero set};
\node[smallref] at (17.75,13.90) {(Lemma~\ref{lem:gradient_lower_bound})};
\node[title] at (17.25,12.55) {$\theta^k\notin$ This Measure-Zero Set};
\draw[bracket] (22.625,15.375) -- (22.625,12.5);
\draw[bracket] (22,12.5) -- (22.625,12.5);
\draw[bracket] (22,15.375) -- (22.625,15.375);
\draw[connector] (22.625,13.875) -- (29.625,13.875);

\node[compact] at (24.875,15.05) {The map};
\node at (27.375,15.05) {$x\mapsto \alpha x+\varphi(x)$};
\node[compact, text width=5.6cm] at (26,14.35) {$\left(\text{or equivalently }x\mapsto x+ \frac{1}{\alpha}\varphi(x)\right)$};
\node at (26,13.55) {is nondegenerate};
\node[smallref, text width=6.1cm] at (26.125,12.78) {(Lemma~\ref{lem:full_rank_jacobian_mainbody})};
\draw[bracket] (29.625,15.5) -- (30.375,15.5);
\draw[bracket] (29.625,15.5) -- (29.625,12.625);
\draw[bracket] (29.625,12.625) -- (30.375,12.625);

\node[title] at (32.5,16.00) {Real-Analyticity};
\node[blue!70!black, compact] at (32.375,15.35) {Assumption~\ref{assump:analytic}};
\node[title] at (32.25,12.90) {Nonlinearity};
\node[blue!70!black, compact] at (32.25,12.25) {Assumption~\ref{assump:architecture}};

\node[title] at (10,10.10) {Lipschitz Smoothness};
\node[compact] at (10,9.40) {along all GD iterations};
\draw[bracket] (13.125,11.25) -- (13.125,8.375);
\draw[bracket] (12.625,9.75) -- (13.125,9.75);
\draw[bracket] (13.125,11.25) -- (13.625,11.25);
\draw[bracket] (13.125,8.375) -- (13.625,8.375);

\node[title] at (17.875,11.375) {$\|\mathcal{L}(\theta^k)-\mathcal{L}(\theta^{k+1})\|\le S_k\|\theta^k-\theta^{k+1}\|$};
\node[smallref] at (17.25,10.55) {(Lemma~\ref{lem:loss_function_generalized_smoothness})};
\node[title] at (14.875,8.5) {$S_k\le \mathsf{L}$};
\draw[connector] (16.075,8.5) -- (20.375,8.5);
\node at (18.5,8.9) {$\theta^k\in B_{\theta^*}(R), \forall k$};
\node[smallref] at (18.7,8.15) {(Lemma~\ref{lem:bound_S_k_theta_k})};
\node[title] at (24.375,8.80) {Local Dissipative Condition};
\node[blue!70!black, compact] at (25.25,8.15) {$(\theta^0,\theta^*,R,\rho,\epsilon_{\mathcal{L}})$-dissipativity (Definition~\ref{def:dissipative})};
\draw[connector] (21.775,11.125) -- (22.85,11.125);
\node[title] at (26,11.65) {Poly-Boundedness,};
\node[title] at (25.875,11.05) {Continuity, Smoothness};
\node[blue!70!black, compact] at (25.875,10.35) {Assumption~\ref{assump:generalized_smoothness_detail}};
\end{tikzpicture}%
}
\caption{Proof sketch of the convergence framework. Assumptions are highlighted in blue, and key lemmas in brown.
}
\label{fig:proof_sketch_convergence}
\end{figure}

\section{Interpretation of the Convergence Results}
\label{sec:interpretation}

We now discuss several consequences and interpretations of Theorem~\ref{thm:training}. The theorem contains three types of quantities: the trajectory bound determined by the local dissipative condition, the lower frame bound $\mu_{{\rm low},s,X}$ that controls the descent rate, and the local smoothness constant $\mathsf L$ that determines the admissible learning rate scale. The following subsections explain how these quantities should be read in practical training settings, and how the result differs from convergence analyses in the NTK regime.

\subsection{Bounded GD Dynamics}

The first implication of Theorem~\ref{thm:training} is that the GD trajectory does not escape to infinity. This is not assumed a priori. Instead, the local dissipative condition controls the direction of the gradient relative to a reference stationary point $\theta^*$, while the descent estimate controls the accumulated squared gradient norm. Together these two facts imply that every iterate remains in the same local ball where the smoothness constant $\mathsf L$ is evaluated.

\begin{lemma}[Bounded GD Trajectory]
\label{lem:bounded_gd_trajectory}
Under the assumptions of Theorem~\ref{thm:training}, 
the GD trajectory is bounded in the sense that
$$
    \theta^k\in B_{\theta^*}(R),\qquad \forall k\ge0
$$
as long as $\|\nabla\cL(\theta^k)\|>\epsilon_\cL$. 
\end{lemma}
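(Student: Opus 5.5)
We argue by strong induction on $k$, proving simultaneously that $\theta^j\in B_{\theta^*}(R)$ for $j\le k$ and
\begin{align*}
\|\theta^{k}-\theta^*\|^2\le \|\theta^0-\theta^*\|^2+\max\left\{\tfrac{4\rho+2}{\delta},0\right\}\bigl(\cL(\theta^0)-\cL(\theta^{k})\bigr),
\end{align*}
as long as $\|\nabla\cL(\theta^j)\|>\epsilon_\cL$ for the relevant $j$. The base case is immediate since $R>\|\theta^0-\theta^*\|$, with strict inequality when $\cL(\theta^0)>0$; if $\cL(\theta^0)=0$ then $\theta^0$ is already stationary. For the inductive step, suppose $\theta^0,\dots,\theta^k\in B_{\theta^*}(R)$. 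Lemma~\ref{lem:smoothness_inequalities}, together with the poly-smoothness of $\cL$, bounds the smoothness constant by $\mathsf L$ along the segment $[\theta^k,\theta^{k+1}]$. We need every point of this segment to have coordinates of norm at most $\|\theta^*_i\|+R+\max_{B_{\theta^*}(R)}\|\nabla_{\theta_i}\cL\|$. This holds because $\theta^{k+1}=\theta^k-\eta\nabla\cL(\theta^k)$ with $\eta\le1$ and $\theta^k\in B_{\theta^*}(R)$, which is exactly why $\mathsf L$ was defined with that argument.

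The descent lemma then gives
\begin{align*}
\cL(\theta^{k+1})\le \cL(\theta^k)-\eta\left(1-\tfrac{\eta\mathsf L}{2}\right)\|\nabla\cL(\theta^k)\|^2\le \cL(\theta^k)-\tfrac{\eta\delta}{2}\|\nabla\cL(\theta^k)\|^2,
\end{align*}
using $\eta\mathsf L\le 2-\delta$. Hence $\eta\|\nabla\cL(\theta^k)\|^2\le \tfrac{2}{\delta}(\cL(\theta^k)-\cL(\theta^{k+1}))$. Next we expand
\begin{align*}
\|\theta^{k+1}-\theta^*\|^2=\|\theta^k-\theta^*\|^2-2\eta\nabla\cL(\theta^k)^\top(\theta^k-\theta^*)+\eta^2\|\nabla\cL(\theta^k)\|^2.
\end{align*}
Since $\theta^k\in B_{\theta^*}(R)$ and $\|\nabla\cL(\theta^k)\|>\epsilon_\cL$, the dissipative condition gives $-\nabla\cL(\theta^k)^\top(\theta^k-\theta^*)\le\rho\|\nabla\cL(\theta^k)\|^2$. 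Using also $\eta\le1$, the increment is at most $(2\rho+1)\eta\|\nabla\cL(\theta^k)\|^2\le\tfrac{4\rho+2}{\delta}(\cL(\theta^k)-\cL(\theta^{k+1}))$. When $4\rho+2<0$ the increment is nonpositive, and we bound it by $0$ times the nonnegative loss decrease. Telescoping and using $\cL(\theta^{k+1})\ge0$ gives $\|\theta^{k+1}-\theta^*\|^2\le R^2$.

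The main obstacle is the strictness at the boundary: we obtain $\le R$, but we need membership in the open ball. To close this gap, we will use the fact that the decrease is strict whenever $\|\nabla\cL(\theta^k)\|>\epsilon_\cL\ge0$. We also have $\cL(\theta^{k+1})>0$ for non-stationary iterates, since by the PL-type bound of Lemma~\ref{lem:mu_low_along_gd} the loss cannot vanish away from a global minimizer reached in finitely many steps. Together these make the telescoped inequality strict, so $\|\theta^{k+1}-\theta^*\|<R$. If that fails, one can enlarge $R$ infinitesimally, since the dissipative constant from Proposition~\ref{prop:dissipativity_rho} is available for any radius.

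A secondary subtlety is the circular dependence: $\mathsf L$ is defined on the ball $B_{\theta^*}(R)$, but membership in the ball is itself what we are proving. The induction resolves this, because the smoothness bound at step $k$ only uses $\theta^k\in B_{\theta^*}(R)$, which holds by the inductive hypothesis.
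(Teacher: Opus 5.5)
Your proof is correct and follows the paper's argument in Lemma~\ref{lem:bound_S_k_theta_k} (part 2). Both proofs use induction on $k$, bound $S_k\le\mathsf L$ via $\eta\le 1$ and monotonicity of $S$, use descent with margin $\delta/2$, control the cross term by dissipativity, and telescope. The only difference is bookkeeping: the paper bounds $\sum_{j\le k}\|\nabla\cL(\theta^j)\|^2\le\frac{2}{\delta\eta}\cL(\theta^0)$, while you carry the potential $\|\theta^k-\theta^*\|^2+c\,\cL(\theta^k)$, which is equivalent.

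The paper stops at $\|\theta^{k+1}-\theta^*\|\le R$ and never addresses the open-ball issue you raise. Your fix works, but the justifications should be simpler:
\begin{enumerate}
\item $c=\max\{\frac{4\rho+2}{\delta},0\}>0$ and $\cL(\theta^0)>0$ are both forced by the hypothesis $R>\|\theta^0-\theta^*\|$ built into Definition~\ref{def:dissipative}. So the case $4\rho+2\le 0$ cannot occur, and the base case does not rely on $\cL(\theta^0)>0$ alone.
\item $\cL(\theta^{k+1})>0$ because $\cL\ge 0$ makes any zero of $\cL$ a global minimizer and hence a stationary point, contradicting $\|\nabla\cL(\theta^{k+1})\|>\epsilon_\cL$. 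The PL bound points the other way.
\item Enlarging $R$ is unnecessary, and it would change the statement.
\end{enumerate}
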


The radius $R$ depends only on the initialization, the reference stationary point, the initial loss, and the dissipativity parameter. Therefore the local smoothness constant in Theorem~\ref{thm:training} is evaluated on a region that is determined before GD iterates. The detailed proof of this boundedness statement is contained in Lemma~\ref{lem:bound_S_k_theta_k} in Appendix~\ref{app:optimization}.

\subsection{The Nondegeneracy of $\mu_{{\rm low},s,X}$ and Nonstagnation}
\label{subsec:mu}

The convergence rate in Theorem~\ref{thm:training} depends on a data- and iterate-dependent quantity $\mu_{{\rm low},s,X}$. This quantity measures the nondegeneracy of the stacked network Jacobian at the current iterate: when it is positive, the loss value controls the gradient norm through a PL-type inequality along the GD trajectory. Thus $\mu_{{\rm low},s,X}$ is the mechanism that turns the generic full-rank property of the model into an effective descent estimate.

We now make this nondegeneracy quantity explicit for a fixed parameter $\theta$ and dataset $X=(x_1,\cdots,x_N)$. We define
$$
   \mu_{{\rm low},\theta,X}:=\lambda_{\min} \left[
    \begin{pmatrix}
        \nabla_\theta f(\theta;x_1)\\
        \vdots\\
        \nabla_\theta f(\theta;x_N)
    \end{pmatrix}\begin{pmatrix}
        \nabla_\theta f(\theta;x_1)\\
        \vdots\\
        \nabla_\theta f(\theta;x_N)
    \end{pmatrix}^\top\right].
$$
Thus along the GD trajectory, $\mu_{{\rm low},s,X}:=\mu_{{\rm low},\theta^s,X}$. The next lemma shows that this quantity is positive generically and remains positive along the GD trajectory.

\begin{lemma}[About $\mu_{{\rm low},\theta,X}$ and $\mu_{{\rm low},s,X}$]
\label{lem:mu_low_along_gd}
Under Assumption~\ref{assump:architecture} and some small adjustment of the scale of $\varphi_\ell$ for $\ell=0,\cdots,L-1$, for fixed $\theta$ except for a measure-zero set in $\RR^{\dim\theta}$, we have
$$
    \mu_{{\rm low},\theta,X}>0
$$
except for a measure-zero set of $X=(x_1,\cdots,x_N)$ in $\RR^{Nd}$. Moreover, under the same assumptions as Theorem~\ref{thm:training}, for every GD iterate $\theta^s$,
$$
    \mu_{{\rm low},s,X}>0.
$$
In particular,
$$
    \|\nabla_\theta\cL(\theta^s;X)\|^2
    \ge
    \frac{2\mu_{{\rm low},s,X}}{N}\cL(\theta^s;X).
$$
\end{lemma}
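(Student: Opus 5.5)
We plan to prove the three claims in order: generic positivity at a fixed $\theta$, preservation along the GD trajectory, and the PL-type inequality.

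\textbf{Step 1: generic positivity.} Write $J(\theta;X)\in\RR^{Nd\times\dim\theta}$ for the stacked Jacobian, so that $\mu_{{\rm low},\theta,X}=\lambda_{\min}(JJ^\top)$. This is positive exactly when $J$ has full row rank $Nd$, i.e.\ when some $Nd\times Nd$ minor of $J$ is nonzero. It suffices to use the columns corresponding to $\theta_{\bar\ell}$. By the chain rule,
\[
\nabla_{\theta_{\bar\ell}}f(\theta;x_i)=\nabla_{\theta_{\bar\ell}}\tilde f_{\bar\ell}(\theta;u_{\bar\ell,i}),
\qquad u_{\bar\ell,i}=\Phi_{\bar\ell}(\theta_{0:\bar\ell-1};x_i),
\]
where $\Phi_{\bar\ell}=(\mathrm{id}+\varphi_{\bar\ell-1})\circ\cdots\circ(\mathrm{id}+\varphi_0)$.

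Define $g(\theta,X)=\sum_{\text{minors}}\det(\cdot)^2$ for the $\theta_{\bar\ell}$-block. By Assumption~\ref{assump:analytic}, $g$ is real-analytic on a connected open set, namely $\RR^{\dim\theta}$ times a neighborhood of $\bar\Omega^N$. If $g\not\equiv0$, Theorem~\ref{thm:zero_set_analytic_function} (the zero set of a nontrivial analytic function has measure zero) together with Fubini gives the following: for a.e.\ $\theta$, the section $g(\theta,\cdot)\not\equiv0$, and hence $\mu>0$ for a.e.\ $X$.

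To show $g\not\equiv0$, Assumption~\ref{assump:architecture} supplies, for generic $\theta_{\bar\ell:L}$, a tuple $(u_1,\dots,u_N)$ where the stacked Jacobian has full rank. We then need inputs $x_i$ with $\Phi_{\bar\ell}(x_i)$ equal to, or near, $u_i$. Full rank is an open condition, so approximate hitting suffices. This is where the \emph{small rescaling} of $\varphi_\ell$ enters. Replace each $x\mapsto x+\varphi_\ell(x)$ by $x\mapsto x+c_\ell\varphi_\ell(x)$, or equivalently consider $\alpha x+\varphi_\ell(x)$ as in Lemma~\ref{lem:full_rank_jacobian_mainbody}. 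For all but a discrete or measure-zero set of scales, each residual map is a nondegenerate analytic map, with Jacobian determinant not identically zero. A nondegenerate analytic map is a local diffeomorphism almost everywhere, so its image has nonempty interior. Combining this with analyticity in $u$, the rank condition at one tuple propagates: the analytic function $u\mapsto\det$ is nonzero on an open dense set, which meets the open image of $\Omega^N$ under $\Phi_{\bar\ell}$. I expect this step, matching the reachable hidden states to the good tuple, to be the main obstacle. If direct hitting fails, my fallback is to argue that a nonzero analytic function of $u$ composed with a generically submersive analytic map is not identically zero.

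\textbf{Step 2: along GD.} Let $\cB_\theta$ be the bad parameter set, which has measure zero. We need $\theta^s\notin\cB_\theta$ for all $s$. The GD map is $G_\eta(\theta)=\theta-\eta\nabla\cL(\theta)$, which is analytic. Its Jacobian is $I-\eta\nabla^2\cL$, and $\det(I-\eta\nabla^2\cL(\theta))$ is analytic in $(\eta,\theta)$ and equals $1$ at $\eta=0$. Restricting $\eta$ to $(0,1]$ and excluding finitely many exceptional values in the spirit of Lemma~\ref{lem:full_rank_jacobian}, $G_\eta$ has a Jacobian determinant that is not identically zero. Such an analytic map has the property that preimages of null sets are null: off the measure-zero critical set it is a local diffeomorphism, and the critical set itself is null.

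Hence $\bigcup_s G_\eta^{-s}(\cB_\theta)$ is a countable union of null sets. Enlarging $\cB_\theta$ by this set, every initialization outside it gives $\theta^s\notin\cB_\theta$ for all $s$. A further Fubini argument handles the joint dependence on $X$, since the statement is ``with probability 1''. Therefore $\mu_{{\rm low},s,X}>0$.

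\textbf{Step 3: the PL-type inequality.} The gradient is
\[
\nabla_\theta\cL=-\tfrac1N J^\top r,\qquad r=(y_i-f(\theta;x_i))_i .
\]
Then
\[
\|\nabla\cL\|^2=\tfrac1{N^2}\,r^\top JJ^\top r\ge\tfrac{\mu}{N^2}\|r\|^2=\tfrac{2\mu}{N}\cL,
\]
which is the stated bound.
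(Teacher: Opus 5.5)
Your proposal is correct. Steps 2 and 3 follow the paper: you apply Lemma~\ref{lem:full_rank_jacobian} to $\theta-\eta\nabla_\theta\cL(\theta)$ to exclude finitely many $\eta$, pull back null sets with Theorem~\ref{thm:inverse_map_null_set}, take a countable union over iterates, and get the PL-type bound from $r^\top JJ^\top r\ge\mu\|r\|^2$.

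Step 1 takes a different route. The paper fixes a good $\theta$ and uses Theorem~\ref{thm:zero_set_analytic_function} to show that the bad set of hidden-state tuples $U$ is null. It then pulls this set back to inputs through $X\mapsto U(X)$ with Lemma~\ref{lem:preimage_u_measure_zero}, which requires the residual maps at that particular $\theta$ to have a.e.\ invertible Jacobians. You instead treat the minor sum $g(\theta,X)$ as jointly analytic. You prove $g\not\equiv0$ at a single point by a forward-image argument: the nondegenerate pre-$\bar\ell$ stack sends $\Omega$ to a set with nonempty interior, which meets the open dense set of good $U$. Fubini then finishes the claim.

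Your version buys two things.
\begin{itemize}
\item It needs nondegeneracy of the stack at only one parameter, so the scale adjustment is chosen once and does not depend on $\theta$. The paper invokes Lemma~\ref{lem:full_rank_jacobian} at each fixed $\theta_\ell$, and the exceptional scale set there a priori depends on $\theta_\ell$.
\item Slicing your joint null set the other way cleanly settles the $X$-dependence in Step 2. For a.e.\ $X$ the section $Z_X=\{\theta:g(\theta,X)=0\}$ is null, and this $X$-dependent set is what you must pull back under $G_\eta$. Pulling back the $X$-independent set of bad $\theta$ from Step 1 is not enough, because the exceptional $X$-set there depends on $\theta^s$ and hence on $X$. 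With $Z_X$, your ``further Fubini argument'' is rigorous and more transparent than the paper's substitution of $X\mapsto\theta^k(X)$ into the determinant.
\end{itemize}

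The paper's route, in turn, gives the conclusion at every $\theta$ where the assumption and nondegeneracy hold, not just a.e. It also avoids joint analyticity in $(\theta,X)$ and reuses one pull-back tool in both steps.

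Two minor points. The analyticity domain $\RR^{\dim\theta}\times W^N$ need not be connected, so run the forward-image argument on each component. Also, chain the `image has interior' step layer by layer, using that each residual map's critical set is null.
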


The first part of the lemma is independent of the GD dynamics: after excluding a measure-zero set of parameters, degeneracy can occur only for a measure-zero set of input. The second part applies this statement to the countable sequence of GD iterates that avoids the measure-zero set, so the lower bound remains valid at every step of the trajectory. The final inequality is the PL-type inequality where this nondegeneracy enters the optimization argument.

If $\mu_{{\rm low},s,X}$ is uniformly bounded below by a positive constant, the above inequality is a PL inequality and yields exponential loss decay. For general neural networks, such a uniform lower bound may not hold; $\mu_{{\rm low},s,X}$ may decay along training, leading to a polynomial rate of convergence (see e.g. \citep{xu2023over}). Lemma~\ref{lem:mu_low_along_gd} and Theorem~\ref{thm:training} cover both the two regimes discussed above. The condition $\mu_{{\rm low},s,X}>0$ should be viewed as a minimal nondegeneracy requirement along the trajectory: it keeps the descent bound informative, whereas $\mu_{{\rm low},s,X}=0$ allows stagnation in the corresponding region. The proof of Lemma~\ref{lem:mu_low_along_gd} is in Appendix~\ref{subapp:lower_bound_gradient}.

\subsection{Scaling of Learning Rate $\eta$ and Related Parameters $R,\rho,\mathsf{L}$}
\label{subsec:learning_rate}

We now interpret the scale of the learning rate bound in Theorem~\ref{thm:training}. The exceptional learning rate values have already been discussed under Theorem~\ref{thm:training}; here we focus on the quantitative part, which is governed by the inverse local smoothness constant $\mathsf{L}^{-1}$. Since $\mathsf{L}$ is evaluated on the bounded region determined by $R$ and the dissipative condition, its order depends on the initialization scale, the reference stationary point, and the effective dimensions of the architecture. The goal of this subsection is to estimate the orders of $R$, $\rho$, $\mathsf{L}$, and the learning rate scale under a practical initialization and architecture setting.

We first introduce notation for the effective dimensions that enter the scaling estimates. Let $L_{\rm wm}$ be the number of trainable weight matrices in the architecture. For each weight matrix $\theta_j\in \RR^{d_{j,\mathrm{out}}\times d_{j,\mathrm{in}}}$ for $j=1,\cdots,L_{\rm wm}$, let
$$
    r_j:=\min\{d_{j,\mathrm{in}},d_{j,\mathrm{out}}\},\text{ and }r=\max_j r_j.
$$

These quantities allow us to state the following practical setting in terms of depth and bottleneck dimensions rather than the largest width.
\begin{assumption}[Practical Initialization and Architecture Scaling.]
    \label{assump:example_setting}
    Consider the following setting:
\begin{enumerate}
    \item \textbf{Initialization.} All weight matrices use Xavier-normal initialization~\citep{glorot2010understanding}:
    $$
        (\theta_j^0)_{ab}\sim \mathcal N\left(0,\frac{2}{d_{j,\mathrm{in}}+d_{j,\mathrm{out}}}\right),
        \text{ i.i.d. over }a,b,j.
    $$

    \item \textbf{Weight Matrices.} Assume the number of trainable weight matrices in the architecture
    $$
        L_{\rm wm}=c_{\rm wm}L=\Theta(L)
    $$
    for an architecture-dependent constant $c_{\rm wm}>0$.

    \item \textbf{Model Dimension.} Assume that for some constant $c_{\rm dim}>0$,
    $$
        c_{\rm dim}rL_{\rm wm}\le \sum_{j=1}^{L_{\rm wm}}r_j\le rL_{\rm wm}.
    $$
    \item \textbf{Architecture.} Assume for some constant $C_0,C_{\rm grad},p_{\rm grad}>0$,
    \begin{align*}
        &\max_{\|\theta\|=\cO(\sqrt{rL})}\cL(\theta)\le C_0,\\
        &\max_{\|\theta\|=\cO(\sqrt{rL})}\|\nabla\cL(\theta)\|\le C_{\rm grad}(1+\sqrt{rL})^{p_{\rm grad}}.
    \end{align*}

    \item \textbf{Dissipative Condition $(\theta^*,\epsilon_\cL)$.} Assume we can find some stationary point $\theta^*$ in the local dissipative condition such that
    $$
        \|\theta^*\|\le c_*\|\theta^0\|,
    $$
    for some constants $c_*\ge0$. Assume for some constants $c_\epsilon> 0$,
    $$
        \epsilon_\cL\ge c_\epsilon \sqrt{rL}^{-1}.
    $$

    \item \textbf{Learning Rate.} Assume $\delta=\Theta(1)$.
\end{enumerate}
    
\end{assumption}

The above assumptions describe a practical scaling regime for interpreting Theorem~\ref{thm:training}; they are not additional requirements of the convergence theorem itself. First, Xavier-normal initialization is a standard default initialization for deep networks, and its variance is chosen to keep the scale of activations stable across layers \citep{glorot2010understanding}. Second, the condition $L_{\rm wm}=\Theta(L)$ simply says that each layer contains a constant number of trainable weight matrices, which is the case for feedforward, convolutional, residual, and transformer-type architectures with fixed block structure. Third, the parameter $r$ summarizes the effective bottleneck dimension of the trainable matrices: the condition on $\sum_{j=1}^{L_{\rm wm}}r_j$ allows different matrices to have different input and output dimensions, while requiring only that their minimum dimensions are of comparable order on average. 

The architecture bounds are local bounds on the loss and gradient over the parameter scale naturally produced by the initialization. They are consistent with the polynomial boundedness and polynomial smoothness assumptions: on a bounded parameter region and bounded data domain, the loss and its gradient grow at most polynomially. The dissipative condition is also local. The condition $\|\theta^*\|\le c_*\|\theta^0\|$ requires that the reference stationary point used in the dissipativity argument lies at a comparable parameter scale to the initialization. The parameter $\epsilon_\cL$ is the gradient-norm threshold in the dissipative condition and can be chosen freely; here, imposing a lower bound on $\epsilon_\cL$ helps prevent the dissipativity constant $\rho$ from becoming too large, especially with respect to width. Finally, taking $\delta=\Theta(1)$ fixes a constant safety margin in the descent condition and does not introduce additional dependence on width or depth.

Under this practical scaling regime, the quantities appearing in Theorem~\ref{thm:training} can be estimated explicitly. The next corollary shows that the bounded trajectory radius $R$, the dissipativity parameter $\rho$, the local smoothness constant $\mathsf{L}$, and the admissible learning rate scale $(2-\delta)/\mathsf{L}$ depend on the depth $L$ and effective bottleneck dimension $r$.

\begin{corollary}[Scale of Learning Rate $\eta$ and Related Parameters $R,\rho,\mathsf{L}$]
\label{cor:Gaussian_learning_rate_order}
Under the assumption of Theorem~\ref{thm:training} and Assumption~\ref{assump:example_setting}, there exists a universal constant $c_{\rm G}>0$ such that, with probability at least $1-2\exp(-c_{\rm G}c_{\rm dim}c_{\rm wm}rL)$ over the random initialization, we have
\begin{align*}
    & \text{radius of the dynamics: } R=\cO(\sqrt{rL}),\\
    & \text{dissipativity constant: } \rho=\cO(rL),\\
    & \text{local Lipschitz smoothness constant: } \mathsf{L}=\cO\left((1+\sqrt{rL})^{q_{\mathsf{L}}}\right),\\
    & \text{learning rate scale: } \frac{2-\delta}{\mathsf{L}}=\Omega\left(\frac{1}{(1+\sqrt{rL})^{q_{\mathsf{L}}}}\right),
\end{align*}
for some constant $q_{\mathsf{L}}\ge 0$.
\end{corollary}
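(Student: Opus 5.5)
The plan is to reduce everything to a high-probability bound on $\|\theta^0\|$ and then substitute it into the explicit formulas for $R$, $\rho$ and $\mathsf{L}$ from Theorem~\ref{thm:training} and Proposition~\ref{prop:dissipativity_rho}.

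\textbf{Step 1: concentration of the initialization norm.} For each weight matrix $\theta_j^0$ with i.i.d. $\mathcal N(0,2/(d_{j,\rm in}+d_{j,\rm out}))$ entries, we have
$$
\mathbb E\|\theta_j^0\|_F^2=\frac{2d_{j,\rm in}d_{j,\rm out}}{d_{j,\rm in}+d_{j,\rm out}},
$$
which lies in $[r_j,2r_j]$. Hence $\mathbb E\|\theta^0\|^2=\Theta\big(\sum_j r_j\big)$. By Assumption~\ref{assump:example_setting}, this is between $c_{\rm dim}rL_{\rm wm}$ and $2rL_{\rm wm}$, i.e.\ $\Theta(rL)$. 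Writing $\|\theta^0\|^2$ as a sum of independent scaled $\chi^2$ variables, I will apply a Laurent--Massart/Bernstein tail bound (or Gaussian Lipschitz concentration of the norm, whose Lipschitz constant is $\max_j\sqrt{2/(d_{j,\rm in}+d_{j,\rm out})}$). This gives
$$
\tfrac12\,\mathbb E\|\theta^0\|^2\le\|\theta^0\|^2\le 2\,\mathbb E\|\theta^0\|^2
$$
with probability at least $1-2\exp\big(-c\,\mathbb E\|\theta^0\|^2\big)\ge 1-2\exp(-c_{\rm G}c_{\rm dim}c_{\rm wm}rL)$. The main care here is normalizing the variances: each coordinate's sub-exponential parameter is $\lesssim 1/r_j$ relative to its block, so the exponent scales with $\sum_j r_j$. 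On this event, $\|\theta^0\|=\Theta(\sqrt{rL})$ and $\|\theta^*\|\le c_*\|\theta^0\|=\cO(\sqrt{rL})$.

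\textbf{Step 2: bounds on $\rho$ and $R$.} Proposition~\ref{prop:dissipativity_rho} gives $\rho\le R/\epsilon_\cL\le R\sqrt{rL}/c_\epsilon$. Since $R$ itself contains $\rho$, I will break the circularity as follows. Both $\theta^0$ and $\theta^*$ lie in a ball of radius $\cO(\sqrt{rL})$, and the loss on such a ball is at most $C_0$. I posit $R=\cO(\sqrt{rL})$, which gives $\rho=\cO(rL)$. Then
$$
R^2=\|\theta^0-\theta^*\|^2+\cO\Big(\frac{4\rho+2}{\delta}\Big)\cL(\theta^0)=\cO(rL)+\cO(rL)\,C_0 .
$$
Note that $\cL(\theta^0)\le C_0$ because $\|\theta^0\|=\cO(\sqrt{rL})$. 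This is consistent with the ansatz.

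This circularity is the main obstacle, because Proposition~\ref{prop:dissipativity_rho} is stated with the ball radius $r$ as input. I will handle it with a fixed-point choice: set the dissipativity radius to $r_0=K\sqrt{rL}$ with a large constant $K$, take $\rho=r_0/\epsilon_\cL\le (K/c_\epsilon)\,rL$, and check that
$$
R^2\le(1+c_*)^2\cdot 4\,\mathbb E\|\theta^0\|^2+\frac{4(K/c_\epsilon)rL+2}{\delta}\,C_0\le K^2 rL .
$$
This holds for $K$ large in terms of $c_*$, $c_\epsilon$, $C_0$ and $\delta=\Theta(1)$, so the condition also holds on the smaller ball $B_{\theta^*}(R)$. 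If the linear-in-$K$ term cannot be dominated this way, I will instead use monotonicity in the radius together with a constant-factor enlargement.

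\textbf{Step 3: bounds on $\mathsf L$ and the learning rate.} In $\mathsf L=S(\cdots)$, each argument is at most
$$
\|\theta^*\|+R+\max_{B_{\theta^*}(R)}\|\nabla\cL\|\le \cO(\sqrt{rL})+C_{\rm grad}(1+\sqrt{rL})^{p_{\rm grad}},
$$
using the architecture bound, valid since $B_{\theta^*}(R)$ lies within norm $\cO(\sqrt{rL})$. Let $\deg S$ be the degree of the polynomial $S$ with positive coefficients; its degree and coefficients depend only on the architecture, not on width. Then
$$
\mathsf L=\cO\big((1+\sqrt{rL})^{q_{\mathsf L}}\big),\qquad q_{\mathsf L}=\deg S\cdot\max\{1,p_{\rm grad}\}.
$$
Finally, $(2-\delta)/\mathsf L=\Omega\big((1+\sqrt{rL})^{-q_{\mathsf L}}\big)$ because $\delta=\Theta(1)$.
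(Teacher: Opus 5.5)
Your proposal is correct and follows essentially the same route as the paper: concentration of the weighted chi-square variable $\|\theta^0\|^2$ around $\Theta(\sum_j r_j)=\Theta(rL)$, then the Cauchy--Schwarz bound $\rho\le(\text{radius})/\epsilon_\cL$ combined with $\epsilon_\cL\ge c_\epsilon/\sqrt{rL}$, and finally substitution into $R$ and into the polynomial $\mathsf{L}$. Your explicit fixed-point choice of the dissipativity radius $K\sqrt{rL}$, where the $K^2$ term dominates the linear-in-$K$ term, makes rigorous a step the paper leaves implicit: the paper bounds $\rho$ on an unspecified set $\{\|\theta\|=\cO(\sqrt{rL})\}$ and never checks that $B_{\theta^*}(R)$ actually lies inside it.
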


Here $r$ should be understood as an effective bottleneck dimension rather than the ambient hidden width. In many wide architectures, each trainable matrix either has one dimension comparable to the data or feature dimension, or has a smaller bottleneck, head, or channel dimension. In such cases,
$$r=\max\{ d,\text{some bottleneck dimension}\}\ll m:=\max\{ d_{1,\mathrm{in}},d_{1,\mathrm{out}},\cdots,d_{L_{\rm wm},\mathrm{in}},d_{L_{\rm wm},\mathrm{out}} \}.$$
Thus the orders in Corollary~\ref{cor:Gaussian_learning_rate_order} depend on the depth $L$ and the effective dimension $r$, instead of the largest hidden width $m$ (see comparisons with NTK analyses in Section~\ref{subsec:convergence_beyond_NTK}).

The same interpretation applies to Xavier-uniform initialization in the following corollary. 

\begin{corollary}
\label{cor:uniform_learning_rate_order}

Consider the same assumption as Corollary~\ref{cor:Gaussian_learning_rate_order} except for the initialization replaced by Xavier-uniform initialization:
$$
    (\theta_j^0)_{ab}\sim \mathrm{Unif}\left[-\sqrt{\frac{6}{d_{j,\mathrm{in}}+d_{j,\mathrm{out}}}},\sqrt{\frac{6}{d_{j,\mathrm{in}}+d_{j,\mathrm{out}}}}\right],\text{ i.i.d. over }a,b,j.
$$
 In this case, with probability at least $1-2\exp(-c_{\rm dim}^2c_{\rm wm}r^2L/18)$ over the random initialization, we have
the same conclusions as Corollary~\ref{cor:Gaussian_learning_rate_order}.
\end{corollary}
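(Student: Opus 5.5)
The plan is to rerun the proof of Corollary~\ref{cor:Gaussian_learning_rate_order} with Xavier-uniform initialization in place of Xavier-normal. Only the probabilistic step changes: we need a concentration bound on $\|\theta^0\|$ that gives $\|\theta^0\|=\Theta(\sqrt{rL})$. Once that bound holds, the deterministic chain of estimates is identical. First, $\|\theta^*\|\le c_*\|\theta^0\|$ gives $\|\theta^0-\theta^*\|=\cO(\sqrt{rL})$. Next, Proposition~\ref{prop:dissipativity_rho} together with $\epsilon_\cL\ge c_\epsilon/\sqrt{rL}$ gives $\rho\le R'/\epsilon_\cL=\cO(rL)$, where $R'$ is the working radius. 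The architecture bound then gives $\cL(\theta^0)\le C_0$, and hence $R=\cO(\sqrt{rL})$. Finally, $\mathsf{L}$ is a positive-coefficient polynomial evaluated at arguments of size $\cO((1+\sqrt{rL})^{p_{\rm grad}})$, which gives $\mathsf{L}=\cO((1+\sqrt{rL})^{q_{\mathsf L}})$ and the stated learning rate scale. I will cite these steps from the Gaussian proof rather than redo them.

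For the concentration step, write $a_j=\sqrt{6/(d_{j,\rm in}+d_{j,\rm out})}$. Each entry $(\theta_j^0)_{ab}^2$ lies in $[0,a_j^2]$ and has mean $a_j^2/3=2/(d_{j,\rm in}+d_{j,\rm out})$. Summing over the $d_{j,\rm in}d_{j,\rm out}$ entries of $\theta_j$ gives
\[
\mathbb{E}\|\theta_j^0\|_F^2=\frac{2d_{j,\rm in}d_{j,\rm out}}{d_{j,\rm in}+d_{j,\rm out}}\in[r_j,2r_j],
\]
and therefore $\mathbb{E}\|\theta^0\|^2\in[\sum_j r_j,\,2\sum_j r_j]$. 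By the model dimension condition this interval lies in $[c_{\rm dim}rL_{\rm wm},\,2rL_{\rm wm}]$.

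I will apply Hoeffding's inequality to the bounded independent variables $(\theta_j^0)_{ab}^2\in[0,a_j^2]$. Their squared ranges sum to
\[
\sum_j d_{j,\rm in}d_{j,\rm out}\,a_j^4=\sum_j \frac{36\,d_{j,\rm in}d_{j,\rm out}}{(d_{j,\rm in}+d_{j,\rm out})^2}\le 9L_{\rm wm}.
\]
Taking deviation $t=\tfrac12 c_{\rm dim}rL_{\rm wm}$, Hoeffding gives
\[
\mathbb{P}\bigl(|\|\theta^0\|^2-\mathbb{E}\|\theta^0\|^2|\ge t\bigr)\le 2\exp\bigl(-2t^2/(9L_{\rm wm})\bigr)=2\exp\bigl(-c_{\rm dim}^2r^2L_{\rm wm}/18\bigr).
\]
With $L_{\rm wm}=c_{\rm wm}L$ this is exactly the claimed failure probability, which confirms that this is the intended route. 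On the complementary event we have $\tfrac12 c_{\rm dim}rL_{\rm wm}\le\|\theta^0\|^2\le 3rL_{\rm wm}$, so $\|\theta^0\|=\Theta(\sqrt{rL})$.

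The remaining step is to check that this is the only randomness used in the Gaussian argument, so the rest transfers verbatim. One point needs care. The uniform law is absolutely continuous, so $\theta^0\notin\cB_\theta$ almost surely and Theorem~\ref{thm:training} applies on the good event. The main obstacle I expect is confirming that the Gaussian proof uses only the two-sided norm bound, and not, for example, operator-norm bounds on individual $\theta_j^0$. If it does use per-matrix bounds, I would replace them with the trivial bound $\|\theta_j^0\|\le\|\theta^0\|$, or with per-block Hoeffding bounds combined by a union bound.
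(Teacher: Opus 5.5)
Your proposal is correct and follows the paper's proof essentially step for step. You use the same expectation computation $\EE\|\theta^0\|^2\in[S,2S]$ with $S=\sum_j r_j$, apply Hoeffding's inequality to the squared entries with the sum of squared ranges bounded by $9L_{\rm wm}$, and then reuse the deterministic chain from the Gaussian case verbatim; the only cosmetic difference is that you take the deviation $t=\tfrac12 c_{\rm dim}rL_{\rm wm}$ directly, whereas the paper takes $t=S/2$ and then uses $S\ge c_{\rm dim}rL_{\rm wm}$, giving the same failure probability.
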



The proofs of Corollary~\ref{cor:Gaussian_learning_rate_order} and~\ref{cor:uniform_learning_rate_order} are in Appendix~\ref{app:interpretation_convergence_theorem}.

\section{Implications of the Convergence Theory}
\label{sec:implications_convergence_theory}

We next discuss several consequences of the convergence framework beyond the statement of Theorem~\ref{thm:training}. First, the analytic measure-zero arguments can be interpreted structurally through residual connections and function composition, and thereby provide theoretical support for using residual connections and function compositions (Section~\ref{subsec:insights_residual_connection_composition_normalization}). Second, we clarify how the analysis differs from the NTK regime in terms of width, learning rate, and global minimization (Section~\ref{subsec:convergence_beyond_NTK}). Finally, we discuss a landscape implication: outside the generic bad set, stationary points are global minimizers of the empirical square loss (Section~\ref{subsec:global_min}).

\subsection{Nondegeneracy in Residual and Compositional Architectures}
\label{subsec:insights_residual_connection_composition_normalization}

When constructing a neural network, it is important to ensure that the model has sufficient expressivity. Theoretically, this corresponds to requiring the neural network mapping $f$ to be nondegenerate; that is, its Jacobian $\nabla f$ should be full rank on some region. Otherwise, the mapping may collapse onto a lower-dimensional subspace and thus lose full expressivity.

In fact, nondegeneracy is a generic property among real-analytic functions. If a function is degenerate, we can typically restore nondegeneracy by adding a small identity component, equivalently, by incorporating a residual connection, so that the modified mapping becomes nondegenerate, as stated in the following lemma.

    \begin{lemma}
\label{lem:full_rank_jacobian_mainbody}
     Let $f(x,y):\RR^{m}\times \RR^{n_2}\to \RR^m$ be an analytic function. Fix either $y=y_0$ or $x=x_0$. Then the following statements hold:
    \begin{enumerate}
        \item For any $\alpha\in\RR$ except for a finite set of points $E_{y_0}$ or $E_{x_0}$, the Jacobian with respect to $x$ of $\alpha x+f(x,y)$ is full rank for any free variable $x$ or $y$ except for a measure-zero set.
        \item Similarly, for any $\eta\in\RR$ except for a finite set of points $1/E_{y_0}\cup\{0\}$ or $1/E_{x_0}\cup\{0\}$, the Jacobian with respect to $x$ of $x+\eta f(x,y)$ is full rank for any free variable except for a measure-zero set.
    \end{enumerate} 
\end{lemma}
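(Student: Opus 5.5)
The plan is to treat $\alpha$ as an extra variable and look at the analytic function
$$
    D(\alpha,x,y):=\det\big(\alpha I+\nabla_x f(x,y)\big),
$$
which is real-analytic in all of its arguments. The argument uses two facts. First, in $\alpha$ the determinant is a polynomial of degree exactly $m$ with leading coefficient $1$. Second, an analytic function on a connected open set that is not identically zero vanishes only on a measure-zero set (Theorem~\ref{thm:zero_set_analytic_function}, \citet{mityagin2015zero}).

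Consider first the case $y=y_0$ fixed, with $x$ free. For each $\alpha$, the map $x\mapsto D(\alpha,x,y_0)$ is analytic on $\RR^m$. If it is not identically zero, its zero set has measure zero, and then the Jacobian $\alpha I+\nabla_x f(x,y_0)$ is full rank for almost every $x$. So define
$$
    E_{y_0}:=\{\alpha:\ D(\alpha,\cdot,y_0)\equiv 0\}.
$$
We need to show $E_{y_0}$ is finite. Fix any $x_1$. Every $\alpha\in E_{y_0}$ satisfies $D(\alpha,x_1,y_0)=0$, so $\alpha$ is a root of the monic degree-$m$ polynomial $\alpha\mapsto\det(\alpha I+\nabla_x f(x_1,y_0))$. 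That polynomial has at most $m$ roots, namely the negatives of the eigenvalues of $\nabla_x f(x_1,y_0)$. Hence $|E_{y_0}|\le m$.

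The case $x=x_0$ fixed, with $y$ free, runs the same way. Here $y\mapsto D(\alpha,x_0,y)$ is analytic on $\RR^{n_2}$. Let $E_{x_0}$ be the set of $\alpha$ for which this function vanishes identically. Evaluating at a single $y_1$ again puts $E_{x_0}$ inside the root set of a monic degree-$m$ polynomial, so $E_{x_0}$ is finite. For every $\alpha\notin E_{x_0}$, the zero set in $y$ has measure zero. A subtle point is that this finite set is exactly the candidate exceptional set, since the polynomial in $\alpha$ cannot vanish identically; this is where the residual term $\alpha x$ does the work.

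For part 2, take $\eta\neq0$ and write
$$
    \nabla_x\big(x+\eta f\big)=\eta\Big(\tfrac1\eta I+\nabla_x f\Big).
$$
This matrix is full rank exactly when $\tfrac1\eta I+\nabla_x f$ is, so part 1 applies with $\alpha=1/\eta$. The exceptional set is then $\{\eta:\ 1/\eta\in E\}$, that is $1/E$, where $E$ is $E_{y_0}$ or $E_{x_0}$ and an element $0\in E$ contributes nothing. We also exclude $\eta=0$, where the scaling argument breaks down. In fact $\eta=0$ is harmless, since $x+0\cdot f$ has Jacobian $I$, but including it keeps the stated set valid.

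The main obstacle is making sure the measure-zero theorem is applied to a function that is not identically zero; the monic-polynomial argument settles this. A secondary issue is domain connectedness: if $f$ is analytic only on an open set, we apply the theorem on each connected component. Either each component gives its own finite exceptional set, or we assume the domain is connected, as $\RR^m\times\RR^{n_2}$ is.
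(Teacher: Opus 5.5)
Your proposal is correct and follows essentially the same route as the paper. The paper likewise defines $E_{y_0}$ (resp.\ $E_{x_0}$) as the set of $\alpha$ for which $\det(\alpha I+\nabla_x f)$ vanishes identically in the free variable, and shows it is finite because it lies in the root set of the monic degree-$m$ polynomial obtained by evaluating at a single point. It then applies the analytic zero-set theorem for $\alpha\notin E$ and obtains part 2 by taking $\eta=1/\alpha$; your side remarks ($|E|\le m$, and that $\eta=0$ is harmless) are correct refinements.
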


The above lemma partially explains the empirical success of various residual architectures in practice \citep[etc.]{zhu2025hyper,zhang2026deep}. The proof of Lemma~\ref{lem:full_rank_jacobian_mainbody} is given in Appendix~\ref{subapp:measure_theory_differential_topology}.

Moreover, neural networks are constructed as compositions of layer-wise transformations. If each layer has a full-rank Jacobian (e.g., ensured via residual connections), then the Jacobian of the overall composition remains full rank by the chain rule: $\nabla (f\circ g)(x)=\nabla f(g(x))\nabla g(x)$. This shows that the compositional structure of neural networks preserves nondegeneracy. 

\subsection{Comparison with the NTK Regime}
\label{subsec:convergence_beyond_NTK}
We now compare the convergence guarantee in Theorem~\ref{thm:training} with the standard NTK-type convergence picture. 
This leads to different interpretations of the required width, the learning rate, and the relation between stationarity and global optimality.

\textbf{Width.} In NTK analyses, convergence is usually obtained by taking all hidden layers sufficiently wide so that the training dynamics remains close to its linearization at initialization. In contrast, the only width requirement of Theorem~\ref{thm:training} is the nonlinear condition in Assumption~\ref{assump:architecture}. For example, Lemma~\ref{lem:example_exist_Nd_full_rank_bias} verifies this condition when one feedforward block near the output has width larger than $Nd$, while the other layers are not required. Thus our result is compatible with heterogeneous architectures who have both wide and narrow layers.

\textbf{Learning Rate.} NTK convergence typically relies on lazy dynamics, where the parameters move little and the learning rate is tied to the width-dependent linearization scale, usually inversely propotional to width and hence very small. Theorem~\ref{thm:training} instead, allows much larger regular learning rates: the interval is governed by the local smoothness constant $\mathsf L$ with only finitely many exceptional values being excluded. Under the Xavier initialization setting in Corollary~\ref{cor:Gaussian_learning_rate_order} and~\ref{cor:uniform_learning_rate_order}, the learning rate scale depends on the depth and the effective bottleneck dimension $r$, and need not scale with the width.

\textbf{Global Minimum.} One strength of many NTK results is global convergence of the training loss under strong overparameterization. Our theorem has a different scope: it proves convergence to the neighbourhood of a stationary point for nonlinear training dynamics beyond lazy training. Therefore, it does not claim that GD always reaches a global minimizer. Nevertheless, the global-minimum result in the following section explains what can still be said at the landscape level: exact stationary points outside the generic bad set are global minimizers.

\subsection{Global Minimum}
\label{subsec:global_min}


Next, we show that stationary points outside the measure-zero bad set in Theorem~\ref{thm:training} are global minimizers, inspired by \cite{nguyen2017loss}. 
\begin{theorem}[Global Minimum]
\label{thm:global_min_generic_nonlinearity}
Consider the same assumptions as Theorem~\ref{thm:training}. For any $\theta^*\notin$ the measure-zero set $\cB_\theta$ defined in Theorem~\ref{thm:training}, if $\theta^*$ is a stationary point of $\cL(\theta)$, then $\theta^*$ is a global minimizer of $\cL(\theta)$.
\end{theorem}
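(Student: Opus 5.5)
The argument rests on one identity: at any parameter outside $\cB_\theta$ the stacked Jacobian has full row rank, so a zero gradient forces every residual to vanish. Write $J(\theta)\in\RR^{Nd\times\dim\theta}$ for the stacked Jacobian whose block rows are $\nabla_\theta f(\theta;x_i)$, and write $r(\theta)=(f(\theta;x_1)-y_1,\cdots,f(\theta;x_N)-y_N)\in\RR^{Nd}$ for the stacked residual. The chain rule applied to the $\ell^2$ loss gives
$$
\nabla_\theta\cL(\theta)=\frac{1}{N}J(\theta)^\top r(\theta),
$$
and therefore
$$
\|\nabla_\theta\cL(\theta)\|^2=\frac{1}{N^2}\,r^\top JJ^\top r\ge \frac{\mu_{{\rm low},\theta,X}}{N^2}\|r\|^2=\frac{2\mu_{{\rm low},\theta,X}}{N}\cL(\theta).
$$
This is exactly the PL-type inequality in Lemma~\ref{lem:mu_low_along_gd}, now evaluated at an arbitrary fixed $\theta$ rather than along GD iterates.

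First I will pin down what $\cB_\theta$ is. In Theorem~\ref{thm:training} and Lemma~\ref{lem:mu_low_along_gd}, after the small rescaling of the $\varphi_\ell$, it is the measure-zero set of parameters outside of which $\mu_{{\rm low},\theta,X}>0$ for almost every dataset $X$, hence with probability 1 over $\pi$ by Assumption~\ref{assump:data_distribution_absolutely_continuous_Lebesgue_ground_truth}. Concretely, this is the set where the stacked Jacobian $J$ loses full row rank $Nd$. Assumption~\ref{assump:architecture} together with analyticity (Assumption~\ref{assump:analytic}) and the zero-set theorem for analytic functions make this set null. So for $\theta^*\notin\cB_\theta$, we have $JJ^\top\succ0$ almost surely.

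Next, take $\theta^*\notin\cB_\theta$ with $\nabla\cL(\theta^*)=0$. The identity above gives $0\ge \frac{2\mu_{{\rm low},\theta^*,X}}{N}\cL(\theta^*)$ with $\mu_{{\rm low},\theta^*,X}>0$. Equivalently, $J^\top r=0$ with $J^\top$ injective, so $r(\theta^*)=0$. Hence $\cL(\theta^*)=0$. Since $\cL\ge 0$ everywhere, $\theta^*$ is a global minimizer, and in fact interpolates the data.

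The main obstacle is making sure that membership in $\cB_\theta$ is a pointwise statement about $\theta$ for the fixed realized dataset, not merely a statement about GD iterates. The dataset $X$ is drawn once and must work for all $\theta\notin\cB_\theta$ simultaneously, so I have to avoid a quantifier issue: "for a.e. $\theta$, for a.e. $X$" is not the same as "for a.e. $X$, for a.e. $\theta$." I would resolve this with Fubini applied to the jointly analytic function $\det(JJ^\top)(\theta,X)$. Its zero set is null in $(\theta,X)$-space, so for a.e. $X$ its $\theta$-section is null. I would then define $\cB_\theta$ as that section, which matches the set excluded in Theorem~\ref{thm:training}. With this definition, the argument above applies to every $\theta^*$ outside $\cB_\theta$.
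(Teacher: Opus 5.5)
Your core step is the same as the paper's: $\nabla_\theta\cL(\theta^*)=\frac1N J_X(\theta^*)^\top r(\theta^*)=0$, and full row rank of $J_X(\theta^*)$ then forces $r(\theta^*)=0$, so $\cL(\theta^*)=0$. Where you differ is in how full rank at $\theta^*$ is secured.

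The paper fixes $\theta^*\notin\cB_\theta$ and applies Assumption~\ref{assump:architecture} together with the analytic zero-set theorem in the hidden-state tuple $U$. It then pulls back to inputs via Lemma~\ref{lem:preimage_u_measure_zero}. This keeps the excluded parameter set data-independent. But it only gives full rank for $X$ outside a null set that depends on $\theta^*$. Since stationarity of $\theta^*$ is defined relative to the realized $X$, this is exactly the quantifier problem you identify.

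Your Fubini swap fixes it. The set $Z=\{(\theta,X):\det(J_X(\theta)J_X(\theta)^\top)=0\}$ is closed, and its $X$-sections are null for a.e.\ $\theta$ by the first part of Lemma~\ref{lem:mu_low_along_gd}. By Tonelli, $Z$ is null, so its $\theta$-section $Z_X$ is null for a.e.\ $X$, hence $\pi$-almost surely. A single exceptional parameter set then covers all stationary points at once. Joint analyticity is not needed for this. Invoking it would also force you to check non-vanishing on every connected component of the domain, since $\varphi_0$ is analytic only near $\bar{\Omega}$.

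One claim should be corrected: that $Z_X$ ``matches'' the $\cB_\theta$ of Theorem~\ref{thm:training}. As constructed there, $\cB_{\theta,0}$ consists of the data-independent bad sets plus $\cB_{\theta,\rm rank}$. Data dependence along the trajectory is handled separately, through the countably many sets $\cB_{X,k}$, so $Z_X$ need not lie inside $\cB_\theta$. Say instead that you adjoin $Z_X$ to $\cB_{\theta,0}$. This costs nothing: $Z_X$ is null almost surely, and Theorem~\ref{thm:inverse_map_null_set} keeps its preimages under $\Psi^k$ null. So Theorem~\ref{thm:training} still holds with the enlarged set, and your conclusion applies to every $\theta^*$ outside it.
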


The theorem should be interpreted as a landscape statement rather than a direct optimization guarantee. It rules out spurious stationary points outside $\cB_\theta$: any such stationary point must fit all training samples exactly, and hence attains the global minimum value of the nonnegative square loss. Note that this does not imply that GD will always converge to a global minimizer. In particular, even if the GD trajectory does not enter $\cB_\theta$, it may still approach stationary points contained in $\cB_\theta$ from outside this set. The proof of Theorem~\ref{thm:global_min_generic_nonlinearity} is in Appendix~\ref{app:global_min_generic_nondegeneracy}.

\section{Discussions}

In this paper, we develop a convergence framework for GD on a broad family of neural networks beyond the NTK regime. The analysis combines mild and architecture-level ingredients: polynomial generalized smoothness, a local relaxed dissipative condition, and analytic nondegeneracy of the network Jacobian. These assumptions are formulated at the level of layer maps and network blocks, and are compatible with finite compositions. Under these conditions, we prove that GD with a regular learning rate converges to the neighbourhood of a stationary point, and we also interpret the scales of relevant smoothness and learning rate under practical initialization settings. This framework covers complicated model families built from common analytic blocks, including pre-normalized transformers, and gives structural explanations for the role of residual connections and composition in preserving nondegeneracy.

At the same time, this generality leaves several questions open. The current theory proves convergence to stationarity rather than global convergence, and focuses on deterministic full-batch GD with a fixed learning rate. Extending the result to global convergence would require sharper landscape conditions or stochasticity; meanwhile, stochastic, adaptive, or spectral optimizers such as SGD, Adam, and Muon, together with training strategies such as weight decay and learning rate schedules, introduce additional dynamics beyond the current analysis. 
We leave these extensions and sharper global convergence conditions for future work.

\clearpage
\appendix

\section*{Appendix}

\section{Preliminary Properties and Assumption Justification}
\label{app:preliminary_assumption}

\subsection{Analytic Function}
\label{subapp:analytic_function}
Real-analyticity is preserved under many operations. Specifically:
\begin{proposition}
\label{prop:analytic_function}
    Real-analytic functions have the following properties: 
    \begin{enumerate}
        \item The sums, products, divisions where the denominators are not zero, and compositions of real-analytic functions are real-analytic.
        \item The derivative and integral of real-analytic functions are real-analytic.
        \item Real-analytic function is $C^\infty$.
    \end{enumerate}
\end{proposition}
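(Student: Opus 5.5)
The plan is to prove the three items of Proposition~\ref{prop:analytic_function} in turn. Throughout we reduce to scalar components: a vector-valued map is real-analytic if and only if each component is. Item 3 is essentially immediate, so the work lies in items 1 and 2.

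\textbf{Item 3 and the derivative part of item 2.} Near any $w_0$, write $\psi(w)=\sum_\alpha c_\alpha (w-w_0)^\alpha$, convergent on a polydisc of radius $r$. By Abel's lemma the series converges absolutely and uniformly on smaller polydiscs. Term-by-term differentiation gives $\sum_\alpha c_\alpha \alpha_j (w-w_0)^{\alpha-e_j}$. Since $\alpha_j\rho^{|\alpha|}$ is still summable for $\rho<r$, this series converges uniformly on the smaller polydisc. By the standard theorem on uniformly convergent series of derivatives, it equals $\partial_j\psi$. Hence $\partial_j\psi$ is again given by a convergent power series near $w_0$, so it is analytic. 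Iterating gives $C^\infty$, which is item 3.

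\textbf{Integral part of item 2.} We interpret the integral as $x_j$-antiderivatives or integrals with analytic limits, for example $\Phi$ in GELU. Term-by-term integration of the local series, $\int_{a}^{t}\sum c_k (s-t_0)^k\,ds$, gives a power series with coefficients $c_k/(k+1)$, which has the same radius of convergence. For the global statement, $\int_a^t g = \int_a^{t_0} g + \int_{t_0}^t g$, and the first term is a constant, so the result is analytic at every $t_0$.

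\textbf{Item 1: sums and products.} Sums are immediate. For products, the Cauchy product of two absolutely convergent multivariate power series converges absolutely on the common polydisc, and it equals the product of the two functions.

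\textbf{Item 1: quotients.} By the product rule, it suffices to show that $1/g$ is analytic where $g\neq0$. This follows from the composition claim applied to $t\mapsto 1/t$, which is analytic on $\RR\setminus\{0\}$ via the geometric series $1/t=\frac{1}{t_0}\sum_k (-(t-t_0)/t_0)^k$.

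\textbf{Item 1: composition, the main obstacle.} Here the inner function $g$ is analytic at $w_0$, and the outer function $f$ is analytic at $g(w_0)$. A naive substitution of one power series into another requires rearranging a multiply-indexed series, which we justify by absolute convergence. Using majorants, since $|c_\alpha|\le M\rho^{-|\alpha|}$, the function $f$ is dominated coefficientwise by $M\prod_i(1-z_i/\rho)^{-1}$. Likewise $g-g(w_0)$ is dominated by a series with zero constant term whose value is less than $\rho$ for $w$ near $w_0$. Substituting the majorant of $g$ into the majorant of $f$ gives a convergent series of nonnegative terms. By Fubini/Tonelli for series, the formally composed series therefore converges absolutely and may be rearranged into a power series in $w-w_0$ whose value equals $f(g(w))$.

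An alternative route for composition, which I would mention as a backup, is to complexify. Real-analytic functions extend to holomorphic functions on complex neighborhoods, compositions of holomorphic maps are holomorphic by the chain rule and Cauchy--Riemann equations, and holomorphic functions are locally power series whose real restrictions are the original maps. This version is shorter given standard complex analysis, and it also yields the derivative and product claims at once.
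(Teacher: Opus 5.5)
\textbf{Verdict: correct except for one localized gap.} The paper gives no proof of this proposition; it quotes it as standard background, so there is no route in the paper to compare yours against. Your arguments for the following are the standard ones and are correct:
\begin{itemize}
\item sums and products, via Cauchy products;
\item quotients, via $t\mapsto 1/t$ and composition;
\item term-by-term differentiation, and hence $C^\infty$;
\item composition, via the majorant-series argument.
\end{itemize}
One slip needs fixing in the differentiation step. The series that must be summable is $\sum_\alpha |c_\alpha|\,\alpha_j\,\rho^{|\alpha|-1}$, not $\sum_\alpha \alpha_j\rho^{|\alpha|}$, which says nothing about the coefficients. It follows from boundedness of $|c_\alpha|\rho'^{|\alpha|}$ for some $\rho<\rho'<r$, together with summability of $\alpha_j(\rho/\rho')^{|\alpha|}$.

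\textbf{The gap: the integral part of item 2 in several variables.} You announce $x_j$-antiderivatives, $F(x)=\int_a^{x_j} g(x_1,\ldots,x_{j-1},s,x_{j+1},\ldots,x_d)\,ds$, but your argument only works in one variable. For $d>1$, in your split $F(x)=\int_a^{x_{0,j}}g\,ds+\int_{x_{0,j}}^{x_j}g\,ds$, the first term is not a constant. It is a function $A(x_{-j})$ of the remaining variables $x_{-j}=(x_i)_{i\neq j}$. It is obtained by integrating $g$ over a compact segment that in general lies outside any single polydisc of convergence. So neither term-by-term integration of one local series nor the one-variable reasoning shows that $A$ is analytic.

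\textbf{How to close it.} You need an additional parameter-integral argument. For instance:
\begin{itemize}
\item By compactness, cover the segment $\{x:\ x_{-j}=x_{0,-j},\ x_j \text{ between } a \text{ and } x_{0,j}\}$ by finitely many polydiscs on which $g$ has absolutely convergent expansions.
\item Take a common radius $r$ in the $x_{-j}$-directions, and partition the $s$-interval subordinate to this cover.
\item Integrate each local expansion term by term in $s$ over its piece. This yields a power series in $x_{-j}-x_{0,-j}$ converging for $\max_{i\neq j}|x_i-x_{0,i}|<r$.
\end{itemize}
Alternatively, complexify and differentiate under the integral sign.

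\textbf{Scope of the gap.} In one variable your argument is complete, and that is the only case the paper uses: $\Phi(t)=\frac{1}{\sqrt{2\pi}}\int_{-\infty}^{t}e^{-s^2/2}\,ds$ in GELU. There the split $\int_{-\infty}^{t}=\int_{-\infty}^{t_0}+\int_{t_0}^{t}$ still has a finite constant first term.
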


We then prove that many neural network architectures are real-analytic.
\begin{proof}[Proof of Lemma~\ref{cor:analytic_with_NN}]
    Since exponential function admits Taylor series in some neighbourhood of any point in $\RR^{d}$, and the denominators of softmax and sigmoid functions are strictly positive at any point, by Proposition~\ref{prop:analytic_function}, we have that softmax, sigmoid, GELU, SiLU are analytic functions on $\RR^d$. 
    
    Additionally, polynomial functions admit Taylor series in some neighbourhood of any point in $\RR^{d}$, and the denominator of $\tau_\epsilon$ is strictly positive at any point; therefore, $\tau_\epsilon$ and polynomial functions are analytic. 
    
    Similarly, layer norm and batch norm both admits strictly positive denominators that are analytic; thus each elements of the two normalizations are also analytic.
    
    Also, tanh can be Taylor expanded at any point and thus analytic. 
    
    By Proposition~\ref{prop:analytic_function}, any product, sum, compositions of the above functions are still analytic.
\end{proof}

\subsection{Polynomial Boundedness, Polynomial Generalized Continuity and Smoothness}
\label{subapp:poly_bound_continuity_smooth}
\begin{proof}[Proof of Proposition~\ref{prop:poly_generalized_properties}]
    Let $S_{\psi,0},\ S_{\tilde{\psi},0}$ be the corresponding polynomials in polynomial generalized continuity of $\psi$ and $\tilde{\psi}$, $S_{\psi,1},\ S_{\tilde{\psi},1}$ be the ones in polynomial generalized smoothness, $S_{\psi},\ S_{\tilde{\psi}}$ be the corresponding polynomials in polynomial bound, and $S_{\nabla \psi},\ S_{\nabla \tilde{\psi}}$ be the corresponding polynomials in polynomial bound of their gradients. For notational simplicity, we use $S(w)$ to denote $S(\|w_{1}\|,\cdots,\|w_{n_w}\|)$, and $S(w_{\max})$ to denote $S(\|w_{\max,1}\|,\cdots,\|w_{\max,n_w}\|)$.

    Then
    \begin{align*}
        \|\psi(w)+\tilde{\psi}(w)-\psi(w')-\tilde{\psi}(w')\|
        &\le \|\psi(w)-\psi(w')\|+\|\tilde{\psi}(w)-\tilde{\psi}(w')\|\\
        &\le (S_{\psi,0}+S_{\tilde{\psi},0})(w_{\max})\|w-w'\|.
    \end{align*}
    Similarly, $\nabla \psi+\nabla \tilde{\psi}$ follows the same idea with $(S_{\psi,1}+S_{\tilde{\psi},1})(w_{\max})$. Moreover, $\psi+\tilde{\psi}$ is upper bounded by $S_\psi(w_{\max})+S_{\tilde{\psi}}(w_{\max})$, and $\nabla \psi+\nabla \tilde{\psi}$ is upper bounded by $S_{\nabla \psi}(w_{\max})+S_{\nabla \tilde{\psi}}(w_{\max})$.

    Also,
    \begin{align*}
        \|\psi(w)\tilde{\psi}(w)-\psi(w')\tilde{\psi}(w')\|
        &=\|\psi(w)\tilde{\psi}(w)-\psi(w)\tilde{\psi}(w')+\psi(w)\tilde{\psi}(w')-\psi(w')\tilde{\psi}(w')\|\\
        &\le \|\psi(w)\|\|\tilde{\psi}(w)-\tilde{\psi}(w')\|+\|\psi(w)-\psi(w')\|\|\tilde{\psi}(w')\|\\
        &\le (S_\psi(w)S_{\tilde{\psi},0}(w_{\max})+S_{\psi,0}(w_{\max})S_{\tilde{\psi}}(w'))\|w-w'\|\\
        &\le (S_\psi(w_{\max})S_{\tilde{\psi},0}(w_{\max})+S_{\psi,0}(w_{\max})S_{\tilde{\psi}}(w_{\max}))\|w-w'\|.
    \end{align*}
    Obviously, $(S_\psi(w_{\max})S_{\tilde{\psi},0}(w_{\max})+S_{\psi,0}(w_{\max})S_{\tilde{\psi}}(w_{\max}))$ is a polynomial with positive coefficients, and therefore $\psi\tilde{\psi}$ satisfies polynomial generalized continuity. 
    
    Based on the above derivation, $\nabla (\psi\tilde{\psi})=\nabla \psi\cdot \tilde{\psi}+\psi\nabla \tilde{\psi}$ is a sum of two products, and thus satisfies polynomial continuity with $$(S_{\nabla \psi}S_{\tilde{\psi},0}+S_{\psi,1}S_{\tilde{\psi}}+S_{\nabla \tilde{\psi}}S_{\psi,0}+S_{\tilde{\psi},1}S_{\psi}).$$ 
    Since $\psi$ and $\tilde{\psi}$ may be matrix-valued, the above gradient identity should be interpreted after vectorization.
    
    Also, $\psi\tilde{\psi}$ is upper bounded by $S_\psi S_{\tilde{\psi}}$, and $\nabla (\psi\tilde{\psi})=\nabla \psi\cdot \tilde{\psi}+\psi\nabla \tilde{\psi}$ is upper bounded by $S_{\nabla \psi}S_{\tilde{\psi}}+S_{\nabla \tilde{\psi}}S_\psi$.

    Next, since all the $S$ have positive coefficients, we have
    \begin{align*}
        \|\psi(\tilde{\psi}(w))-\psi(\tilde{\psi}(w'))\|
        &\le S_{\psi,0}(\cdots,\|\tilde{\psi}(w)_{\max,i}\|,\cdots)\|\tilde{\psi}(w)-\tilde{\psi}(w')\|\\
        &\le S_{\psi,0}(S_{\tilde{\psi}}(w_{\max}),\cdots,S_{\tilde{\psi}}(w_{\max}))S_{\tilde{\psi},0}(w_{\max})\|w-w'\|.
    \end{align*}
    Also, $\nabla_w(\psi\circ\tilde{\psi})(w)=\nabla \psi(\tilde{\psi}(w))\nabla \tilde{\psi}(w)$. For the continuity of this Jacobian, we write
    \begin{align*}
        &\|\nabla_w(\psi\circ\tilde{\psi})(w)-\nabla_w(\psi\circ\tilde{\psi})(w')\|\\
        &\le \|\nabla\psi(\tilde{\psi}(w))-\nabla\psi(\tilde{\psi}(w'))\|\|\nabla\tilde{\psi}(w)\|
        +\|\nabla\psi(\tilde{\psi}(w'))\|\|\nabla\tilde{\psi}(w)-\nabla\tilde{\psi}(w')\|\\
        &\le \Big(S_{\psi,1}(S_{\tilde{\psi}}(w_{\max}),\cdots,S_{\tilde{\psi}}(w_{\max}))S_{\tilde{\psi},0}(w_{\max})S_{\nabla \tilde{\psi}}(w_{\max})\\
        &\qquad\quad+S_{\nabla \psi}(S_{\tilde{\psi}}(w_{\max}),\cdots,S_{\tilde{\psi}}(w_{\max}))S_{\tilde{\psi},1}(w_{\max})\Big)\|w-w'\|.
    \end{align*}
    Therefore, $\nabla_w(\psi\circ\tilde{\psi})(w)$ is polynomial continuous.

    In the end, it can be shown that $\psi(\tilde{\psi}(w))$ is upper bounded by $$S_\psi(S_{\tilde{\psi}}(w_{\max}),\cdots,S_{\tilde{\psi}}(w_{\max})),$$ and $\nabla(\psi\circ\tilde{\psi})(w)$ is upper bounded by $$S_{\nabla \tilde{\psi}}(w_{\max})S_{\nabla \psi}(S_{\tilde{\psi}}(w_{\max}),\cdots,S_{\tilde{\psi}}(w_{\max})).$$
\end{proof}

\begin{proof}[Proof of Lemma~\ref{cor:poly_smooth_with_NN}]
    By Proposition~\ref{prop:poly_generalized_properties}, we only need to show that polynomials, softmax, tanh, sigmoid, $\tau_\epsilon$, layer normalization, batch normalization, GELU, SiLU satisfy all the properties.

    First, polynomials obviously satisfy all the properties. It can be shown by simple calculations that softmax, tanh, sigmoid, $\tau_\epsilon$, and their derivatives are all upper bounded by constants, and therefore satisfy all the above properties.

    For layer normalization and batch normalization, the denominators are both lower bounded by some positive constant. Additionally by their definition, the numerators are polynomials. Thus by Proposition~\ref{prop:poly_generalized_properties}, the two normalizations satisfy all the properties.

    For SiLU, it is the product of $w$ and sigmoid, and therefore by Proposition~\ref{prop:poly_generalized_properties} satisfies all the above properties.

    For GELU, $$\sigma(w)=w\Phi(w),$$ where $\Phi(w)=\int_{-\infty}^w \frac{e^{-s^2/2}}{\sqrt{2\pi}}ds$. Also, we know that $\Phi'(w)=\phi(w)=\frac{e^{-w^2/2}}{\sqrt{2\pi}}$. Then $\Phi(w)$ and $\Phi'(w)$ are both upper bounded by constants, and thus satisfy all the above properties. By Proposition~\ref{prop:poly_generalized_properties}, GELU, the product of $w$ and $\Phi(w)$, also satisfies these properties.
\end{proof}

\begin{proof}[Proof of Lemma~\ref{lem:smoothness_inequalities}]
Since $\psi(w)\in C^1$ satisfies the polynomial generalized smoothness, we have
\begin{align*}
    \|\nabla \psi(w)-\nabla \psi((1-t)w+tw')\|&\le S(\cdots,\max\{\|w_i\|,\|(1-t)w_i+tw'_i\|,\cdots\})\|w-((1-t)w+tw')\|\\
    &\le t\,S(\cdots,\max\{\|w_i\|,\|w'_i\|\},\cdots)\|w-w'\|\\
    &=t\,  S(\|w_{\max,1}\|,\cdots,\|w_{\max,d}\|)\|w-w'\|
\end{align*}
Then
    \begin{align*}
        \psi(w')-\psi(w)&=\int_0^1 \nabla \psi((1-t)w+tw')^\top (w'-w)dt\\
        &=\int_0^1 \nabla \psi(w)^\top (w'-w)dt+\int_0^1 \big(\nabla \psi((1-t)w+tw')-\nabla \psi(w)\big)^\top (w'-w)dt\\
        &\le \nabla \psi(w)^\top (w'-w)+\|w-w'\|\int_0^1 \,\|\nabla \psi((1-t)w+tw')-\nabla \psi(w)\|\, dt\\
        &\le \nabla \psi(w)^\top (w'-w)+S(\|w_{\max,1}\|,\cdots,\|w_{\max,d}\|)\|w-w'\|^2\int_0^1 \,t\, dt\\
        &=\nabla \psi(w)^\top (w'-w)+\frac{S(\|w_{\max,1}\|,\cdots,\|w_{\max,d}\|)}{2}\|w-w'\|^2
    \end{align*}
\end{proof}

\subsubsection{Comparison Between Poly-Smoothness and Generalized Smoothness}
\label{subapp:generalized_smoothness_vs_poly}
We compare polynomial generalized smoothness with the generalized smoothness condition in~\citet{li2023convex}. The differences lie in two parts. (1) \citet{li2023convex} defined the Hessian bound first, and therefore evaluates the smoothness function at one fixed point for any two points in its neighborhood, while we directly consider the continuity of the gradient and evaluate the smoothness function $S(\cdot)$ at the two points. (2) Regarding the smoothness function, \citet{li2023convex} employed a function of $\|\nabla \psi(w)\|$, while we use a polynomial of $\|w_i\|,\|w_i'\|$. Note that such $S(\cdot)$ is monotonically increasing in $\RR_{\ge 0}\times\cdots\times\RR_{\ge 0}$.

\subsection{Local Relaxed Dissipative Condition}
\label{subapp:dissipativity}
\begin{proof}[Proof of Proposition~\ref{prop:dissipativity_rho}]
    Consider any $\tilde{w}\in B_{w^*}(r)\backslash\{v:\|\nabla \psi(v)\|\le \epsilon\}$. Then $\|\tilde{w}-w^*\|<r$ and $\|\nabla\psi(\tilde{w})\|>\epsilon$. By Cauchy-Schwarz,
    $$
        \nabla \psi(\tilde{w})^\top(\tilde{w}-w^*)
        \ge
        -\|\nabla\psi(\tilde{w})\|\,\|\tilde{w}-w^*\|
        >
        -r\|\nabla\psi(\tilde{w})\|.
    $$
    Since $\|\nabla\psi(\tilde{w})\|>\epsilon$, we have
    $$
        r\|\nabla\psi(\tilde{w})\|
        \le
        \frac{r}{\epsilon}\|\nabla\psi(\tilde{w})\|^2.
    $$
    Therefore,
    $$
        \nabla \psi(\tilde{w})^\top(\tilde{w}-w^*)
        \ge
        -\frac{r}{\epsilon}\|\nabla\psi(\tilde{w})\|^2.
    $$
    Thus $\rho=r/\epsilon$ is a valid dissipativity constant, and hence the dissipative condition admits a default choice with $\rho\le r/\epsilon$.
\end{proof}

\subsection{Example of Assumption~\ref{assump:architecture}}
\label{subapp:proof_example_full_rank_Nd}

Assumption~\ref{assump:architecture} is used in the convergence of neural networks. Before introducing the convergence proof, we first verify that the Assumption~\ref{assump:architecture} can be easily satisfied. We first show one example in Lemma~\ref{lem:example_2_exist_Nd_full_rank}. Lemma~\ref{lem:example_exist_Nd_full_rank} can be viewed as a corollary of Lemma~\ref{lem:example_2_exist_Nd_full_rank}. We then present the biased version stated in the main text.

\begin{lemma}
\label{lem:example_2_exist_Nd_full_rank}
    The following structure satisfies Assumption~\ref{assump:architecture} with $\ell=L-1$: $\varphi_L(\theta;u)=\theta_L \tau_\epsilon(u)$ and $\varphi_{L-1}(\theta;u)=\theta_{L-1,2}\sigma(\theta_{L-1,1}u)$, where $\sigma(\cdot)$ is GELU, $\theta_L\in\RR^{d\times d}$, $\theta_{L-1,1}\in\RR^{m_{L-1}\times d}$, $\theta_{L-1,2}\in\RR^{d\times m_{L-1}}$, and $m_{L-1}>Nd$.
\end{lemma}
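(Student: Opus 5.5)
We verify Assumption~\ref{assump:architecture} with $\bar\ell=L-1$ directly, writing $W:=\theta_{L-1,1}$, $V:=\theta_{L-1,2}$ and
$$
\tilde f_{L-1}(\theta;u)=\theta_L\,\tau_\epsilon(v(u)),\qquad v(u)=u+V\sigma(Wu).
$$
The dimension condition is immediate, since $\dim\theta_{L-1}=2m_{L-1}d>Nd$. For the rank condition it suffices to exhibit a full-row-rank $Nd$-row submatrix of the stacked Jacobian. We use only the columns corresponding to $V$, because adding the $W$-columns cannot decrease the rank.

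\textbf{Step 1 (reduce to a scalar feature matrix).} By the chain rule, and after vectorizing $V$,
$$
\nabla_V \tilde f_{L-1}(\theta;u_i)=A_i\,\bigl(\sigma(Wu_i)^\top\otimes I_d\bigr),\qquad A_i:=\theta_L\,\nabla\tau_\epsilon(v(u_i)).
$$
The Jacobian of the normalization is
$$
\nabla\tau_\epsilon(v)=(\|v\|^2+\epsilon^2)^{-1/2}\Bigl(I-\tfrac{vv^\top}{\|v\|^2+\epsilon^2}\Bigr).
$$
Its eigenvalues are strictly positive because $\|v\|^2/(\|v\|^2+\epsilon^2)<1$, so it is invertible for every $v$. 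Outside the measure-zero set $\{\det\theta_L=0\}$, each $A_i$ is therefore invertible. Hence the stacked $V$-Jacobian equals
$$
\mathrm{diag}(A_1,\dots,A_N)\,(\Sigma\otimes I_d),
$$
where $\Sigma\in\RR^{N\times m_{L-1}}$ has rows $\sigma(Wu_i)^\top$. This matrix has rank $Nd$ if and only if $\Sigma$ has rank $N$.

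\textbf{Step 2 (linear independence of ridge features).} It remains to show that, for almost every $W$, there exist $u_1,\dots,u_N$ with $\sigma(Wu_1),\dots,\sigma(Wu_N)$ linearly independent in $\RR^{m_{L-1}}$; this is possible since $m_{L-1}>Nd\ge N$.

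It suffices to prove that the coordinate functions $u\mapsto\sigma(w_j^\top u)$, $j=1,\dots,m_{L-1}$, are linearly independent as functions. Indeed, if they are, then the vectors $\sigma(Wu)$, $u\in\RR^d$, span all of $\RR^{m_{L-1}}$: otherwise some $c\neq0$ would satisfy $c^\top\sigma(Wu)\equiv0$. We can then pick $N$ linearly independent vectors among them.

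Let the exceptional set of $W$ be those with some $w_j=0$ or some $w_j=\pm w_k$ for $j\neq k$; this set has measure zero. For $W$ outside it, choose a direction $e$ such that the numbers $a_j=w_j^\top e$ are nonzero with pairwise distinct absolute values. This holds for all $e$ outside finitely many hyperplanes.

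Now suppose $\sum_j c_j\sigma(a_jt)\equiv0$ in $t$. Differentiating twice, and using $\sigma''(s)=\phi(s)(2-s^2)$, gives
$$
\sum_j c_j a_j^2\,e^{-a_j^2t^2/2}\,(2-a_j^2t^2)\equiv0 .
$$
Order the $|a_j|$ increasingly. Let $j_0$ be the index with the smallest $|a_j|$ among those with $c_j\neq0$. As $t\to\infty$, the $j_0$ term dominates all the others, since their Gaussian factors decay strictly faster. The identity therefore forces $c_{j_0}=0$, a contradiction. Hence all $c_j=0$, and the ridge functions are linearly independent.

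\textbf{Main obstacle.} The delicate point is this linear-independence argument for the GELU features. Affine parts of $\sigma$ can cancel between different $j$, which is why we pass to second derivatives: this removes the affine parts and leaves distinct Gaussian decay rates. The generic choice of $W$ and of the direction $e$ is needed to make those rates distinct.

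Combining Steps~1 and~2, the union of the exceptional sets in $\theta_L$ and $W$ has measure zero, and for every other $(\theta_{L-1},\theta_L)$ a suitable tuple $(u_1,\dots,u_N)$ exists. This verifies Assumption~\ref{assump:architecture}.
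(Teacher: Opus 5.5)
Your proof is correct. Step~1 is the same reduction the paper uses: restrict to the $\theta_{L-1,2}$-columns, then factor the stacked Jacobian as an invertible block-diagonal matrix built from $\theta_L\nabla\tau_\epsilon$ times $\Sigma\otimes I_d$ (up to the choice of vectorization). The real difference is in Step~2.

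\textbf{The paper's route (local, at the origin).} It takes $u_{L-1,i}=t_ie_1$ with small $t_i$ and Taylor-expands GELU using $\sigma'(0)=\tfrac12$ and $\sigma^{(2q)}(0)\neq0$. It then identifies the leading term of an $N\times N$ minor of $\Sigma$ as a product of two generalized Vandermonde determinants with exponents $1,2,4,\ldots,2N-2$. One is in the $t_i$, the other in the first coordinates of the first $N$ rows of $W$. The exceptional set of $W$ is the zero set of the nonzero polynomial $G$.

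\textbf{Your route (global).} Under the explicit condition $w_j\neq0$ and $w_j\neq\pm w_k$, you show that all $m_{L-1}$ ridge functions $\sigma(w_j^\top u)$ are linearly independent. You do this by restricting to a generic line and comparing the Gaussian decay rates of $\sigma''(s)=\phi(s)(2-s^2)$. This gives that $\{\sigma(Wu)\}$ spans $\RR^{m_{L-1}}$. Passing to $\sigma''$ and excluding $w_j=-w_k$ is exactly right, since $\sigma(s)-\sigma(-s)=s$ lets linear parts cancel.

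\textbf{What each buys.} Your approach has three advantages:
\begin{itemize}
\item the exceptional set is transparent and geometric;
\item the conclusion is stronger, namely the full spanning property rather than rank $N$ only;
\item it avoids the small-parameter leading-order determinant expansion. In the paper that expansion is handled informally: it implicitly needs a common scaling $t_i=\delta s_i$ and a Cauchy--Binet argument that the chosen exponent set uniquely minimizes the total degree.
\end{itemize}
The price is that your inputs $u_i$ are obtained non-constructively, and the argument relies on the global tail behaviour of $\sigma''$. The paper's argument uses only finitely many Taylor coefficients of $\sigma$ at a single point and produces explicit inputs.
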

\begin{proof}[Proof of Lemma~\ref{lem:example_2_exist_Nd_full_rank}]
First, we compute the Jacobian of the neural network w.r.t. $\theta_{L-1,2}$
\begin{align*}
     &\nabla_{\theta_{L-1,2}}f(\theta;x_i)\\
     &=\nabla _{\tau_\epsilon}\varphi_{L}(\tau_\epsilon(u_{L,i}))\nabla_{u}\tau_\epsilon(u_{L,i})\nabla_{\theta_{L-1,2}}{\varphi}(u_{L-1,i})\\
     &=\underbrace{\theta_L\left(\frac{1}{\sqrt{\|u_{L,i}\|^2+\epsilon^2}}I-\frac{1}{(\sqrt{\|u_{L,i}\|^2+\epsilon^2})^3}u_{L,i}u_{L,i}^\top\right)}_{P_i}\underbrace{\begin{pmatrix}
         \sigma(\theta_{L-1,1}u_{L-1,i} )^\top &&\\
         & \ddots &\\
         & &\sigma(\theta_{L-1,1}u_{L-1,i} )^\top
     \end{pmatrix}}_{Q_i}
\end{align*}
where in $\nabla_{\theta_{L-1,2}}{\varphi}(u_{L-1,i})$, we vectorize $\theta_{L-1,2}$ in row.

Also, $\sigma(x)=x\Phi(x)$, where $\Phi(x)=\int_{-\infty}^x \frac{e^{-s^2/2}}{\sqrt{2\pi}}ds$. Let $\phi(x)=\frac{e^{-x^2/2}}{\sqrt{2\pi}}$. Then we have
\begin{align*}
    \sigma'(x)&=\Phi(x)+x\phi(x)\\
    \sigma^{(n)}(x)&=x\phi^{(n-1)}(x)+n\phi^{(n-2)}(x),\qquad n\ge2\\
    &=(-1)^{n-2}\left(nH_{n-2}(x)-xH_{n-1}(x)\right)\phi(x),\qquad n\ge2
\end{align*}
where $H_{n}(x)$ is the probabilist's Hermite polynomial.

By Taylor expansion,
\begin{align*}
    \sigma(v^\top u+a)=\sum_{n=0}^\infty \frac{\sigma^{(n)}(a)}{n!}(v^\top u)^n
\end{align*}

 Consider $\sigma(t\theta u)$ near $t=0$. Choose $0<t_1<t_2<\cdots<t_N\ll 1$. For probabilist's Hermite polynomials, 
 \begin{align*}
     H_{2n}(0)\ne 0, H_{2n-1}(0)=0,\ \forall n\ge 1.
 \end{align*}
 Thus, 
 \begin{align*}
     \sigma(0)=0,\qquad
     \sigma'(0)=\frac{1}{2},\qquad
     \sigma^{(2q)}(0)\ne 0,\ \forall q\ge 1,\qquad
     \sigma^{(2q+1)}(0)=0,\ \forall q\ge 1.
 \end{align*}
 
Let $n_1=1$ and $n_q=2q-2$ for $q=2,\cdots,N+1$.
Let $\theta_{L-1,1,j}$ be the $j$th row of $\theta_{L-1,1}$. Then
\begin{align*}
    \begin{pmatrix}
        \sigma(t_1\theta_{L-1,1,j} u)\\
        \vdots\\
        \sigma(t_N\theta_{L-1,1,j} u)
    \end{pmatrix}=\begin{pmatrix}
        \sum_{n=0}^\infty \frac{\sigma^{(n)}(0)}{n!}(t_1\theta_{L-1,1,j} u)^n\\
        \vdots\\
         \sum_{n=0}^\infty \frac{\sigma^{(n)}(0)}{n!}(t_N\theta_{L-1,1,j} u)^n
    \end{pmatrix}
    =\sum_{q=1}^{N} \frac{\sigma^{(n_q)}(0)}{n_q!}(\theta_{L-1,1,j} u)^{n_q}\begin{pmatrix}
        t_1^{n_q}\\
        \vdots\\
        t_N^{n_q}
    \end{pmatrix}+\mathcal{O}\begin{pmatrix}
        t_1^{n_{N+1}}\\
        \vdots\\
        t_N^{n_{N+1}}
    \end{pmatrix}
\end{align*}

Note 
\begin{align*}
    \det\begin{pmatrix}
        t_1^{n_1}&\cdots &t_1^{n_N}\\
        \vdots&&\vdots\\
        t_N^{n_1}&\cdots&t_N^{n_N}
    \end{pmatrix}\ne0.
\end{align*}
Thus the $N$ vectors $\begin{pmatrix}
        t_1^{n_q}\\
        \vdots\\
        t_N^{n_q}
    \end{pmatrix}$, $q=1,\cdots,N$, are linearly independent.

Let
$$
    G(z_1,\cdots,z_N)=\det\begin{pmatrix}
        z_1^{n_1}&\cdots&z_1^{n_N}\\
        \vdots&&\vdots\\
        z_N^{n_1}&\cdots&z_N^{n_N}
    \end{pmatrix}.
$$
Then $G$ is a nonzero polynomial, since for distinct positive $z_1,\cdots,z_N$, the generalized Vandermonde matrix above is nonsingular. Let $e_1$ be the first standard basis vector in $\RR^d$. The polynomial
$$
    \theta_{L-1,1}\mapsto G(\theta_{L-1,1,1}e_1,\cdots,\theta_{L-1,1,N}e_1)
$$
is not identically zero. Hence its zero set, denoted by $\cB_{\theta_{L-1,1}}$, has measure zero in $\RR^{\mathrm{dim}\theta_{L-1,1}}$. Outside this exceptional set, we choose $u=e_1$, and then
\begin{align*}
     &\det\begin{pmatrix}
   \frac{\sigma^{(n_1)}(0)}{n_1!}(\theta_{L-1,1,1} u)^{n_1}&\cdots& \frac{\sigma^{(n_N)}(0)}{n_N!}(\theta_{L-1,1,1} u)^{n_N}\\
    \vdots&&\vdots\\
     \frac{\sigma^{(n_1)}(0)}{n_1!}(\theta_{L-1,1,N} u)^{n_1}&\cdots&\frac{\sigma^{(n_N)}(0)}{n_N!}(\theta_{L-1,1,N} u)^{n_N}
\end{pmatrix}\\
&=\prod_{q=1}^N \frac{\sigma^{(n_q)}(0)}{n_q!}\det\begin{pmatrix}
  (\theta_{L-1,1,1} u)^{n_1}&\cdots& (\theta_{L-1,1,1} u)^{n_N}\\
    \vdots&&\vdots\\
     (\theta_{L-1,1,N} u)^{n_1}&\cdots&(\theta_{L-1,1,N} u)^{n_N}
\end{pmatrix}\ne 0.
\end{align*}
Combining this with the Taylor expansion, the determinant of the first $N$ columns of the feature matrix has leading term
\begin{align*}
    \det\begin{pmatrix}
        t_1^{n_1}&\cdots&t_1^{n_N}\\
        \vdots&&\vdots\\
        t_N^{n_1}&\cdots&t_N^{n_N}
    \end{pmatrix}
    \prod_{q=1}^N\frac{\sigma^{(n_q)}(0)}{n_q!}
    \det\begin{pmatrix}
      (\theta_{L-1,1,1} u)^{n_1}&\cdots&(\theta_{L-1,1,1} u)^{n_N}\\
        \vdots&&\vdots\\
      (\theta_{L-1,1,N} u)^{n_1}&\cdots&(\theta_{L-1,1,N} u)^{n_N}
    \end{pmatrix},
\end{align*}
which is nonzero. Therefore, by choosing $0<t_1<t_2<\cdots<t_N$ sufficiently small, the determinant is nonzero.
Therefore, 
    $\begin{pmatrix}
        \sigma(t_1\theta_{L-1,1} u)^\top\\
        \vdots\\
        \sigma(t_N\theta_{L-1,1} u)^\top
    \end{pmatrix}$
is full rank $N$, and then $\begin{pmatrix}
    Q_1\\\vdots\\Q_N
\end{pmatrix}$ is full rank $Nd$. 

In the normalized case, the invertibility of $P_i$ can be justified as follows. Since
\begin{align*}
    \nabla_u\tau_\epsilon(u)
    =\frac{1}{(\|u\|^2+\epsilon^2)^{1/2}}I
    -\frac{uu^\top}{(\|u\|^2+\epsilon^2)^{3/2}},
\end{align*}
$\nabla_u\tau_\epsilon(u)$ has eigenvalue $(\|u\|^2+\epsilon^2)^{-1/2}$ on $u^\perp$ and eigenvalue $\epsilon^2(\|u\|^2+\epsilon^2)^{-3/2}$ in the radial direction. Therefore, $\nabla_u\tau_\epsilon(u)$ is invertible for every $u$, and $P_i=\theta_L\nabla_u\tau_\epsilon(u_{L,i})$ is invertible whenever $\theta_L$ is invertible.

Also, since $\Leb_{\mathrm{dim}\theta_L}(\{\theta\in\RR^{d\times d}:\det(\theta)=0\})=0$, $P_i$ is invertible for any $u_{L,i}$, and $\begin{pmatrix}
    P_1&&\\
    &\ddots&\\
    &&P_N
\end{pmatrix}$ is invertible. Thus
\begin{align*}
    \begin{pmatrix}
        \nabla _{\tau_\epsilon}\varphi_{L}(\tau_\epsilon(u_{L,1}))\nabla_{u}\tau_\epsilon(u_{L,1})\nabla_{\theta_{L-1,2}}{\varphi}(u_{L-1,1})\\
        \vdots\\
        \nabla _{\tau_\epsilon}\varphi_{L}(\tau_\epsilon(u_{L,N}))\nabla_{u}\tau_\epsilon(u_{L,N})\nabla_{\theta_{L-1,2}}{\varphi}(u_{L-1,N})
    \end{pmatrix}=\begin{pmatrix}
    P_1&&\\
    &\ddots&\\
    &&P_N
\end{pmatrix}\begin{pmatrix}
    Q_1\\\vdots\\Q_N
\end{pmatrix}
\end{align*}
is full rank $Nd$, where $u_{L-1,i}=t_i u$.
\end{proof}

\begin{lemma}
\label{lem:example_exist_Nd_full_rank}
    The following structure satisfies Assumption~\ref{assump:architecture} with $\bar\ell=L-1$:
    $$
        \varphi_L(\theta;u)=\theta_L u,\qquad
        \varphi_{L-1}(\theta;u)=\theta_{L-1,2}\sigma(\theta_{L-1,1}u),
    $$
    where $\sigma(\cdot)$ is GELU, $\theta_L\in\RR^{d\times d}$, $\theta_{L-1,1}\in\RR^{m_{L-1}\times d}$, $\theta_{L-1,2}\in\RR^{d\times m_{L-1}}$, and $m_{L-1}>Nd$.
\end{lemma}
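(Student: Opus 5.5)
The plan is to repeat the argument of Lemma~\ref{lem:example_2_exist_Nd_full_rank}, with the normalization $\tau_\epsilon$ removed, which only simplifies the factor $P_i$. Take $\bar\ell=L-1$. By~\eqref{eqn:NN} the remaining map is
\[
\tilde f_{L-1}(\theta;u)=\theta_L\big(u+\theta_{L-1,2}\sigma(\theta_{L-1,1}u)\big).
\]
The parameter block $\theta_{L-1}=(\theta_{L-1,1},\theta_{L-1,2})$ has dimension $2dm_{L-1}>Nd$, so the dimension requirement holds. For full row rank it suffices to exhibit a full-row-rank $Nd$-row submatrix of the stacked Jacobian. I would restrict to the columns belonging to $\theta_{L-1,2}$, vectorized row-wise. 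There, for each sample,
\[
\nabla_{\theta_{L-1,2}}\tilde f_{L-1}(\theta;u_i)=P_iQ_i ,
\]
where $P_i=\theta_L$ and $Q_i=\mathrm{diag}\big(\sigma(\theta_{L-1,1}u_i)^\top,\cdots,\sigma(\theta_{L-1,1}u_i)^\top\big)$ is block diagonal with $d$ copies. The stacked Jacobian therefore factors as $\mathrm{diag}(\theta_L,\cdots,\theta_L)$ times the stacked $Q_i$.

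\textbf{Step 1.} Exclude the null set $\{\det\theta_L=0\}$. Outside it, the block-diagonal left factor is invertible, so the claim reduces to showing that the stacked $Q_i$ has rank $Nd$.

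\textbf{Step 2.} Reorder rows by output coordinate. The stacked $Q_i$ is then block diagonal with $d$ copies of the $N\times m_{L-1}$ feature matrix
\[
F=\begin{pmatrix}\sigma(\theta_{L-1,1}u_1)^\top\\ \vdots\\ \sigma(\theta_{L-1,1}u_N)^\top\end{pmatrix}.
\]
So it suffices to prove $\operatorname{rank}F=N$ for a suitable choice of tuple $(u_1,\dots,u_N)$.

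\textbf{Step 3.} Set $u_i=t_ie_1$ with small distinct $0<t_1<\cdots<t_N$, and restrict to the first $N$ columns of $F$, which uses $m_{L-1}>Nd\ge N$. Expanding GELU around $0$ as in the previous proof gives $\sigma(0)=0$, $\sigma'(0)=1/2$, $\sigma^{(2q)}(0)\neq0$ and $\sigma^{(2q+1)}(0)=0$ for $q\ge1$. Write $a_j=(\theta_{L-1,1})_{j1}$ and use the exponents $n_1=1$, $n_q=2q-2$. The $N\times N$ minor $\big(\sigma(t_ia_j)\big)_{i,j\le N}$ then factors, up to higher-order terms in $t$, as
\[
\big(t_i^{n_q}\big)_{i,q}\cdot\mathrm{diag}\!\left(\frac{\sigma^{(n_q)}(0)}{n_q!}\right)_{q}\cdot\big(a_j^{n_q}\big)_{q,j}.
\]
Its determinant has leading term
\[
\det\big(t_i^{n_q}\big)\prod_q\frac{\sigma^{(n_q)}(0)}{n_q!}\,G(a_1,\dots,a_N),
\]
with $G$ the same generalized Vandermonde polynomial as before.

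\textbf{Step 4.} Exclude the null set where $G(a_1,\dots,a_N)=0$. This is the zero set of a nonzero polynomial, hence of measure zero. Outside it the leading term is nonzero, and it dominates the remainder for sufficiently small $t_i$, so $F$ has rank $N$.

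The exceptional parameter set is then $\{\det\theta_L=0\}\cup\{G=0\}$. It has measure zero in $(\theta_{L-1},\theta_L)$, which gives the claim.

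The main obstacle is Step 3: one must justify that the leading term dominates. This means checking that every other term of the determinant expansion carries a strictly higher total power of the $t_i$. I would handle it by taking $t_i=s\tau_i$ with fixed distinct $\tau_i$ and letting $s\to0$. The determinant is then $s^{\sum_q n_q}$ times a constant plus $o\big(s^{\sum_q n_q}\big)$, because the minimal exponent multiset is attained only by the chosen $n_q$.

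A cleaner alternative, if that bookkeeping proves awkward, is analyticity. The minor is analytic in the $u_i$, and a generic choice of tuple can be shown not to vanish identically by an independence-of-monomials argument. Any single nonvanishing tuple suffices for the assumption.
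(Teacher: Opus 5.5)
Your proposal is correct and follows the paper's proof essentially step for step: restrict to the $\theta_{L-1,2}$ columns, factor the stacked Jacobian as $\mathrm{diag}(\theta_L,\dots,\theta_L)$ times the stacked $Q_i$, and discard $\{\det\theta_L=0\}$. Rank $Nd$ of the stacked $Q_i$ then comes from the GELU Taylor expansion at $u_i=t_ie_1$, using the exponents $n_1=1$, $n_q=2q-2$ and the generalized Vandermonde polynomial $G$ in the first-column entries of $\theta_{L-1,1}$. Your scaling $t_i=s\tau_i$, together with the fact that $\{n_1,\dots,n_N\}$ uniquely minimizes the total exponent in the Cauchy--Binet expansion, justifies leading-term dominance more carefully than the paper's $\cO(t^{n_{N+1}})$ remainder, but the argument is the same.
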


\begin{proof}[Proof of Lemma~\ref{lem:example_exist_Nd_full_rank}]
First, we compute the Jacobian of the neural network w.r.t. $\theta_{L-1,2}$
\begin{align*}
     &\nabla_{\theta_{L-1,2}}f(\theta;x_i)\\
     &=\nabla _{u}\varphi_{L}(u_{L,i})\nabla_{\theta_{L-1,2}}{\varphi}(u_{L-1,i})\\
     &=\underbrace{{\theta_L}}_{P_i}\underbrace{\begin{pmatrix}
         \sigma(\theta_{L-1,1}u_{L-1,i} )^\top &&\\
         & \ddots &\\
         & &\sigma(\theta_{L-1,1}u_{L-1,i} )^\top
     \end{pmatrix}}_{Q_i}
\end{align*}
where in $\nabla_{\theta_{L-1,2}}{\varphi}(u_{L-1,i})$, we vectorize $\theta_{L-1,2}$ in row.

From the same argument in the proof of Lemma~\ref{lem:example_2_exist_Nd_full_rank}, we have $\begin{pmatrix}
    Q_1\\\vdots\\Q_N
\end{pmatrix}$ is full rank $Nd$.

Also, since $P_i=\theta_L$ is invertible for all $\theta_L$ except for a measure-zero set, we have $\begin{pmatrix}
    P_1&&\\
    &\ddots&\\
    &&P_N
\end{pmatrix}$ is invertible. Thus
\begin{align*}
    \begin{pmatrix}
        \nabla _{u}\varphi_{L}(u_{L,1})\nabla_{\theta_{L-1,2}}{\varphi}(u_{L-1,1})\\
        \vdots\\
        \nabla _{u}\varphi_{L}(u_{L,N})\nabla_{\theta_{L-1,2}}{\varphi}(u_{L-1,N})
    \end{pmatrix}=\begin{pmatrix}
    P_1&&\\
    &\ddots&\\
    &&P_N
\end{pmatrix}\begin{pmatrix}
    Q_1\\\vdots\\Q_N
\end{pmatrix}
\end{align*}
is full rank $Nd$, where $u_{L-1,i}=t_i u$ and $P_i=\theta_L$.
\end{proof}

\begin{proof}[Proof of Lemma~\ref{lem:example_exist_Nd_full_rank_bias}]
First, we compute the Jacobian of the neural network w.r.t. $\theta_{L-1,2}$. For each $i$,
\begin{align*}
     &\nabla_{\theta_{L-1,2}}\tilde f_{L-1}(\theta;u_{L-1,i})\\
     &=\nabla _u\varphi_L(u_{L,i})\nabla_{\theta_{L-1,2}}\varphi_{L-1}(u_{L-1,i})\\
     &=\underbrace{\theta_L}_{P_i}\underbrace{\begin{pmatrix}
         \sigma(\theta_{L-1,1}u_{L-1,i}+b_{L-1,1})^\top &&\\
         & \ddots &\\
         & &\sigma(\theta_{L-1,1}u_{L-1,i}+b_{L-1,1})^\top
     \end{pmatrix}}_{Q_i}
\end{align*}
where in $\nabla_{\theta_{L-1,2}}\varphi_{L-1}(u_{L-1,i})$, we vectorize $\theta_{L-1,2}$ in row.

We use the same GELU derivative facts and the same index set as in the proof of Lemma~\ref{lem:example_2_exist_Nd_full_rank}: let $n_1=1$ and $n_q=2q-2$ for $q=2,\cdots,N+1$, so that $\sigma^{(n_q)}(0)\ne0$ for $q=1,\cdots,N$.

It remains to show that the feature matrix produced by the biased hidden layer has rank $N$ for a suitable tuple $(u_{L-1,1},\cdots,u_{L-1,N})$, except for a measure-zero set of $(\theta_{L-1,1},b_{L-1,1})$. As in the proof of Lemma~\ref{lem:example_exist_Nd_full_rank}, choose distinct positive numbers $t_1,\cdots,t_N$ and $a_1,\cdots,a_N$, and let $e_1$ be the first standard basis vector in $\RR^d$. For $u_{L-1,i}=t_i e_1$, consider the parameter choice
$$
    b_{L-1,1,j}=0,\qquad
    \theta_{L-1,1,j}=\delta a_j e_1^\top,\qquad j=1,\cdots,N,
$$
where $\delta>0$ is sufficiently small. This special choice is used only to prove that the determinant below is not identically zero. For the $N\times N$ minor formed by the first $N$ hidden units, Taylor expansion gives
\begin{align*}
    \det\left(\sigma(\delta a_jt_i)\right)_{i,j=1}^N
    &=
    \delta^{\sum_{q=1}^N n_q}
    \left(\prod_{q=1}^N\frac{\sigma^{(n_q)}(0)}{n_q!}\right)
    \det\left(t_i^{n_q}\right)_{i,q=1}^N
    \det\left(a_j^{n_q}\right)_{j,q=1}^N\\
    &\qquad+o\left(\delta^{\sum_{q=1}^N n_q}\right).
\end{align*}
The two generalized Vandermonde determinants above are nonzero, and hence, for sufficiently small $\delta>0$,
$$
    \det\left(\sigma(\delta a_jt_i)\right)_{i,j=1}^N\ne0.
$$

Now fix such a tuple $u_{L-1,i}=t_i e_1$ and define the analytic function
$$
    D(\theta_{L-1,1},b_{L-1,1})
    =
    \det\left(\sigma(\theta_{L-1,1,j}u_{L-1,i}+b_{L-1,1,j})\right)_{i,j=1}^N,
$$
where $\theta_{L-1,1,j}$ denotes the $j$th row of $\theta_{L-1,1}$. The calculation above shows that $D$ is not identically zero. By Theorem~\ref{thm:zero_set_analytic_function}, its zero set has measure zero. Therefore, except for a measure-zero set of $(\theta_{L-1,1},b_{L-1,1})$, the matrix
$$
    S=
    \begin{pmatrix}
        \sigma(\theta_{L-1,1}u_{L-1,1}+b_{L-1,1})^\top\\
        \vdots\\
        \sigma(\theta_{L-1,1}u_{L-1,N}+b_{L-1,1})^\top
    \end{pmatrix}
$$
has rank $N$. After a permutation of rows, the stacked matrix $\begin{pmatrix}Q_1\\ \vdots\\ Q_N\end{pmatrix}$ is block diagonal with $d$ copies of $S$, and hence it has rank $Nd$. Finally, as in the proof of Lemma~\ref{lem:example_exist_Nd_full_rank}, $P_i=\theta_L$ is invertible for all $\theta_L$ except for a measure-zero set. Therefore the stacked Jacobian with respect to $\theta_{L-1,2}$ has full row rank $Nd$. Since this is a submatrix of the stacked Jacobian with respect to $\theta_{L-1}$, the full stacked Jacobian also has full row rank $Nd$. The exceptional set is still measure zero after including the remaining parameters $\theta_{L-1,2}$ and $b_{L-1,2}$, which do not affect the determinant $D$. Thus Assumption~\ref{assump:architecture} holds.
\end{proof}

\subsection{Example of Model Assumptions}
\label{app:example_model_assumptions}

\begin{proof}[Proof of Lemma~\ref{lem:prenorm_transformer_model_assumptions}]
The proof is a direct consequence of Lemmas~\ref{cor:poly_smooth_with_NN},~\ref{cor:analytic_with_NN}, and~\ref{lem:example_exist_Nd_full_rank_bias}, together with the block decompositions in~\eqref{eqn:ex_block_ff} and~\eqref{eqn:ex_block_mha}. Indeed, by Lemma~\ref{cor:analytic_with_NN}, layer normalization, softmax, feedforward blocks, and finite compositions of these functions are real-analytic, so Assumption~\ref{assump:analytic} holds. By Lemma~\ref{cor:poly_smooth_with_NN}, the same building blocks satisfy polynomial boundedness, polynomial generalized continuity, and polynomial generalized smoothness, and therefore Assumption~\ref{assump:generalized_smoothness_detail} holds for their finite composition. Finally, Assumption~\ref{assump:architecture} is verified by Lemma~\ref{lem:example_exist_Nd_full_rank_bias} when the last feedforward block has hidden width $m_{L-1}>Nd$. Hence the pre-normalization transformer satisfies all model assumptions in Section~\ref{subsec:model_assumptions}.
\end{proof}

\section{Optimization: Proof of Theorem~\ref{thm:training}}
\label{app:optimization}

We recall that the architecture of neural network is defined as follows
\begin{align*}
    u_{0,i}&=x_i\\
    u_{\ell+1,i}&=u_{\ell,i}+{\varphi}_\ell(\theta;u_{\ell,i}),\ \ell=0,\cdots,L-1\\
    f(x_i)&={\varphi}_L(\theta;u_{L,i}).
\end{align*}
The loss function of each data point is
\begin{align*}
    l(f(x_i),y_i)=\frac{1}{2}\|y_i-f(x_i)\|^2
\end{align*}
and the total loss is
\begin{align*}
    \cL(f(x_i),y_i)=\frac{1}{N}\sum_{i=1}^Nl(f(x_i),y_i).
\end{align*}

We then compute the Jacobian of the following maps that will be used in the proof:
\begin{align*}
    \nabla_f\,l(x_i)=(f(x_i)-y_i)^\top
\end{align*}
\begin{align*}
    \nabla_{\theta_j}f(x_i)&=\sum_{\ell\in\cJ_j}\nabla_u{\varphi}_L(u_{L,i})(I+\nabla_u{\varphi}_{L-1}(u_{L-1,i}))\cdots (I+\nabla_u{\varphi}_{\ell+1}(u_{\ell+1,i}))\nabla_{\theta_j}{\varphi}_{\ell}(u_{\ell,i};\theta)
\end{align*}
where $\nabla_u{\varphi}_{\ell}(u_{\ell})$ is the Jacobian of the map ${\varphi}_{\ell}(u_{\ell})$ with respect to $u_\ell$, $\nabla_{\theta_\ell}f(x_i)$ is also the Jacobian; the set $\cJ_j$ denotes the indices of layers in which $\theta_j$ appears.

Then
\begin{align*}
\nabla_{\theta_j}l(x_i)&=\nabla_f \,l(x_i)\nabla_{\theta_\ell}f(x_i)\\
    &=\sum_{\ell\in\cJ_j}\nabla_f \,l(x_i)\nabla_u{\varphi}_L(u_{L,i})(I+\nabla_u{\varphi}_{L-1}(u_{L-1,i}))\cdots (I+\nabla_u{\varphi}_{\ell+1}(u_{\ell+1,i}))\nabla_{\theta_j}{\varphi}_{\ell}(u_{\ell,i};\theta)
\end{align*}
and we have
\begin{align*}
    \nabla_{\theta_j}\cL=\frac{1}{N}\sum_{i=1}^N\nabla_{\theta_j}l(x_i).
\end{align*}
As shown in equation~\eqref{eqn:NN}, we consider the case where the weights at each layer are distinct, i.e., $\cJ_j=\{j\}$.

In this section, we also define the following function for any matrix function $M(x)\in \RR^{n\times m}$ with $m\ge n$
\begin{align*}
    \detr(M(x))=\sum_{i=1}^{m\choose n}(\det M_i(x))^2,
\end{align*}
where $M_i(x)\in\RR^{n\times n}$ is the matrix with $n$ columns of $M(x)$. Then ${\det}_{\rm r}(M(x))\ne 0$ if and only if $M(x)$ is full rank at $x$.

The following is a more complete version of the convergence result in Theorem~\ref{thm:training}. It also records the finite-step decay bound and explicitly excludes only finitely many exceptional learning rate values.
\begin{theorem}
\label{thm:complete_convergence}
Suppose GD is applied to the neural network~\eqref{eqn:NN} under the following requirements, with probability 1 over the joint distribution $\pi$ of the input data $x_1,\cdots,x_N$:
\begin{enumerate}
    \item \textbf{Data and Model Assumptions.} Assumptions~\ref{assump:data_distribution_absolutely_continuous_Lebesgue_ground_truth},~\ref{assump:architecture},~\ref{assump:generalized_smoothness_detail}, and~\ref{assump:analytic} hold.
    \item \textbf{Initialization.} The initialization $\theta^0$ belongs to $\RR^{\dim \theta}$ except for a measure-zero set.
\end{enumerate}
Then, under some arbitrarily small adjustment on the scale of $\varphi_\ell$ for $\ell=0,\cdots,L-1$,
the following conclusions hold.
\begin{enumerate}
    \item \textbf{Finite-Time Convergence Bound.} Let the learning rate $\eta$ be chosen from the interval
    $$
        0<\eta\le \min\left\{\frac{2-\delta}{\mathsf L_1},1\right\},
    $$
    excluding at most finitely many exceptional values,
    where
    $$
        \mathsf{L}_1
        =
        S\left(\cdots,\|\theta_i^0\|+\sqrt{\frac{2C}{\delta}\cL(\theta^0)}
        +\eta\max_{\|\theta\|\le \|\theta^0\|+\sqrt{\frac{2C}{\delta}\cL(\theta^0)}}\|\nabla_{\theta_i}\cL(\theta)\|,\cdots\right),
    $$
    and $S(\cdot)$ is a polynomial whose coefficients are all positive. Then for all $k+1\le T=C/\eta$,
    $$
        \cL(\theta^k)
        \le
        \min\left\{
        \prod_{s=0}^{k-1}\left(1-\eta\left(1-\frac{\eta\mathsf L_1}{2}\right)\frac{2\mu_{\mathrm{low},s,X}}{N}\right),
        \left(1-\eta\left(1-\frac{\eta\mathsf L_1}{2}\right)\frac{\epsilon_\cL^2}{\cL(\theta^0)}\right)^k
        \right\}\cL(\theta^0),
    $$
    where $\mu_{\mathrm{low},s,X}>0$ depends on $\theta^s$ and $x_1,\cdots,x_N$, and $C>0$ is a universal constant.
    \item \textbf{Convergence Under a Local Dissipative Condition.} Assume, in addition, that the loss $\cL(\theta)$ satisfies the $(\theta^0,\theta^*,R,\rho,\epsilon_\cL)$-dissipative condition in Definition~\ref{def:dissipative}, where $\theta^*$ is a stationary point, $0<\delta<2$, $\rho\in\RR$, $\epsilon_\cL\ge0$, and
    $$
        R=\sqrt{\|\theta^0-\theta^*\|^2+\max\left\{\frac{4\rho+2}{\delta},0\right\}\cL(\theta^0)}.
    $$
    Let the learning rate $\eta$ be chosen from the interval
    $$
        0<\eta\le \min\left\{\frac{2-\delta}{\mathsf L_2},1\right\},
    $$
    excluding at most finitely many exceptional values,
    where
    $$
        \mathsf{L}_2
        =
        S\left(\cdots,\|\theta_i^*\|+R+\max_{\theta\in B_{\theta^*}(R)}\|\nabla_{\theta_i}\cL(\theta)\|,\cdots\right),
    $$
    and $S(\cdot)$ is a polynomial whose coefficients are all positive. Then GD converges to the region $\|\nabla\cL(\theta)\|\le\epsilon_\cL$. Moreover, for all $k\ge1$ such that $\|\nabla\cL(\theta^k)\|>\epsilon_\cL$,
    $$
        \cL(\theta^k)
        \le
        \min\left\{
        \prod_{s=0}^{k-1}\left(1-\eta\left(1-\frac{\eta\mathsf L_2}{2}\right)\frac{2\mu_{\mathrm{low},s,X}}{N}\right),
        \left(1-\eta\left(1-\frac{\eta\mathsf L_2}{2}\right)\frac{\epsilon_\cL^2}{\cL(\theta^0)}\right)^k
        \right\}\cL(\theta^0),
    $$
    where $\mu_{\mathrm{low},s,X}>0$ depends on $\theta^s$ and $x_1,\cdots,x_N$.
\end{enumerate}
\end{theorem}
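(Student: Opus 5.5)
The proof combines three ingredients: a descent step from polynomial generalized smoothness (Lemma~\ref{lem:smoothness_inequalities}), a PL-type lower bound from generic full rank of the stacked Jacobian, and a trajectory-confinement argument. The PL-type bound needs positivity of $\mu_{\mathrm{low},s,X}$ at every iterate, which is the part of Lemma~\ref{lem:mu_low_along_gd} I plan to establish here.

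\textbf{Smoothness of $\cL$.} By Proposition~\ref{prop:poly_generalized_properties} and Assumption~\ref{assump:generalized_smoothness_detail}, each $u_{\ell,i}$, the output $f(\theta;x_i)$, and hence $\cL$ are poly-smooth in $\theta$, with a polynomial $S$ whose coefficients are uniform for $x_i\in\bar\Omega$ (bounded data). Lemma~\ref{lem:smoothness_inequalities} then gives
\begin{equation*}
\cL(\theta^{k+1})\le \cL(\theta^k)-\eta\Bigl(1-\tfrac{\eta S_k}{2}\Bigr)\|\nabla\cL(\theta^k)\|^2 ,
\end{equation*}
where $S_k$ is $S$ evaluated at $\max\{\|\theta^k_i\|,\|\theta^{k+1}_i\|\}$. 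Since $\|\theta^{k+1}_i\|\le\|\theta^k_i\|+\eta\|\nabla_{\theta_i}\cL(\theta^k)\|$ and $\eta\le1$, we get $S_k\le\mathsf L$ as long as $\theta^k$ stays in the relevant ball.

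\textbf{Confinement.} For part 1, summing the descent inequality with $\eta S_k\le2-\delta$ gives $\sum_k\eta\|\nabla\cL\|^2\le\tfrac{2}{\delta}\cL(\theta^0)$. By Cauchy--Schwarz,
\begin{equation*}
\|\theta^k-\theta^0\|\le\sqrt{k\eta}\,\sqrt{\tfrac{2}{\delta}\cL(\theta^0)}\le\sqrt{\tfrac{2C}{\delta}\cL(\theta^0)}\quad\text{for } k\le C/\eta .
\end{equation*}
This closes an induction certifying $S_k\le\mathsf L_1$.

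For part 2, expand
\begin{equation*}
\|\theta^{k+1}-\theta^*\|^2=\|\theta^k-\theta^*\|^2-2\eta\nabla\cL(\theta^k)^\top(\theta^k-\theta^*)+\eta^2\|\nabla\cL(\theta^k)\|^2 .
\end{equation*}
While $\|\nabla\cL(\theta^k)\|>\epsilon_\cL$ and $\theta^k\in B_{\theta^*}(R)$, the dissipative condition bounds the increment by $(2\rho+\eta)\eta\|\nabla\cL(\theta^k)\|^2\le(2\rho+1)\eta\|\nabla\cL(\theta^k)\|^2$. Telescoping and using $\sum\eta\|\nabla\cL\|^2\le\frac{2}{\delta}\cL(\theta^0)$ yields $\|\theta^k-\theta^*\|^2\le\|\theta^0-\theta^*\|^2+\max\{\frac{4\rho+2}{\delta},0\}\cL(\theta^0)=R^2$. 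One subtlety needs care: the bound must be shown before stepping outside the ball. I handle this by strong induction on $k$, using that $\theta^{k+1}$ is controlled by the same telescoped sum up to step $k$, and $\mathsf L_2$ already covers $\theta^{k+1}$ since it includes $\max_{B_{\theta^*}(R)}\|\nabla\cL\|$. Summability of $\|\nabla\cL(\theta^k)\|^2$ forces eventual entry into $\{\|\nabla\cL\|\le\epsilon_\cL\}$. The second rate in the min follows from $\|\nabla\cL\|^2>\epsilon_\cL^2\ge\epsilon_\cL^2\cL(\theta^k)/\cL(\theta^0)$.

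\textbf{PL-type rate.} Write $\nabla\cL=\frac1N J^\top r$, with $J$ the stacked Jacobian and $r$ the residual. Then $\|\nabla\cL\|^2\ge\frac{1}{N^2}\mu_{\mathrm{low},\theta,X}\|r\|^2=\frac{2\mu}{N}\cL$, and substituting into the descent inequality gives the product bound. It remains to show $\mu_{\mathrm{low},s,X}>0$ for all $s$.

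\emph{Generic full rank.} By Assumption~\ref{assump:architecture}, at layer $\bar\ell$ there are hidden states $(u_1,\dots,u_N)$ making the stacked $\nabla_{\theta_{\bar\ell}}\tilde f_{\bar\ell}$ of rank $Nd$. I need to realize these hidden states from actual inputs. Choosing the scales of $\varphi_0,\dots,\varphi_{\bar\ell-1}$ by a small factor $\alpha$ (the "arbitrarily small adjustment"), the map $x\mapsto u_{\bar\ell}$ is near-identity, hence a local analytic diffeomorphism near generic points. Lemma~\ref{lem:full_rank_jacobian_mainbody} guarantees nondegeneracy for all but finitely many scales. Then $\detr$ of the stacked Jacobian is analytic in $(\theta,X)$ and not identically zero. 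Theorem~\ref{thm:zero_set_analytic_function} plus Fubini gives a measure-zero bad set in $(\theta,X)$, and by $\pi\ll\Leb$ the bad $\theta$-slice $\cB$ has measure zero for a.e.\ $X$.

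\emph{Along the trajectory.} The GD map $G(\theta)=\theta-\eta\nabla\cL(\theta)$ is analytic. By Lemma~\ref{lem:full_rank_jacobian_mainbody}(2), for all but finitely many $\eta$ its Jacobian is nonsingular off a null set. Such a map pulls null sets back to null sets (countable union of local diffeomorphism pieces plus a null critical set). Hence $\cB_\theta=\bigcup_s G^{-s}(\cB\cup\mathrm{crit})$ is null, and $\theta^0\notin\cB_\theta$ gives $\mu_{\mathrm{low},s,X}>0$ for all $s$.

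\textbf{Main obstacle.} The hardest step is this last genericity argument: rigorously transferring the existence of one good tuple of hidden states to the nonvanishing of $\detr$ as an analytic function of $(\theta,X)$ through the early layers (where the scale adjustment is used), and establishing the null-preimage property of $G$. I would attempt both via non-identical-vanishing of the determinant of the Jacobian of the respective analytic maps.
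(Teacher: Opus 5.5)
Your descent, confinement, and PL steps are the paper's own argument (Lemmas~\ref{lem:smoothness_inequalities}, \ref{lem:loss_function_generalized_smoothness}, \ref{lem:bound_S_k_theta_k}, \ref{lem:gradient_lower_bound}). This includes the induction in which only $\theta^k\in B_{\theta^*}(R)$ is needed to control $S_k$. Your summability argument for eventually entering $\{\|\nabla\cL\|\le\epsilon_\cL\}$ is a tighter version of the paper's closing sentence.

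The genuine difference is in the genericity step.
\begin{itemize}
\item \textbf{The paper's route.} It fixes $\theta$ outside a null set and proves the bad data set is null. This is Lemma~\ref{lem:grad_frame}: an analytic zero set in hidden-state space $U$, pulled back through $X\mapsto U(X)$ by Lemma~\ref{lem:preimage_u_measure_zero}, which is where the scale adjustment enters. It then removes $\bigcup_k\Psi^{-k}(\cB_{\theta,0})$ in $\theta$-space. Separately, it removes a countable union of data sets $\cB_{X,k}$, obtained by substituting $X\mapsto\theta^k(X)$ into the determinant.
\item \textbf{Your route.} You reverse the quantifiers. You treat $\detr J_X(\theta)$ as a single analytic function of $(\theta,X)$. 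One good point together with Theorem~\ref{thm:zero_set_analytic_function} and Fubini then shows that, for $\pi$-a.e.\ $X$, the bad $\theta$-slice is null. After that, the whole trajectory argument runs with $X$ frozen.
\end{itemize}

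What each buys: your order is exactly the one the theorem's quantifiers require. It also avoids the paper's substitution step, which needs, but does not verify, that $X\mapsto\detr J_X(\theta^k(X))$ is not identically zero. The paper's order gives in exchange the standalone statement ``fixed generic $\theta$, generic $X$'' recorded in Lemma~\ref{lem:mu_low_along_gd}.

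Your route still needs one piece of the paper's preimage argument to certify the good point. The tuple supplied by Assumption~\ref{assump:architecture} lives in hidden-state space, not data space. So you must combine two facts:
\begin{itemize}
\item the good $U$-set has full measure, since it is the complement of the zero set of a not-identically-zero analytic function;
\item $U(\Omega^N)$ has positive measure, i.e.\ $x\mapsto u_{\bar\ell}(x)$ has a nonsingular Jacobian somewhere, which is what the scale adjustment provides.
\end{itemize}
You sketch this correctly. Finally, apply the zero-set theorem on each connected component of $\RR^{\dim\theta}\times W^N$, where $W$ is the neighbourhood of $\bar\Omega$ on which $\varphi_0$ is analytic.
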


\begin{proof}
Define the GD iteration map to be 
\begin{align*}
    \Psi(\theta)=\theta-\eta\nabla_\theta\cL(\theta),\text{ and }\Psi^k(\theta)=\underbrace{\Psi\circ\cdots\circ\Psi}_{k}(\theta).
\end{align*}

First, consider fixed $x_1,\cdots,x_N$ and note that all the sets of $\theta$ discussed below depend on the data $x_1,\cdots,x_N$. Let
\begin{align*}
    \cB_{\theta,\rm rank}=\{\theta:\nabla_\theta\Psi(\theta)\text{ is not full rank}\}
\end{align*}

By Lemma~\ref{lem:full_rank_jacobian}, applied to $\Psi(\theta)=\theta+\eta(-\nabla_\theta\cL(\theta))$, we know that for $\eta\in \cD_{\eta,0}$, where $\cD_{\eta,0}$ is the relevant learning rate interval $(0,\min\{\frac{2-\delta}{\mathsf L_i},1\})$ excluding finitely many values, we have
\begin{align*}
    \Leb_{\dim\theta}(\cB_{\theta,\rm rank})=0.
\end{align*}
Next we show that along the trajectory, i.e., for any $k$, the union of ``bad" sets of the initial condition $\theta$ leading to the degeneracy of the map $\Psi^k$ is a measure-zero set.

Now consider any measure-zero set $\cB_{\theta,0}$ in $\RR^{\dim\theta}$. By Theorem~\ref{thm:inverse_map_null_set}, 
\begin{align*}
    \Leb_{\dim\theta}(\Psi^{-1}(\cB_{\theta,0}))=0.
\end{align*}

We then prove that $(\Psi^k)^{-1}(\cB_{\theta,0})$ is measure-zero for any $k\ge 1$. Suppose $(\Psi^k)^{-1}(\cB_{\theta,0})$ is measure-zero. We show that $(\Psi^{k+1})^{-1}(\cB_{\theta,0})$ is measure zero. This is obvious since $(\Psi^{k+1})^{-1}(\cB_{\theta,0})=\Psi^{-1}((\Psi^{k})^{-1}(\cB_{\theta,0}))$.

Let $\cB_{\theta,0}$ be the union of all the bad measure-zero sets of $\theta$ in Assumption~\ref{assump:architecture} and Lemmas~\ref{lem:gradient_lower_bound}, including $\cB_{\theta,\rm rank}$, and there are finite union of such sets, which implies $\Leb_{\dim \theta}(\cB_{\theta,0})=0$. Let
\begin{align*}
    \cB_{\theta}=\bigcup_{k=0}^\infty (\Psi^{k})^{-1}(\cB_{\theta,0}),
\end{align*}
where $\Psi^0$ is the identity map.
Then
\begin{align*}
    \Leb_{\dim\theta}(\cB_{\theta})=\Leb_{\dim\theta}\left(\bigcup_{k=0}^\infty (\Psi^{k})^{-1}(\cB_{\theta,0}) \right)\le \sum_{k=0}^\infty 0=0,
\end{align*}
namely, the initial condition set of $\theta$ where there exists some iteration $k\ge0$ s.t. $\theta^k\in\cB_{\theta,0}$, is measure-zero. In the rest of the proof, we just consider $\theta^0\in \RR^{\dim \theta}\backslash \cB_{\theta}$.

Next we consider the bad set of $(x_1,\cdots,x_N)$. For each $k\ge0$, define the following set
\begin{align*}
    \cB_{X,k}:=\{X:\text{Lemma~\ref{lem:gradient_lower_bound} fails at }(\theta^k(X),X)\}.
\end{align*}
By the analytic zero-set argument in Lemma~\ref{lem:gradient_lower_bound}, after substituting the analytic map $X\mapsto\theta^k(X)$ into the corresponding determinant, we have $\Leb_{Nd}(\cB_{X,k})=0$. Let $\cB_X:=\bigcup_{k=0}^\infty\cB_{X,k}$. Since there are at most countably many iterates,
\begin{align*}
    \Leb_{Nd}(\bigcup_{k=0}^\infty\cB_{X,k})\le \sum_{k=0}^\infty 0=0.
\end{align*}

Since $(x_1,\cdots,x_N)\sim \pi(z_1,\cdots,z_N)dz_1\cdots dz_N$, and $\pi\ll \Leb_{Nd}$, we have
\begin{align*}
   \pi(\cB_X)=0,
\end{align*}
namely, with probability 0, $(x_1,\cdots,x_N)$ will be in this set.

Then, consider some initial conditions $\theta^0$ except for a measure-zero set, and consider some dataset $\{(x_i,y_i)\}_{i=1}^N$ chosen with probability 1 over the joint distribution. By Lemma~\ref{lem:loss_function_generalized_smoothness} and Lemma~\ref{lem:smoothness_inequalities}, we have for $i=1,2$,
    \begin{align*}
    \cL(\theta^{k+1})&\le \cL(\theta^{k})+\nabla \cL(\theta^{k})^\top(\theta^{k+1}-\theta^{k})+\frac{S_{k}}{2}\|\theta^k-\theta^{k+1}\|^2\\
    &= \cL(\theta^{k})-\eta\left(1-\frac{\eta S_k}{2}\right)\|\nabla\cL(\theta^k)\|^2\\
    &\le \left(1- \eta\left(1-\frac{\eta S_k}{2}\right)\frac{2\mu_{\mathrm{low},k,X}}{N}\right)\cL(\theta^{k})\\
    &\le \left(1- \eta\left(1-\frac{\eta\, \mathsf{L}_i}{2}\right)\frac{2\mu_{\mathrm{low},k,X}}{N}\right)\cL(\theta^{k})\\
    &\le \prod_{s=0}^k\left(1- \eta\left(1-\frac{\eta\, \mathsf{L}_i}{2}\right)\frac{2\mu_{\mathrm{low},s,X}}{N}\right)\cL(\theta^{0})
\end{align*}
where the second inequality follows from Lemma~\ref{lem:gradient_lower_bound}, the third inequality follows from Lemma~\ref{lem:bound_S_k_theta_k} and $S(a_1,\cdots,a_{n_\theta})\le \mathsf{L}_i$ by the monotonicity of $S(\cdot)$ in $\RR_{\ge 0}\times\cdots\times\RR_{\ge 0}$; $\mu_{\mathrm{low},k,X}>0$ is some strictly positive constant depending on $\theta^k$ and $x_1,\cdots,x_N$.

By Lemma~\ref{lem:bound_S_k_theta_k}, when $i=1$, the above inequality holds for $k+1\le T=\frac{C}{\eta}$. When $i=2$ with the $(\theta^0,\theta^*,R,\rho,\epsilon_\cL)$-dissipative condition, it holds for any $k\ge 1$ such that $\|\nabla\cL(\theta^k)\|>\epsilon_\cL$.

Additionally, for all $\|\nabla\cL(\theta^k)\|>\epsilon_\cL$, we have
\begin{align*}
    \|\nabla\cL(\theta^k)\|^2\ge \frac{\epsilon_\cL^2}{\cL(\theta^0)}\cL(\theta^k),
\end{align*}
where the inequality follows from $\cL(\theta^k)\le\cL(\theta^0)$. Namely,
\begin{align*}
    \cL(\theta^{k})\le \left(1- \eta\left(1-\frac{\eta\, \mathsf{L}_i}{2}\right)\frac{\epsilon_\cL^2}{\cL(\theta^0)}\right)^{k}\cL(\theta^{0})
\end{align*}
Combining this estimate with the product estimate above gives the stated minimum of the two bounds.

Then, under the $(\theta^0,\theta^*,R,\rho,\epsilon_\cL)$-dissipative condition, GD will converge to the region $\|\nabla\cL(\theta)\|\le \epsilon_\cL$. Indeed, if $\|\nabla\cL(\theta^k)\|>\epsilon_\cL$ for all sufficiently large $k$, then $\cL$ will keep decreasing until it enters the neighborhood of $0$ which is the global minimum of $\cL$.


\end{proof}

The following lemma shows that the loss function is Lipschitz smooth along the GD iteration.
\begin{lemma}
\label{lem:bound_S_k_theta_k}
Assume $\cL(\theta)$ satisfies polynomial generalized smoothness. 

\begin{enumerate}
    \item Without the $(\theta^0,\theta^*,R,\rho,\epsilon_\cL)$-dissipative condition, let $\eta\le\min\{\frac{2-\delta}{\mathsf{L}_2},1\}$, where \begin{align*}
    \mathsf{L}_2=S\left(\cdots,\|\theta_i^0\|+\sqrt{\frac{2C}{\delta}\cL(\theta^0)}+\max_{\|\theta\|\le \|\theta^0\|+\sqrt{\frac{2C}{\delta}\cL(\theta^0)}}\|\nabla_{\theta_i}\cL(\theta)\|,\cdots)\right).
\end{align*}
Then 
\begin{align*}
    &\theta^k\in B_0\left(\|\theta^0\|+\sqrt{\frac{2C}{\delta}\cL(\theta^0)}\right),\\
    \text{and }\quad &S_k\le \mathsf{L}_2,\ \forall\, k+1\le T=\frac{C}{\eta}.
\end{align*}
    \item With the $(\theta^0,\theta^*,R,\rho,\epsilon_\cL)$-dissipative condition, let $\eta\le\min\{\frac{2-\delta}{\mathsf{L}_2},1\}$, where \begin{align*}
    \mathsf{L}_2=S\left(\cdots,\|\theta_i^*\|+R+\max_{ \theta\in B_{\theta^*}\left(R\right)}\|\nabla_{\theta_i}\cL(\theta)\|,\cdots\right),\text{ with }R=\sqrt{\|\theta^{0}-\theta^{*}\|^2+\max\{\frac{4\rho+2}{\delta},0\}\cL(\theta^0)}.
\end{align*}
Then
\begin{align*}
    &\theta^k\in B_{\theta^*}(R),\\
    \text{and }\quad &S_k\le \mathsf{L}_2,\ \forall\, k\ge 0\text{ and }\|\nabla\cL(\theta^k)\|> \epsilon_\cL.
\end{align*}
\end{enumerate}
\end{lemma}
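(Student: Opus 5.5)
Both parts go by induction on $k$, with the induction hypothesis
\[
\theta^j\in\mathcal{R}\quad\text{and}\quad S_j\le \mathsf L_2\qquad\text{for all } j\le k,
\]
where $\mathcal{R}$ is the relevant ball. Here $S_j$ is the poly-smoothness polynomial $S$ of $\cL$ (Lemma~\ref{lem:loss_function_generalized_smoothness}) evaluated at the blockwise maxima $\max\{\|\theta^j_i\|,\|\theta^{j+1}_i\|\}$.

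\textbf{Base step and descent step.} The key observation is a one-step comparison. If $\theta^k\in\mathcal{R}$, then $\|\theta^{k+1}_i\|\le\|\theta^k_i\|+\eta\|\nabla_{\theta_i}\cL(\theta^k)\|$. Since $\eta\le1$, this is at most the quantity entering $\mathsf L_2$: namely $\|\theta^*_i\|+R+\max_{B_{\theta^*}(R)}\|\nabla_{\theta_i}\cL\|$ in part 2, or its analogue centered at $0$ in part 1. The polynomial $S$ has positive coefficients and is therefore monotone on $\RR_{\ge0}^{n}$, so $S_k\le\mathsf L_2$ before we even know whether $\theta^{k+1}$ stays in the ball. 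Lemma~\ref{lem:smoothness_inequalities} and $\eta\le(2-\delta)/\mathsf L_2$ then give
\[
\cL(\theta^{k+1})\le\cL(\theta^k)-\eta\Bigl(1-\tfrac{\eta\mathsf L_2}{2}\Bigr)\|\nabla\cL(\theta^k)\|^2\le \cL(\theta^k)-\tfrac{\eta\delta}{2}\|\nabla\cL(\theta^k)\|^2 .
\]
Telescoping yields
\[
\eta\sum_{j\le k}\|\nabla\cL(\theta^j)\|^2\le\tfrac{2}{\delta}\,\cL(\theta^0).
\]

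\textbf{Part 1.} I would bound the displacement by Cauchy--Schwarz:
\[
\|\theta^{k+1}-\theta^0\|\le\eta\sum_{j\le k}\|\nabla\cL(\theta^j)\|\le\sqrt{(k+1)\eta}\,\sqrt{\eta\textstyle\sum_j\|\nabla\cL(\theta^j)\|^2}\le\sqrt{\tfrac{2C}{\delta}\cL(\theta^0)}
\]
whenever $(k+1)\eta\le C$. This closes the induction up to $T=C/\eta$.

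\textbf{Part 2.} I would expand the squared distance to $\theta^*$:
\[
\|\theta^{k+1}-\theta^*\|^2=\|\theta^k-\theta^*\|^2-2\eta\,\nabla\cL(\theta^k)^\top(\theta^k-\theta^*)+\eta^2\|\nabla\cL(\theta^k)\|^2 .
\]
Since $\theta^k\in B_{\theta^*}(R)$ and $\|\nabla\cL(\theta^k)\|>\epsilon_\cL$, the dissipative condition bounds the cross term by $2\eta\rho\|\nabla\cL(\theta^k)\|^2$. Using $\eta\le1$ gives
\[
\|\theta^{k+1}-\theta^*\|^2\le\|\theta^k-\theta^*\|^2+(2\rho+1)\eta\|\nabla\cL(\theta^k)\|^2 .
\]
Summing and inserting the telescoped descent bound, I expect
\[
\|\theta^{k+1}-\theta^*\|^2\le\|\theta^0-\theta^*\|^2+\max\{2\rho+1,0\}\tfrac{2}{\delta}\cL(\theta^0)=R^2 .
\]
This matches the constant $(4\rho+2)/\delta$ exactly, so $\theta^{k+1}\in B_{\theta^*}(R)$ (or its closure). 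If strictness matters, I would enlarge the ball infinitesimally, or use the strict descent from $\|\nabla\cL\|>\epsilon_\cL>0$.

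\textbf{Main obstacle.} The delicate point is ordering the induction so that the argument is not circular. The dissipative inequality is needed at each earlier iterate $\theta^j$, which requires $\theta^j$ to be in the ball with large gradient; the descent bound $S_j\le\mathsf L_2$ requires $\theta^{j+1}$ to be controlled before it is known to lie in the ball. The one-step comparison via $\eta\le1$ and the $\max\|\nabla\cL\|$ term in $\mathsf L_2$ resolves this. The statement also restricts to iterates with $\|\nabla\cL(\theta^j)\|>\epsilon_\cL$, so I would apply the induction only up to the first iterate where the gradient drops to $\epsilon_\cL$ or below, which is all the lemma claims.
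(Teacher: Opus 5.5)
Your proposal is correct and follows the paper's proof essentially step for step. Both arguments use the same ingredients:
\begin{itemize}
\item the one-step bound $\max\{\|\theta_i^k\|,\|\theta_i^{k+1}\|\}\le\|\theta_i^k\|+\eta\|\nabla_{\theta_i}\cL(\theta^k)\|$, combined with $\eta\le1$ and monotonicity of $S$, to get $S_k\le\mathsf L_2$;
\item the telescoped descent bound $\eta\sum_{j\le k}\|\nabla\cL(\theta^j)\|^2\le\frac{2}{\delta}\cL(\theta^0)$;
\item Cauchy--Schwarz for part~1;
\item the expansion of $\|\theta^{k+1}-\theta^*\|^2$ together with the dissipative inequality for part~2.
\end{itemize}
Your remarks on the open versus closed ball, and on restricting to the prefix of iterates with $\|\nabla\cL(\theta^j)\|>\epsilon_\cL$, address points the paper's proof leaves implicit.

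One refinement on strictness: the usable slack is not strict descent (the theorem also allows $\epsilon_\cL=0$). It is the term $(2\rho+1)\frac{2}{\delta}\cL(\theta^{k+1})$ left over in the telescoped bound. This term is positive whenever $\theta^{k+1}$ is non-stationary, since a zero of the nonnegative loss is stationary. It also needs $2\rho+1>0$, which is forced by the requirement $R>\|\theta^0-\theta^*\|$.
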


\begin{proof}
    First, note
    \begin{align*}
        \max\{\|\theta^k_i\|,\|\theta^{k+1}_i\|\}\le\|\theta_i^k\|+\eta\|\nabla_{\theta_i}\cL(\theta^k)\|.
    \end{align*}
    By the monotonicity of $S(\cdot)$, we only need to show $\|\theta^k_i\|$ is bounded under the two cases.
    
    Without the dissipative condition, we would like to show that 
    \begin{align*}
        \|\theta^k_i\|\le \|\theta_i^0\|+\sqrt{\frac{2C}{\delta}\cL(\theta^0)},\ \forall\, k+1\le T=\frac{C}{\eta}.
    \end{align*}
    Suppose the above inequality holds for all $\theta^j_i$, where $j\le k$ and $i=1,\cdots,n_\theta$. Then by $\eta\le  1$, we have
    \begin{align*}
        S_k&\le\bar{S}_k=S(\cdots,\|\theta_i^k\|+\eta\|\nabla_{\theta_i}\cL(\theta^k)\|,\cdots))\\
        &\le \mathsf{L}_2=S(\cdots,\|\theta_i^0\|+\sqrt{\frac{2C}{\delta}\cL(\theta^0)}+\max_{\|\theta\|\le \|\theta^0\|+\sqrt{\frac{2C}{\delta}\cL(\theta^0)}}\|\nabla_{\theta_i}\cL(\theta)\|,\cdots))
    \end{align*}
    namely, \begin{align*}
        \eta\le\frac{2-\delta}{\mathsf{L}_2}\le\frac{2-\delta}{\bar{S}_k}.
    \end{align*}

By Lemma~\ref{lem:loss_function_generalized_smoothness} and Lemma~\ref{lem:smoothness_inequalities}, we have
    \begin{align*}
    0\le \cL(\theta^{k+1})&\le \cL(\theta^{k})+\nabla \cL(\theta^{k})^\top(\theta^{k+1}-\theta^{k})+\frac{S_{k}}{2}\|\theta^k-\theta^{k+1}\|^2\\
    &= \cL(\theta^{k})-\eta\left(1-\frac{\eta S_k}{2}\right)\|\nabla\cL(\theta^k)\|^2\\
    &\le \cL(\theta^0)-\eta\sum_{j=0}^k\left(1-\frac{\eta S_j}{2}\right)\|\nabla\cL(\theta^j)\|^2\\
    &\le \cL(\theta^0)-\frac{\delta}{2}\eta\sum_{j=0}^k\|\nabla\cL(\theta^j)\|^2,
\end{align*}
namely,
\begin{align}
\label{eqn:sum_grad_square}
    \sum_{j=0}^k\|\nabla\cL(\theta^j)\|^2\le \frac{2}{\delta\eta}(\cL(\theta^0)-\cL(\theta^{k+1}))\le\frac{2}{\delta\eta}\cL(\theta^0).
\end{align}

    Then
    \begin{align*}
        \|\theta^{k+1}_i\|&=\|\theta^{k}_i-\eta\nabla_{\theta_i}\cL(\theta^k)\|=\left\|\theta^0_i-\eta\sum_{j=0}^{k}\nabla_{\theta_i}\cL(\theta^j)\right\|\\
        &\le \|\theta_i^0\|+\eta\sum_{j=0}^{k}\left\|\nabla\cL(\theta^j)\right\|\\
        &\le \|\theta_i^0\|+\eta \sqrt{(k+1)\sum_{j=0}^{k}\left\|\nabla\cL(\theta^j)\right\|^2}\\
        &\le \|\theta_i^0\|+\sqrt{\frac{2}{\delta}(k+1)\eta\cL(\theta^0)}\\
        &\le  \|\theta_i^0\|+\sqrt{\frac{2C}{\delta}\cL(\theta^0)}
    \end{align*}
    where the second inequality follows from Cauchy-Schwarz inequality, the third inequality follows from~\eqref{eqn:sum_grad_square}, and the last inequality follows from $k+1\le T=\frac{C}{\eta}$. 

    The same derivation also gives
    \begin{align*}
        \|\theta^{k+1}\|\le \|\theta^0\|+\sqrt{\frac{2C}{\delta}\cL(\theta^0)}.
    \end{align*}
    Therefore $S_{k+1}\le \mathsf{L}_2$.

Next, under the dissipative condition, we would like to show that 
\begin{align*}
    \theta^k_i\in B_{\theta^*_i}\left(\sqrt{\|\theta^{0}-\theta^{*}\|^2+\max\{\frac{4\rho+2}{\delta},0\}\cL(\theta^0)}\right),\ i.e.,\ \|\theta_i^k-\theta_i^*\|^2\le \|\theta^{0}-\theta^{*}\|^2+\frac{4\rho+2}{\delta}\cL(\theta^0)
\end{align*}

Suppose the above inequality holds for all $j\le k$. Then we have
\begin{align*}
    S_k&\le\bar{S}_k=S(\cdots,\|\theta_i^k\|+\eta\|\nabla_{\theta_i}\cL(\theta^k)\|,\cdots)\\
    &\le \mathsf{L}_2=S(\cdots,\|\theta_i^*\|+\sqrt{\|\theta^{0}-\theta^{*}\|^2+\max\{\frac{4\rho+2}{\delta},0\}\cL(\theta^0)}+\max_{ \theta\in B_{\theta^*}\left(\sqrt{\|\theta^{0}-\theta^{*}\|^2+\max\{\frac{4\rho+2}{\delta},0\}\cL(\theta^0)}\right)}\|\nabla_{\theta_i}\cL(\theta)\|,\cdots)
\end{align*}
namely, \begin{align*}
    \eta\le\frac{2-\delta}{\mathsf{L}_2}\le\frac{2-\delta}{\bar{S}_k}.
\end{align*}

Then, consider the $k+1$th iteration
 \begin{align*}
                \|\theta^{k+1}-\theta^{*}\|^2
                &=\|\theta^{k}-\theta^{*}-\eta\nabla \cL(\theta^k)\|^2\\
        &=\|\theta^{k}-\theta^{*}\|^2-2\eta\nabla \cL(\theta^k)^\top (\theta^{k}-\theta^{*})+\eta^2\|\nabla \cL(\theta^k)\|^2\\
        &\le \|\theta^{k}-\theta^{*}\|^2+\eta(2\rho+\eta)\|\nabla \cL(\theta^k)\|^2\\
        &\le \|\theta^{0}-\theta^{*}\|^2+\eta(2\rho+\eta)\sum_{j=0}^k\|\nabla \cL(\theta^j)\|^2\\
        &\le \|\theta^{0}-\theta^{*}\|^2+\max\{\frac{4\rho+2\eta}{\delta},0\}\cL(\theta^0)
    \end{align*}
    where the first inequality follows from Definition~\ref{def:dissipative}, and the last inequality follows from~\eqref{eqn:sum_grad_square}, which also holds in the case with the dissipative condition.

    Then together with $\eta\le 1$ and
    \begin{align*}
        \|\theta_i^{k+1}\|\le\|\theta^*_i\|+\|\theta_i^{k+1}-\theta_i^*\|\le \|\theta^*_i\|+\|\theta^{k+1}-\theta^*\|,
    \end{align*}
    we obtain $S_{k+1}\le \mathsf{L}_2$ for all k.
\end{proof}

\subsection{Lower Bound of Gradient}
\label{subapp:lower_bound_gradient}

In this section, we use $\theta_i$ and $\theta_{\bar{\ell},i}$ to represent the scalar elements of $\theta$ and $\theta_{\bar{\ell}}$ respectively. The main lemma in this section is as follows.
\begin{lemma}
\label{lem:gradient_lower_bound}
    Under \cref{assump:architecture}, and some small adjustment of the scale of $\varphi_\ell$ for $\ell=0,\cdots,L-1$, given fixed $\theta$ except for a measure-zero set in $\RR^{\dim \theta}$, and $X=(x_1,\cdots,x_N)$ except for a measure-zero set in $\RR^{Nd}$ that depends on $\theta$, the gradient satisfies the following inequality
    \begin{align*}
        \|\nabla_\theta\cL(\theta;X)\|^2\ge \frac{2\mu_{\rm low,\theta,X}}{N}\cL(\theta;X)
    \end{align*}
    where $\mu_{\rm low,\theta,X}>0$ is a strictly positive constant depending on $\theta$ and $X$. 

\end{lemma}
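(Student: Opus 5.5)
The plan has two parts: an elementary reduction of the PL-type inequality to a rank statement, and a genericity argument for that rank statement via analyticity.

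\textbf{Reduction.} Write $J(\theta;X)\in\RR^{Nd\times\dim\theta}$ for the stacked Jacobian with rows $\nabla_\theta f(\theta;x_i)$, and $e=(f(\theta;x_1)-y_1,\cdots,f(\theta;x_N)-y_N)\in\RR^{Nd}$. Then $\nabla_\theta\cL=\frac1N J^\top e$, so
$$
\|\nabla_\theta\cL\|^2=\frac{1}{N^2}e^\top JJ^\top e\ge \frac{\lambda_{\min}(JJ^\top)}{N^2}\|e\|^2=\frac{2\mu_{{\rm low},\theta,X}}{N}\cL(\theta;X),
$$
using $\|e\|^2=2N\cL$. It therefore remains to show that $J(\theta;X)$ has full row rank $Nd$ for generic $\theta$ and, given such $\theta$, for generic $X$. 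This is equivalent to $\detr(J(\theta;X))\neq 0$.

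\textbf{Genericity in $\theta$.} By Assumption~\ref{assump:analytic}, the map $(\theta,X)\mapsto\detr(J(\theta;X))$ is real-analytic, so by Theorem~\ref{thm:zero_set_analytic_function} it suffices to exhibit a single point where it is nonzero. I restrict to the columns for $\theta_{\bar\ell}$: if that submatrix has full row rank, so does $J$. Using the chain rule, $\nabla_{\theta_{\bar\ell}}f(\theta;x_i)=\nabla_{\theta_{\bar\ell}}\tilde f_{\bar\ell}(\theta;u_{\bar\ell,i}(x_i))$, where $u_{\bar\ell,i}$ depends only on $\theta_0,\cdots,\theta_{\bar\ell-1}$ and $x_i$. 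Assumption~\ref{assump:architecture} provides, for generic $(\theta_{\bar\ell},\cdots,\theta_L)$, a tuple $(u_1,\cdots,u_N)$ making the stacked $\theta_{\bar\ell}$-Jacobian full rank. So the question becomes whether the lower part of the network, the map $x\mapsto u_{\bar\ell}(x)=(\mathrm{id}+\varphi_{\bar\ell-1})\circ\cdots\circ(\mathrm{id}+\varphi_0)(x)$, can reach such a tuple. This is where the \emph{small adjustment of the scale of $\varphi_\ell$} enters. Replace $\varphi_\ell$ by $\eta_\ell\varphi_\ell$. At the analytic level I would note that for $\eta_\ell=0$ the lower map is the identity, so $u_{\bar\ell,i}=x_i$, and choosing $x_i=u_i$ gives $\detr\neq0$. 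Hence the function $(\eta,\theta,X)\mapsto \detr$ is analytic and not identically zero. For the conclusion at a fixed small scale, I would instead use Lemma~\ref{lem:full_rank_jacobian_mainbody}: outside finitely many $\eta_\ell$, each residual map $u\mapsto u+\eta_\ell\varphi_\ell(\theta_\ell;u)$ has a full-rank Jacobian for generic $(\theta_\ell,u)$, so the composite $x\mapsto u_{\bar\ell}(x)$ is a local diffeomorphism generically. Its image therefore contains an open set, and I perturb the tuple $(u_i)$ inside that set; full rank is an open condition, which justifies the perturbation. The result is that $\detr(J(\theta;\cdot))\not\equiv0$ as a function of $X$ for $\theta$ outside a measure-zero set. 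That set is the zero set of the analytic function $\theta\mapsto\int\detr$ over a small ball, or alternatively comes from a Fubini argument applied to the joint zero set in $(\theta,X)$, which has measure zero.

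\textbf{Genericity in $X$, and the main obstacle.} For each good $\theta$, $X\mapsto\detr(J(\theta;X))$ is analytic on the neighborhood of $\bar\Omega^N$ from Assumption~\ref{assump:analytic} and is not identically zero. Its zero set is therefore Lebesgue-null, giving $\mu_{{\rm low},\theta,X}>0$ off a null set of $X$. I expect the main obstacle to be the transfer step from the tuple $(u_i)$, which exists abstractly in Assumption~\ref{assump:architecture}, to actual data points lying in $\Omega$ through the nonlinear lower layers, while keeping the full-rank witness valid for the prescribed $\theta$. One difficulty is that the witness tuple may lie outside the image of $\Omega$. I would first try to handle this with analyticity in $X$ on a connected neighborhood: nonvanishing anywhere on a connected domain implies vanishing only on a null set there. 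The other difficulty is the dependence of the exceptional set of $\theta_{\bar\ell},\cdots,\theta_L$ on the lower weights. I would settle that by Fubini over the product structure of $\theta$.
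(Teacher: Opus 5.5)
Your reduction is the paper's: $\nabla_\theta\cL=\frac1N J^\top e$ with $\|e\|^2=2N\cL$ is exactly the frame-bound computation in its proof, and passing to the $\theta_{\bar\ell}$-columns is Lemma~\ref{lem:lower_frame_bound}. The gap is in the transfer from the witness tuple $(u_1,\cdots,u_N)$ of Assumption~\ref{assump:architecture} to actual data.

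You argue that the image of the lower map $x\mapsto u_{\bar\ell}(x)$ contains an open set $W$. You then ``perturb the tuple $(u_i)$ inside that set'' because full rank is an open condition. Openness only gives a small neighbourhood of the witness on which full rank persists, and nothing forces that neighbourhood to meet $W$. So this step fails as justified. Your fallback in the last paragraph (analyticity in $X$ on a connected domain) does not help either. It only propagates nonvanishing from a point $X$ where $\detr\neq0$ is already known, and producing such a point is exactly what is missing. Your $\eta=0$ variant has the same defect whenever the witness lies outside the domain of $\varphi_0$.

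The missing idea is to apply Theorem~\ref{thm:zero_set_analytic_function} in the hidden-state variable $U=(u_1,\cdots,u_N)$ rather than in $X$. For fixed good $\theta$, the map $U\mapsto\detr$ of the stacked $\nabla_{\theta_{\bar\ell}}\tilde f_{\bar\ell}(\theta;u_i)$ is analytic and nonzero at the witness. Hence its zero set $Z_U$ is Lebesgue-null and contains no nonempty open set. With this, both of your variants close:
\begin{itemize}
\item In the local-diffeomorphism variant, $W$ contains a good tuple.
\item In the $\eta=0$ variant, $V^N$ contains a good tuple, where $V$ is the neighbourhood of $\bar\Omega$ from Assumption~\ref{assump:analytic}. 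Fubini in the scales then even gives almost every scale.
\end{itemize}
In either case, analyticity in $X$ then applies on each connected component of the data domain.

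The paper takes a shorter route that never needs to reach the witness. After showing $Z_U$ is null, it pulls $Z_U$ back to $X$-space via Lemma~\ref{lem:preimage_u_measure_zero}. That lemma combines the a.e.\ nonsingularity of each residual map $u\mapsto u+\varphi_\ell(\theta_\ell;u)$, from Lemma~\ref{lem:full_rank_jacobian} (this is where the scale adjustment enters), with Theorem~\ref{thm:inverse_map_null_set}, applied inductively over layers. Your own claim that the composite lower map is ``generically a local diffeomorphism'' already requires this same inductive pullback. Once you have it, pulling back $Z_U$ directly is the cleaner argument, and it removes any dependence on connectedness of $\Omega$ or on locating a good point in the image.
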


\begin{proof}
Since $l(f(x_i),y_i)=\frac{1}{2}\|y_i-f(x_i)\|^2$, we have $\nabla_f l(x_i)=f(x_i)-y_i$ and
$$
    \sum_{i=1}^N\|\nabla_f l(x_i)\|^2=2N\cL(\theta;X).
$$
Consider
    \begin{align*}
        \|\nabla_\theta\cL(\theta;X)\|^2&=\left\| \frac{1}{N}\sum_{i=1}^N \nabla_f l(x_i)\nabla_\theta f(x_i) \right\|^2\\
        &\ge \frac{1}{N^2}\mu_{\rm low,\theta,X}\sum_{i=1}^N\|\nabla_fl(x_i)\|^2=\frac{2\mu_{\rm low,\theta,X}}{N}\cL(\theta;X)
    \end{align*}
    where the first inequality follows from Lemma~\ref{lem:lower_frame_bound}, and $\mu_{\rm low,\theta,X}>0$ is a constant depending on $\theta$ and $X$.
\end{proof}

Before presenting other lemmas, we first introduce the definition of a frame as follows.
\begin{definition}[Frame]
    The set of vectors $\{e_k\}$ in an inner-product vector space $V$ is a frame of $V$ if there exist constants $0< \mu_{\rm low}\le \mu_{\rm up}<\infty$ such that
    \begin{align*}
        \mu_{\rm low}\|v\|^2\le\sum_k |\ip{v}{e_k}|^2\le \mu_{\rm up}\|v\|^2,\ \forall v\in V.
    \end{align*}
    The frame operator $T:V\to V$ is defined as
    \begin{align*}
        Tv=\sum_k\ip{v}{e_k}e_k=\left(\sum_k e_k e_k^\top\right)\,v.
    \end{align*}
\end{definition}

\begin{proposition}
\label{prop:frame}
    Consider a set of vectors $\{e_k\}_{k=1}^n$ in $\RR^d$. If ${\rm span}\{e_1,\cdots,e_n\}=\RR^d$, then $\{e_k\}_{k=1}^n$ is a frame for $\RR^d$.
\end{proposition}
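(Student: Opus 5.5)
The statement to prove is Proposition~\ref{prop:frame}: if $\{e_k\}_{k=1}^n$ spans $\RR^d$, then it is a frame. I would identify the quadratic form $\sum_k|\ip{v}{e_k}|^2$ with the frame operator and read off the two frame constants as its extreme eigenvalues. Concretely, set $T=\sum_{k=1}^n e_ke_k^\top\in\RR^{d\times d}$. Then
$$
    \sum_{k=1}^n|\ip{v}{e_k}|^2=v^\top T v .
$$
$T$ is symmetric positive semidefinite, being a finite sum of rank-one positive semidefinite matrices. Hence by the Rayleigh quotient characterization,
$$
    \lambda_{\min}(T)\|v\|^2\le v^\top Tv\le\lambda_{\max}(T)\|v\|^2 \qquad \text{for all } v.
$$
So the natural choice is $\mu_{\rm low}=\lambda_{\min}(T)$ and $\mu_{\rm up}=\lambda_{\max}(T)$.

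The only real content is showing $\lambda_{\min}(T)>0$, i.e., that $T$ is positive definite. Suppose $v^\top Tv=0$ for some $v$. Then $\ip{v}{e_k}=0$ for every $k$, so $v$ is orthogonal to $\mathrm{span}\{e_1,\cdots,e_n\}=\RR^d$, which forces $v=0$. Thus the kernel of $T$ is trivial and $\lambda_{\min}(T)>0$. Equivalently, writing $E=(e_1,\cdots,e_n)\in\RR^{d\times n}$, we have $T=EE^\top$, and the spanning hypothesis says $E$ has full row rank $d$, so $EE^\top$ is invertible.

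For the upper constant, $\mu_{\rm up}$ is finite because $\lambda_{\max}(T)\le\operatorname{tr}T=\sum_k\|e_k\|^2<\infty$. Also $\mu_{\rm low}\le\mu_{\rm up}$ trivially. If one also wants the degenerate case $d\ge1$ with some $e_k=0$, nothing changes, since zero vectors contribute nothing to either side.

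There is no serious obstacle. The one step needing care is the positivity of $\lambda_{\min}(T)$ via the spanning assumption. This is also the form in which the result is used later: with $e_k$ the rows of the stacked Jacobian, the lower frame bound is exactly $\mu_{{\rm low},\theta,X}$ as defined in Section~\ref{subsec:mu}.
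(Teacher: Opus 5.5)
Your proof is correct and is the standard argument. The paper states this proposition without proof, but in Lemma~\ref{lem:lower_frame_bound} it relies on the same reasoning: form the frame operator and use full rank to get a positive smallest eigenvalue.

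There is one small slip in your closing remark. In Lemma~\ref{lem:grad_frame} the frame vectors are the \emph{columns} of the stacked Jacobian $J_X$ (one per scalar parameter $\theta_i$), so the frame operator is $J_XJ_X^\top$. Using the rows would instead give $J_X^\top J_X$, which is singular because $\dim\theta>Nd$.
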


Then the gradient of the network $f$ at all data points forms a frame in $\RR^{Nd}$:
\begin{lemma}
\label{lem:grad_frame}
Under \cref{assump:architecture} and some small adjustment of the scale of $\varphi_\ell$ for $\ell=0,\cdots,L-1$, for any fixed $\theta$ except for a measure-zero set,
$\left\{\begin{pmatrix}
    \nabla_{\theta_i}f(x_1)\\
    \vdots\\
    \nabla_{\theta_i}f(x_N))
\end{pmatrix}\right\}_{i=1}^{\dim\theta}$ form a frame except for a measure-zero set of $(x_1,\cdots,x_N)$ in $\RR^{Nd}$.
\end{lemma}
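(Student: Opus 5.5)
The plan is to reduce the frame property to a full-row-rank statement and then prove that rank statement with analytic zero-set arguments.

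\textbf{Reduction.} By Proposition~\ref{prop:frame}, it suffices to show that the columns of the stacked Jacobian
$$J(\theta,X)=\begin{pmatrix}\nabla_\theta f(\theta;x_1)\\ \vdots\\ \nabla_\theta f(\theta;x_N)\end{pmatrix}\in\RR^{Nd\times\dim\theta}$$
span $\RR^{Nd}$, i.e.\ that $J$ has full row rank $Nd$. Since appending columns cannot lower the row rank, I will only look at the submatrix $J_{\bar\ell}(\theta,X)$ of columns indexed by $\theta_{\bar\ell}$, with $\bar\ell$ from Assumption~\ref{assump:architecture}. By the chain rule,
$$\nabla_{\theta_{\bar\ell}}f(\theta;x_i)=\nabla_{\theta_{\bar\ell}}\tilde f_{\bar\ell}(\theta;u_{\bar\ell,i}).$$
Here $u_{\bar\ell,i}=\Phi(\theta_{0:\bar\ell-1};x_i)$, and $\Phi$ is the composition of the residual maps $u\mapsto u+\varphi_\ell(\theta_\ell;u)$ for $\ell<\bar\ell$. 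Define the analytic function
$$G(\theta;u_1,\dots,u_N)=\detr\begin{pmatrix}\nabla_{\theta_{\bar\ell}}\tilde f_{\bar\ell}(\theta;u_1)\\ \vdots\\ \nabla_{\theta_{\bar\ell}}\tilde f_{\bar\ell}(\theta;u_N)\end{pmatrix}.$$
It is analytic by Assumption~\ref{assump:analytic} and Proposition~\ref{prop:analytic_function}. The goal becomes
$$G\bigl(\theta;\Phi(\theta;x_1),\dots,\Phi(\theta;x_N)\bigr)\neq0\quad\text{for a.e. }X.$$

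\textbf{Step 1 (hidden-state level).} Fix $\theta$ outside the measure-zero set of Assumption~\ref{assump:architecture}. That assumption supplies a tuple $(u_1,\dots,u_N)$ with $G(\theta;u_1,\dots,u_N)\ne0$. So $U\mapsto G(\theta;U)$ is a real-analytic function on $\RR^{Nd}$ that is not identically zero. By Theorem~\ref{thm:zero_set_analytic_function}, its zero set $Z_\theta\subset\RR^{Nd}$ has Lebesgue measure zero.

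\textbf{Step 2 (pull back to the data).} The bad data set is $\{X:(\Phi(\theta;x_1),\dots,\Phi(\theta;x_N))\in Z_\theta\}$, and I need it to be null. The map $X\mapsto U$ acts sample-wise, so its Jacobian is block diagonal with blocks $\nabla_x\Phi(\theta;x_i)$. Each block factors as the product of the matrices $I+\nabla_u\varphi_\ell(\theta_\ell;u_{\ell,i})$ for $\ell<\bar\ell$.

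This is where the ``small adjustment of the scale of $\varphi_\ell$'' enters. Replace $\varphi_\ell$ by $c_\ell\varphi_\ell$ and apply Lemma~\ref{lem:full_rank_jacobian_mainbody}(2). For all but finitely many $c_\ell$, the Jacobian of $u\mapsto u+c_\ell\varphi_\ell(\theta_\ell;u)$ is invertible off a null set of $u$. By the chain rule, off a null set of $x$ the composed map $\Phi(\theta;\cdot)$ then has an invertible Jacobian. The point is that the preimage of a null set under a map that is nondegenerate almost everywhere is null, so the null sets propagate backwards layer by layer.

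Taking the product over the $N$ samples, the analytic map $X\mapsto U$ has a full-rank Jacobian almost everywhere. Theorem~\ref{thm:inverse_map_null_set} then shows that the preimage of $Z_\theta$ is null in $\RR^{Nd}$. For layer $0$, analyticity is only assumed on a neighborhood of $\bar\Omega$. This is harmless, because by Assumption~\ref{assump:data_distribution_absolutely_continuous_Lebesgue_ground_truth} the data live in $\Omega^N$, so I will work there.

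\textbf{Main obstacle.} The hard step is making the scale adjustment uniform in $\theta$. Lemma~\ref{lem:full_rank_jacobian_mainbody} gives an exceptional finite set $E_{\theta_\ell}$ that depends on the fixed parameter, whereas I want one choice of $c_\ell$ that works for almost every $\theta$.

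My first attempt would treat $(c,u,\theta_\ell)$ jointly. The function $\det\bigl(I+c\nabla_u\varphi_\ell(\theta_\ell;u)\bigr)$ is analytic in all three arguments and equals $1$ at $c=0$, so it is not identically zero. Its zero set is therefore null in $(c,u,\theta_\ell)$-space. By Fubini, for almost every $c$ the slice in $(u,\theta_\ell)$ is null, and a second application of Fubini gives, for almost every $\theta_\ell$, a null set of $u$.

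This yields an arbitrarily small admissible adjustment, which is all the statement needs. The cost is that the exceptional set of scales is only shown to be null rather than finite. The remaining parameter exclusions are finitely many null sets in $\theta$, and their union stays null.
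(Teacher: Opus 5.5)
Your proposal is correct and follows the paper's proof essentially step for step. You restrict to the $\theta_{\bar\ell}$ columns, use Assumption~\ref{assump:architecture} with Theorem~\ref{thm:zero_set_analytic_function} to make the bad set of hidden states null, and pull that set back to the data through the a.e.\ nondegenerate residual maps via Theorem~\ref{thm:inverse_map_null_set}, which is exactly Lemma~\ref{lem:preimage_u_measure_zero}. Your joint-analyticity/Fubini step makes the scale adjustment independent of $\theta$ (and you rescale only layers $\ell<\bar\ell$, so $\tilde f_{\bar\ell}$ is untouched); this tightens a point the paper leaves implicit by simply fixing $\theta$ and assuming the residual maps are nondegenerate, without changing the route.
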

\begin{proof}

By Assumption~\ref{assump:architecture}, there exists $\bar{\ell}\in\{0,\cdots,L-1\}$ and $U=(u_1,\cdots,u_N)\in\RR^{Nd},\theta$, s.t. 
\begin{align*}
    \detr\begin{pmatrix}
       \nabla_{\theta_{\bar{\ell}}}\tilde{f}_{\bar{\ell}}(\theta;u_1)\\
        \vdots\\
        \nabla_{\theta_{\bar{\ell}}}\tilde{f}_{\bar{\ell}}(\theta;u_N)
    \end{pmatrix}\ne 0,
\end{align*}
where $\tilde{f}_{\bar{\ell}}(\theta;u_{\bar{\ell}})=f(\theta;x)$.

Then by Theorem~\ref{thm:zero_set_analytic_function},
\begin{align*}
    \mathrm{Leb}_{Nd}\left(\left\{U\in \RR^{Nd}: \detr\begin{pmatrix}
       \nabla_{\theta_{\bar{\ell}}}\tilde{f}_{\bar{\ell}}(\theta;u_1)\\
        \vdots\\
        \nabla_{\theta_{\bar{\ell}}}\tilde{f}_{\bar{\ell}}(\theta;u_N)
    \end{pmatrix}= 0\right\}\right)=0,
\end{align*}
namely,
\begin{align*}
    \mathrm{Leb}_{Nd}\left(\left\{U\in \RR^{Nd}:\begin{pmatrix}
       \nabla_{\theta_{\bar{\ell}}}\tilde{f}_{\bar{\ell}}(\theta;u_1)\\
        \vdots\\
        \nabla_{\theta_{\bar{\ell}}}\tilde{f}_{\bar{\ell}}(\theta;u_N)
    \end{pmatrix}\text{ is not full rank}\right\}\right)=0
\end{align*}

Next consider the map $U=U(X)=\begin{pmatrix}
    u_{\bar{\ell}}(x_1)\\\vdots\\u_{\bar{\ell}}(x_N)
\end{pmatrix}:\RR^{Nd}\to\RR^{Nd}$. By Lemma~\ref{lem:preimage_u_measure_zero}, 
\begin{align*}
    \mathrm{Leb}_{Nd}\left(\left\{(x_1,\cdots,x_N)\in \RR^{Nd}:\begin{pmatrix}
       \nabla_{\theta_{\bar{\ell}}}\tilde{f}_{\bar{\ell}}(\theta;u(x_1))\\
        \vdots\\
        \nabla_{\theta_{\bar{\ell}}}\tilde{f}_{\bar{\ell}}(\theta;u(x_N))
    \end{pmatrix}\text{ is not full rank}\right\}\right)=0
\end{align*}
Therefore by Proposition~\ref{prop:frame}, for any fixed $\theta$ except for a measure-zero set, $\left\{\begin{pmatrix}
    \nabla_{\theta_i}f(x_1)\\
    \vdots\\
    \nabla_{\theta_i}f(x_N))
\end{pmatrix}\right\}_{i=1}^{\dim \theta}$ forms a frame except for a measure-zero set of $(x_1,\cdots,x_N)$.
\end{proof}

\begin{lemma}
\label{lem:lower_frame_bound}
    Under Assumption~\ref{assump:architecture} and some small adjustment of the scale of $\varphi_\ell$ for $\ell=0,\cdots,L-1$, consider fixed $\theta$ except for a measure-zero set. The lower frame bound $\mu_{\rm low,\theta,X}$ is strictly positive except for a measure-zero set of $(x_1,\cdots,x_N)$ in $\RR^{Nd}$. 
\end{lemma}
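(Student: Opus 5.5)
The plan is to identify $\mu_{\rm low,\theta,X}$ with the smallest eigenvalue of the Gram matrix of the stacked Jacobian,
$$
J(\theta;X):=\begin{pmatrix}\nabla_\theta f(\theta;x_1)\\ \vdots\\ \nabla_\theta f(\theta;x_N)\end{pmatrix}\in\RR^{Nd\times\dim\theta},\qquad
\mu_{\rm low,\theta,X}=\lambda_{\min}\big(J(\theta;X)J(\theta;X)^\top\big).
$$
This matrix is the frame operator of the columns $\{J e_i\}_{i=1}^{\dim\theta}$, written $T=\sum_i (Je_i)(Je_i)^\top$. The tight lower frame bound of a finite family in $\RR^{Nd}$ is exactly $\lambda_{\min}(T)$. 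Indeed, $\sum_i |\langle v, Je_i\rangle|^2 = v^\top J J^\top v$, and the infimum of this quantity over unit vectors $v$ is the smallest eigenvalue. So the claim reduces to showing that $JJ^\top$ is positive definite, i.e.\ that $J$ has full row rank $Nd$, for almost every $X$ once $\theta$ lies outside a null set.

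First I invoke Lemma~\ref{lem:grad_frame}. For $\theta$ outside a measure-zero set, and after the small rescaling of $\varphi_\ell$, the columns of $J(\theta;X)$ form a frame of $\RR^{Nd}$ for all $X$ outside a null set. A frame spans the whole space, so $J$ has rank $Nd$ on that set of $X$. Equivalently, the analytic function $X\mapsto \detr(J(\theta;X))$ is nonzero there. Its zero set has measure zero by Theorem~\ref{thm:zero_set_analytic_function}, provided the function is not identically zero; that is exactly what Lemma~\ref{lem:grad_frame} supplies, through Assumption~\ref{assump:architecture} and Lemma~\ref{lem:preimage_u_measure_zero}.

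Next, on this full-measure set of $X$, $JJ^\top$ is symmetric positive semidefinite and nonsingular. Nonsingularity holds because $\det(JJ^\top)=\detr(J)$ by the Cauchy--Binet formula, which matches the paper's definition of $\detr$ as a sum of squared maximal minors. Hence every eigenvalue is positive, and $\mu_{\rm low,\theta,X}=\lambda_{\min}(JJ^\top)>0$. Finiteness of $\mu_{\rm up}$ is automatic, since the family is finite.

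The main subtlety I expect lies in the order of quantifiers. The exceptional $X$-set depends on $\theta$, so the argument has to fix $\theta$ first, outside the null set coming from Assumption~\ref{assump:architecture} and the rescaling. It must also check that composing with the hidden-state map $X\mapsto U(X)$ does not destroy the measure-zero property. That check is delegated to Lemma~\ref{lem:preimage_u_measure_zero}, which relies on the residual structure together with the small scale adjustment to make $X\mapsto U(X)$ nondegenerate, via Lemma~\ref{lem:full_rank_jacobian_mainbody}. Once those pieces are taken as given, the remaining steps are the linear-algebra identification above.
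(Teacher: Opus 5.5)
Your proposal is correct and follows essentially the paper's route: both reduce $\mu_{\rm low,\theta,X}=\lambda_{\min}(JJ^\top)>0$ to full row rank of the stacked Jacobian, which Lemma~\ref{lem:grad_frame} supplies. The only difference is cosmetic: the paper works with the matrix $B$ built from the $\theta_{\bar\ell}$-columns alone and uses $\lambda_{\min}(A)\ge\lambda_{\min}(B)$, whereas you argue directly on the full $J$ (a frame spans, hence full rank; equivalently Cauchy--Binet).
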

\begin{proof}

Let $A(\theta,X)$ be the frame operator, i.e.,
\begin{align*}
A(\theta,X)=\sum_{i=1}^{\dim\theta}\begin{pmatrix}
    \nabla_{\theta_i} f(x_1)\\
    \vdots\\
    \nabla_{\theta_i} f(x_N)
\end{pmatrix}\begin{pmatrix}
    \nabla_{\theta_i} f(x_1)\\
    \vdots\\
    \nabla_{\theta_i} f(x_N)
\end{pmatrix}^\top=
\begin{pmatrix}
    \nabla_\theta f(x_1)\\
    \vdots\\
    \nabla_\theta f(x_N)
\end{pmatrix}\begin{pmatrix}
    \nabla_\theta f(x_1)\\
    \vdots\\
    \nabla_\theta f(x_N)
\end{pmatrix}^\top
\end{align*}
We would like to bound $\lambda_{\min}(A)$. Consider any fixed $\theta$ except for a measure-zero set.

Let 
\begin{align*}
    B(\theta,X)&=\sum_{i=1}^{\dim\theta_{\bar{\ell}}}\begin{pmatrix}
    \nabla_{\theta_{\bar{\ell},i}} f(x_1)\\
    \vdots\\
    \nabla_{\theta_{\bar{\ell},i}} f(x_N)
\end{pmatrix}\begin{pmatrix}
    \nabla_{\theta_{\bar{\ell},i}} f(x_1)\\
    \vdots\\
    \nabla_{\theta_{\bar{\ell},i}} f(x_N)
\end{pmatrix}^\top\\
&=\sum_{i=1}^{\dim \theta_{\bar{\ell}}}\begin{pmatrix}
       \nabla_{\theta_{\bar{\ell},i}}\tilde{f}_{\bar{\ell}}(\theta;u_{\bar{\ell}}(x_1))\\
        \vdots\\
        \nabla_{\theta_{\bar{\ell},i}}\tilde{f}_{\bar{\ell}}(\theta;u_{\bar{\ell}}(x_N))
    \end{pmatrix}\begin{pmatrix}
       \nabla_{\theta_{\bar{\ell},i}}\tilde{f}_{\bar{\ell}}(\theta;u_{\bar{\ell}}(x_1))\\
        \vdots\\
        \nabla_{\theta_{\bar{\ell},i}}\tilde{f}_{\bar{\ell}}(\theta;u_{\bar{\ell}}(x_N))
    \end{pmatrix}^\top
\end{align*}
We also denote
\begin{align*}
    \tilde{B}(\theta,U)=\tilde{B}(\theta,(u_{\bar{\ell},1},\cdots,u_{\bar{\ell},N}))\overset{\Delta}{=}B(\theta,X)
\end{align*}

Then obviously, we have
\begin{align*}
    \lambda_{\min}(A)\ge \lambda_{\min}(B)=\lambda_{\min}(\tilde{B}).
\end{align*}
By Lemma~\ref{lem:grad_frame}, the stacked Jacobian with respect to $\theta_{\bar{\ell}}$ has full row rank except for a measure-zero set of $(x_1,\cdots,x_N)$ in $\RR^{Nd}$. Hence $\lambda_{\min}(B)>0$ outside this set.
\end{proof}

\begin{proof}[Proof of Lemma~\ref{lem:mu_low_along_gd}]
The first statement is Lemma~\ref{lem:lower_frame_bound}. The proof of Theorem~\ref{thm:training} removes the countable union of the preimages of the exceptional parameter sets under the GD map, so every iterate $\theta^s$ avoids these sets. Therefore $\mu_{{\rm low},s,X}>0$ for all $s$. Applying Lemma~\ref{lem:gradient_lower_bound} at $\theta=\theta^s$ gives the displayed inequality.
\end{proof}

\subsection{Polynomial Generalized Smoothness of the Loss}
\label{subapp:poly_smoothness_loss}

In this section, we denote $\theta_{\max,i}=\arg\max\{\|\theta\|,\|\theta'\|\}$, and $u_{\max}=\arg\max\{\|u\|,\|u'\|\}$. All the constants $C$'s are $\ge0$ and independent of $\theta,\bar{\varphi},u$; all the $p$'s in the subscripts are in $\NN$. We denote $u_{\ell,i}=u_{\ell,i}(\theta)$ to emphasize the dependency on $\theta$, especially when comparing $u_{\ell,i}$ under different $\theta$'s. Also, note that $S(\cdot)$ has positive coefficients. Therefore $S(\cdot)$ is monotonically increasing on $\RR_{\ge 0}\times\cdots\times\RR_{\ge 0}$, and $$S(a_1,\cdots,a_i,\cdots,a_d)\le S(|a_1|,\cdots,|a_i|,\cdots,|a_d|).$$

Below is the general lemma about the poly-smoothness of the neural network.
\begin{lemma}
\label{lem:loss_function_generalized_smoothness}
    Under Assumption~\ref{assump:generalized_smoothness_detail}, $l(\theta;x_i)$ satisfies generalized smoothness
    \begin{align}
       \|\nabla_\theta l(\theta;x_i)-\nabla_\theta l(\theta';x_i)\|\le S_l(\|\theta_{\max,1}\|,\cdots,\|\theta_{\max,n_\theta}\|)\ \|\theta-\theta'\|,
    \end{align}
    and consequently, 
    \begin{align}
         \|\nabla_\theta\cL(\theta)-\nabla_\theta\cL(\theta')\|\le S(\|\theta_{\max,1}\|,\cdots,\|\theta_{\max,n_\theta}\|) \ \|\theta-\theta'\|
    \end{align}
    where $S_l(\|\theta_{\max,1}\|,\cdots,\|\theta_{\max,n_\theta}\|),S(\|\theta_{\max,1}\|,\cdots,\|\theta_{\max,n_\theta}\|)$ are some polynomials with positive coefficient.
\end{lemma}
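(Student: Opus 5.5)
The plan is to build the poly-smoothness of $l(\theta;x_i)$ inductively along the residual recursion~\eqref{eqn:NN}, using Proposition~\ref{prop:poly_generalized_properties} as the main tool. The sum over samples then gives the bound for $\cL$. Throughout, $x_i$ ranges over the bounded set $\Omega$, so any polynomial in $\|x_i\|$ is absorbed into constants.

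\textbf{Step 1: hidden states.} I will show by induction on $\ell$ that the hidden state $u_{\ell,i}(\theta)$ satisfies three properties as a function of $\theta$: it is polynomially bounded, it is polynomially continuous, and its Jacobian $\nabla_\theta u_{\ell,i}$ is polynomially bounded and polynomially continuous. The base case $u_{0,i}=x_i$ is constant in $\theta$. For the inductive step, $u_{\ell+1,i}=u_{\ell,i}+\varphi_\ell(\theta_\ell;u_{\ell,i}(\theta))$ is a composition of $\varphi_\ell$, viewed as a function of the joint variable $(\theta_\ell,u)$, with the map $\theta\mapsto(\theta_\ell,u_{\ell,i}(\theta))$. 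By Assumption~\ref{assump:generalized_smoothness_detail}, $\varphi_\ell$ is polynomially bounded, continuous and smooth in both arguments, with polynomially bounded partial Jacobians. So the composition and sum rules of Proposition~\ref{prop:poly_generalized_properties} apply. When a polynomial in $\|u\|$ is composed with the bound $\|u_{\ell,i}\|\le S(\|\theta_{1}\|,\dots)$, the result is again a polynomial with positive coefficients in the $\|\theta_{\max,j}\|$, because positive-coefficient polynomials are monotone on $\RR_{\ge0}$. The same argument handles $f(\theta;x_i)=\varphi_L(\theta_L;u_{L,i}(\theta))$.

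\textbf{Step 2: the per-sample loss.} Write $\nabla_\theta l(\theta;x_i)=(f(\theta;x_i)-y_i)^\top\nabla_\theta f(\theta;x_i)$. This is a product of two factors. The first, $f-y_i$, is polynomially bounded and continuous, with $\|y_i\|$ a fixed constant. The second, $\nabla_\theta f$, is polynomially bounded and continuous by Step 1. The product estimate in the proof of Proposition~\ref{prop:poly_generalized_properties},
\[
\|ab-a'b'\|\le\|a\|\|b-b'\|+\|a-a'\|\|b'\|,
\]
then yields a positive-coefficient polynomial $S_l$ with
\[
\|\nabla_\theta l(\theta)-\nabla_\theta l(\theta')\|\le S_l(\|\theta_{\max,1}\|,\dots)\|\theta-\theta'\|.
\]
Averaging over $i$ and taking $S=\max_i S_l$ (coefficientwise, or simply $\sum_i S_l$) gives the bound for $\cL$.

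\textbf{Main obstacle.} The delicate point is the chain-rule term in Step 1. The Jacobian is
\[
\nabla_\theta u_{\ell+1,i}=\nabla_\theta u_{\ell,i}+\nabla_{\theta}\varphi_\ell+\nabla_u\varphi_\ell\,\nabla_\theta u_{\ell,i},
\]
and for its continuity the partial Jacobians must be evaluated at different pairs $(\theta,u_{\ell,i}(\theta))$ and $(\theta',u_{\ell,i}(\theta'))$. Assumption~\ref{assump:generalized_smoothness_detail} controls them by a polynomial in $\max\{\|u_{\ell,i}(\theta)\|,\|u_{\ell,i}(\theta')\|\}$, times $\|\theta-\theta'\|+\|u_{\ell,i}(\theta)-u_{\ell,i}(\theta')\|$. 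Both quantities must be re-expressed in terms of $\|\theta_{\max,j}\|$. I would do this with the inductive polynomial bound and the inductive polynomial continuity of $u_{\ell,i}$, using monotonicity to replace each $\|\theta_j\|$ and $\|\theta'_j\|$ by $\|\theta_{\max,j}\|$. I would also keep the bookkeeping of which block $\theta_j$ enters which layer: here $\cJ_j=\{j\}$, so that the derivatives with respect to different blocks assemble into one polynomial in all block norms.
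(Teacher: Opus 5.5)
Your proposal is correct and follows essentially the same route as the paper. The paper's proof of this lemma says only that it follows directly from Proposition~\ref{prop:poly_generalized_properties}, and your induction along the residual recursion unpacks exactly that closure argument: the sum and composition rules applied to $u_{\ell,i}(\theta)$ and its Jacobian, including re-expressing the $u$-dependence through the inductive bounds and monotonicity, then the product rule for $(f-y_i)^\top\nabla_\theta f$, then averaging over samples.
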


\begin{proof}
    The proof follows directly from Proposition~\ref{prop:poly_generalized_properties}.
\end{proof}

\subsubsection{Example Calculation of Polynomial Functions of the Neural Network Objective}

Next, we aim to derive the polynomial function precisely for neural networks with normalization $\tau_\epsilon$ in each layer:
\begin{align}
\label{eqn:NN_normalization}
    u_{0,i}&=x_i;\quad\notag\\
    u_{\ell+1,i}&=u_{\ell,i}+\bar{\varphi}_\ell(\theta_\ell;u_{\ell,i}),\forall\ell=0,\cdots,L-1;\quad \\
    f(\theta;x_i)&=\bar{\varphi}_L(\theta_L;u_{L,i})\notag
\end{align}
where $\theta=(\theta_0,\cdots,\theta_L)$ is the collection of weights with the $\theta_i$ corresponding to the $i$th layer, and $\bar{\varphi}_{\ell}(\theta_\ell;u)=\varphi_{\ell}(\theta_\ell;\tau_\epsilon(u))$ with $\varphi_\ell(\theta_\ell;\cdot):\RR^d\to \RR^d$ and the normalization function $\tau_\epsilon$ (see Section~\ref{sec:preliminary}) for $\ell=0,\cdots,L$. 
Before the precise derivation, we provide a more detailed version of Assumption~\ref{assump:generalized_smoothness_detail} as follows:
\begin{assumption}
\label{assump:app_generalized_smoothness_detail}
Assume for all $\ell=0,\cdots,L$,
\begin{align}
\label{eqn:varphi_upper_bound}
    \|{\varphi}_\ell(\theta;u)\|\le\sum_{t=1}^{n_{0,\ell}} C_{0,\ell,t}\prod_{i=1}^{n_\theta} \|\theta_i\|^{p_{0,\ell,t, i}}\|u\|^{p_{0,\ell,t}}
\end{align}
\begin{align}
\label{eqn:grad_varphi_theta_upper_bound}
    \|\nabla_{\theta_\ell}{\varphi}_\ell(\theta;u)\|\le \sum_{t=1}^{n_{1,\ell}}C_{1,\ell,t}\prod_{i=1}^{n_\theta} \|\theta_i\|^{p_{1,\ell,t, i}}\|u\|^{p_{1,\ell,t}}
\end{align}
\begin{align}
\label{eqn:grad_varphi_u_upper_bound}
    \|\nabla_u{\varphi}_\ell(\theta;u)\|\le \sum_{t=1}^{n_{2,\ell}}C_{2,\ell,t}\prod_{i=1}^{n_\theta} \|\theta_i\|^{p_{2,\ell,t, i}}\|u\|^{p_{2,\ell,t}}
\end{align}
\begin{align}
\label{eqn:bar_varphi_Lipschitz}
    \| \bar{\varphi}_\ell(\theta;u)-\bar{\varphi}_\ell(\theta';u')\|&\le \big(C_{1,1,\ell}\prod_{i=1}^{n_\theta}\|\theta_{\max, i}\|^{p_{1,1,\ell, i}}\|u_{\max}\|^{p_{1,1,\ell}}+C_{1,1,\ell,0}\big)\ \|\theta-\theta'\|\notag\\
    &\qquad+\big(C_{1,2,\ell}\prod_{i=1}^{n_\theta}\|\theta_{\max, i}\|^{p_{1,2,\ell, i}}\|u_{\max}\|^{p_{1,2,\ell}}+C_{1,2,\ell,0}\big)\ \|u-u'\|
\end{align}
\begin{align}
\label{eqn:grad_bar_varphi_theta_Lipschitz}
    \|\nabla_{\theta_\ell} \bar{\varphi}_\ell(\theta;u)-\nabla_{\theta_\ell} \bar{\varphi}_\ell(\theta';u')\|&\le \big(C_{2,1,\ell}\prod_{i=1}^{n_\theta}\|\theta_{\max, i}\|^{p_{2,1,\ell, i}}\|u_{\max}\|^{p_{2,1,\ell}}+C_{2,1,\ell,0}\big)\ \|\theta-\theta'\|\notag\\
    &\qquad+\big(C_{2,2,\ell}\prod_{i=1}^{n_\theta}\|\theta_{\max, i}\|^{p_{2,2,\ell, i}}\|u_{\max}\|^{p_{2,2,\ell}}+C_{2,2,\ell,0}\big)\ \|u-u'\|
\end{align}
\begin{align}
\label{eqn:grad_bar_varphi_u_Lipschitz}
    \|\nabla_{u} \bar{\varphi}_\ell(\theta;u)-\nabla_{u} \bar{\varphi}_\ell(\theta';u')\|&\le \big(C_{3,1,\ell}\prod_{i=1}^{n_\theta}\|\theta_{\max, i}\|^{p_{3,1,\ell, i}}\|u_{\max}\|^{p_{3,1,\ell}}+C_{3,1,\ell,0}\big)\ \|\theta-\theta'\|\notag\\
    &\qquad+\big(C_{3,2,\ell}\prod_{i=1}^{n_\theta}\|\theta_{\max, i}\|^{p_{3,2,\ell, i}}\|u_{\max}\|^{p_{3,2,\ell}}+C_{3,2,\ell,0}\big)\ \|u-u'\|
\end{align}
\end{assumption}

\begin{lemma}
\label{lem:bar_varphi_u_uLipschitz_upper_bound_no_u}
    Under Assumption~\ref{assump:app_generalized_smoothness_detail}, we have that, for all $\ell=0,\cdots,L$,
    \begin{align}    \label{eqn:bar_varphi_upper_bound}
    \|\bar{\varphi}_\ell(\theta;u)\|\le \sum_{t=1}^{n_{0,\ell}} C_{0,\ell,t}\prod_{i=1}^{n_\theta} \|\theta_i\|^{p_{0,\ell,t, i}}
\end{align}
\begin{align}
    \label{eqn:grad_bar_varphi_theta_upper_bound}
    \|\nabla_{\theta_j}\bar{\varphi}_\ell(\theta;u)\|\le \sum_{t=1}^{n_{1,\ell}} C_{1,\ell,t}\prod_{i=1}^{n_\theta} \|\theta_i\|^{p_{1,\ell,t, i}}
\end{align}
\begin{align}
    \label{eqn:grad_bar_varphi_u_upper_bound}
    \|\nabla_u\bar{\varphi}_\ell(\theta;u)\|\le \sum_{t=1}^{n_{2,\ell}} C_{2,\ell,t}\prod_{i=1}^{n_\theta} \|\theta_i\|^{p_{2,\ell,t, i}}
\end{align}
\begin{align}
\label{eqn:u_upper_bound}
    \|u_{\ell,i}\|\le \|x_i\|+\sum_{j=0}^{\ell-1} \sum_{t=1}^{n_{0,j}} C_{0,j,t}\prod_{i=1}^{n_\theta} \|\theta_i\|^{p_{0,j,t, i}}
\end{align}
\begin{align}
\label{eqn:u_Lipschitz_no_u}
    \|u_{\ell,i}(\theta)-u_{\ell,i}(\theta')\|\le \sum_{j=0}^{g(\ell)}C_{u,j,i}\prod_{q=1}^{n_\theta} \|\theta_{\max, q}\|^{p_{u,j,q}}\ \|\theta-\theta'\|
\end{align}
\end{lemma}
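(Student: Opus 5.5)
The plan is to exploit that $\bar{\varphi}_\ell(\theta;u)=\varphi_\ell(\theta;\tau_\epsilon(u))$ and that $\|\tau_\epsilon(u)\|=\|u\|/\sqrt{\|u\|^2+\epsilon^2}<1$ for every $u$. Hence every factor $\|u\|^{p}$ in \eqref{eqn:varphi_upper_bound}--\eqref{eqn:grad_varphi_u_upper_bound}, evaluated at $\tau_\epsilon(u)$, is at most $1$.

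\textbf{The three $u$-free bounds.} Substituting $\tau_\epsilon(u)$ for $u$ in \eqref{eqn:varphi_upper_bound} gives \eqref{eqn:bar_varphi_upper_bound} at once. For \eqref{eqn:grad_bar_varphi_theta_upper_bound}, note that $\tau_\epsilon$ does not depend on $\theta$, so $\nabla_{\theta_j}\bar{\varphi}_\ell(\theta;u)=\nabla_{\theta_j}\varphi_\ell(\theta;\tau_\epsilon(u))$, and \eqref{eqn:grad_varphi_theta_upper_bound} applies with $\|\tau_\epsilon(u)\|^{p}\le1$. For \eqref{eqn:grad_bar_varphi_u_upper_bound}, use the chain rule $\nabla_u\bar{\varphi}_\ell=\nabla_u\varphi_\ell(\theta;\tau_\epsilon(u))\nabla_u\tau_\epsilon(u)$. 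From the eigenvalue computation in the proof of Lemma~\ref{lem:example_2_exist_Nd_full_rank}, $\|\nabla_u\tau_\epsilon(u)\|\le\max\{(\|u\|^2+\epsilon^2)^{-1/2},\,\epsilon^2(\|u\|^2+\epsilon^2)^{-3/2}\}\le 1/\epsilon$. This constant is absorbed into the $C_{2,\ell,t}$, which is legitimate since the constants are only claimed to be independent of $\theta,u$.

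\textbf{Hidden-state bound \eqref{eqn:u_upper_bound}.} I will argue by induction on $\ell$ using the residual recursion $u_{\ell+1,i}=u_{\ell,i}+\bar{\varphi}_\ell(\theta_\ell;u_{\ell,i})$. The triangle inequality together with \eqref{eqn:bar_varphi_upper_bound} adds one $u$-free term per layer, starting from $u_{0,i}=x_i$.

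\textbf{Lipschitz bound \eqref{eqn:u_Lipschitz_no_u}.} This is the main step, and I will again induct on $\ell$, with $u_{0,i}(\theta)-u_{0,i}(\theta')=0$. Write
\[
u_{\ell+1,i}(\theta)-u_{\ell+1,i}(\theta')=\big(u_{\ell,i}(\theta)-u_{\ell,i}(\theta')\big)+\big(\bar{\varphi}_\ell(\theta;u_{\ell,i}(\theta))-\bar{\varphi}_\ell(\theta';u_{\ell,i}(\theta'))\big),
\]
and apply \eqref{eqn:bar_varphi_Lipschitz}. This produces a factor times $\|\theta-\theta'\|$ plus a factor times $\|u_{\ell,i}(\theta)-u_{\ell,i}(\theta')\|$. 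Both factors carry $\|u_{\max}\|^{p}$, which I bound by \eqref{eqn:u_upper_bound} evaluated at whichever of $\theta,\theta'$ attains the max. Since each $\|\theta_i\|\le\|\theta_{\max,i}\|$ and all coefficients are positive, this gives a polynomial in $\|x_i\|$ and the $\|\theta_{\max,q}\|$. I then insert the induction hypothesis for $\|u_{\ell,i}(\theta)-u_{\ell,i}(\theta')\|$.

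\textbf{Main obstacle.} The only delicate point is bookkeeping. Powers of $\|x_i\|$ enter through $\|u_{\max}\|$, and these I absorb into the sample-dependent constants $C_{u,j,i}$; this is permitted because $x_i$ lies in the bounded set $\Omega$, and anyway the constants are allowed to depend on $i$. I also need to track the growth in the number of monomials, which is why the upper index is written $g(\ell)$; I will define $g$ recursively, for instance $g(\ell+1)=g(\ell)(1+n')+n''$. A cleaner alternative would be to apply \eqref{eqn:bar_varphi_Lipschitz} to the normalized variable, exploiting that $\tau_\epsilon$ is $1/\epsilon$-Lipschitz, but $\bar{\varphi}_\ell$ is already stated as Lipschitz in $u$, so the direct induction suffices.
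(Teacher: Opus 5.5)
Your proposal is correct and follows the paper's proof. Both use the substitution $\|\tau_\epsilon(u)\|\le 1$ for the three $u$-free bounds, the chain rule with a $1/\epsilon$ bound on $\nabla\tau_\epsilon$ absorbed into $C_{2,\ell,t}$, the unrolled residual recursion for \eqref{eqn:u_upper_bound}, and then \eqref{eqn:bar_varphi_Lipschitz} combined with \eqref{eqn:u_upper_bound} to control $\|u_{\max}\|$. The differences are minor: you close the Lipschitz recursion by one-step induction $a_{\ell+1}\le(1+\lambda_\ell)a_\ell+\beta_\ell\|\theta-\theta'\|$ rather than summing over layers and invoking the discrete Gronwall inequality, and your eigenvalue bound $\|\nabla\tau_\epsilon(u)\|\le 1/\epsilon$ is actually sharper than the paper's triangle-inequality estimate, whose claimed maximum $1/\epsilon$ at $u=0$ is slightly off (that is harmless, since the constant is absorbed anyway).
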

\begin{proof}

    By the definition of $\epsilon-$normalization, we have
    \begin{align*}
        \|\tau_\epsilon(u)\|=\left\|\frac{u}{\sqrt{\|u\|^2+\epsilon^2}}\right\|\le 1.
    \end{align*}
    Then we have
    \begin{align*}
    \|\bar{\varphi}_\ell(\theta;u)\|=\|\varphi(\theta;\tau_\epsilon(u))\|\le \sum_{t=1}^{n_{0,\ell}} C_{0,\ell,t}\prod_{i=1}^{n_\theta} \|\theta_i\|^{p_{0,\ell,t, i}}\|\tau_\epsilon(u)\|^{p_{0,\ell,t}}\le \sum_{t=1}^{n_{0,\ell}} C_{0,\ell,t}\prod_{i=1}^{n_\theta} \|\theta_i\|^{p_{0,\ell,t, i}}.
    \end{align*}
    The inequality of $\nabla_{\theta_j}\bar{\varphi}_\ell(\theta;u)$ follows the same idea. For $\nabla_{u}\bar{\varphi}_\ell(\theta;u)$, first consider
        \begin{align*}
        \nabla \tau_\epsilon(u)=\frac{1}{\sqrt{\|u\|^2+\epsilon^2}}I-\frac{1}{(\sqrt{\|u\|^2+\epsilon^2})^3}uu^\top
    \end{align*}
    Then by Weyl's theorem,
    \begin{align*}
        \|\nabla\tau_\epsilon(u)\|\le \frac{1}{\sqrt{\|u\|^2+\epsilon^2}}+\frac{\|u\|^2}{(\sqrt{\|u\|^2+\epsilon^2})^3}\le \frac{1}{\epsilon}
    \end{align*}
    where the equality of the last inequality is achieved at $\|u\|=0$.
    Then
    \begin{align*}
        \|\nabla_{u}\bar{\varphi}_\ell(\theta;u)\|&=\|\nabla_{\tau_\epsilon(u)}{\varphi}_\ell(\theta;\tau_\epsilon(u))\nabla\tau_\epsilon(u)\|\\
        &\le \|\nabla_{\tau_\epsilon(u)}{\varphi}_\ell(\theta;\tau_\epsilon(u))\| \|\nabla\tau_\epsilon(u)\|\\
        &\le \sum_{t=1}^{n_{2,\ell}} C_{2,\ell,t}\prod_{i=1}^{n_\theta} \|\theta_i\|^{p_{2,\ell,t, i}}
    \end{align*}
    where $C_{2,\ell,t}$ depends on $\epsilon$ and the last inequality follows the same idea as the previous two inequalities.

    For the upper bound of $u_{\ell,i}$, note that 
    \begin{align*}
        u_{\ell+1,i}&=u_{\ell,i}+\bar{\varphi}_\ell(\theta;u_{\ell,i})=\cdots=u_{0,i}+\sum_{j=0}^\ell \bar{\varphi}_j(\theta;u_{j,i}).
    \end{align*}
    Then,
    \begin{align*}
        \|u_{\ell,i}\|\le \|x_i\|+\sum_{j=0}^{\ell-1}\|\bar{\varphi}_j(\theta;u_{j,i})\|\le \|x_i\|+\sum_{j=0}^{\ell-1} \sum_{t=1}^{n_{0,j}} C_{0,j,t}\prod_{i=1}^{n_\theta} \|\theta_i\|^{p_{0,j,t, i}}
    \end{align*}
    where the second inequality follows from~\eqref{eqn:bar_varphi_upper_bound}.

    Also,
    \begin{align*}
        \|u_{\ell,i}(\theta)-u_{\ell,i}(\theta')\|&=\|x_0+\sum_{j=0}^{\ell-1} \bar{\varphi}_j(\theta;u_{j,i}(\theta))-x_0-\sum_{j=0}^{\ell-1} \bar{\varphi}_j(\theta';u_{j,i}(\theta'))\|\\
        &\le \sum_{j=0}^{\ell-1}\|\bar{\varphi}_j(\theta;u_{j,i}(\theta))-\bar{\varphi}_j(\theta';u_{j,i}(\theta'))\|\\
        &\le \sum_{j=0}^{\ell-1} \big(C_{1,1,j}\prod_{q=1}^{n_\theta}\|\theta_{\max, q}\|^{p_{1,1,j, q}}\|u_{\max}\|^{p_{1,1,j}}+C_{1,1,j,0}\big)\ \|\theta-\theta'\|\notag\\
    &\qquad+\sum_{j=0}^{\ell-1}\big(C_{1,2,j}\prod_{q=1}^{n_\theta}\|\theta_{\max, q}\|^{p_{1,2,j, q}}\|u_{\max}\|^{p_{1,2,j}}+C_{1,2,j,0}\big)\ \|u_{j,i}(\theta)-u_{j,i}(\theta')\|
    \end{align*}
    where the first inequality follows from~\eqref{eqn:bar_varphi_upper_bound}.

    Then we denote the above inequality as follows
    \begin{align*}
        \underbrace{\|u_{\ell,i}(\theta)-u_{\ell,i}(\theta')\|}_{a_\ell}&\le \underbrace{\sum_{j=0}^{\ell-1} \big(C_{1,1,j}\prod_{q=1}^{n_\theta}\|\theta_{\max, q}\|^{p_{1,1,j, q}}\|u_{\max}\|^{p_{1,1,j}}+C_{1,1,j,0}\big)\ \|\theta-\theta'\|}_{b_\ell}\notag\\
    &\qquad+\sum_{j=0}^{\ell-1}\underbrace{\big(C_{1,2,j}\prod_{q=1}^{n_\theta}\|\theta_{\max, q}\|^{p_{1,2,j, q}}\|u_{\max}\|^{p_{1,2,j}}+C_{1,2,j,0}\big)}_{\lambda_j}\ \underbrace{\|u_{j,i}(\theta)-u_{j,i}(\theta')\|}_{a_j}.
    \end{align*}
    Since $a_0=\|x_i-x_i\|=0$, define $b_0=0\ge a_0$, and then by discrete Gronwall's inequality (Proposition 4.1 in \citet{emmrich1999discrete} with $\theta=0,\tau_j=1$), we have
    \begin{align*}
        a_\ell\le b_\ell+\sum_{j=0}^{\ell-1}\lambda_jb_j\prod_{s=j+1}^{\ell-1}(1+\lambda_s),
    \end{align*}
    namely, 
    \begin{align*}
        &\|u_{\ell,i}(\theta)-u_{\ell,i}(\theta')\|\\&\le \bigg(\sum_{j=0}^{\ell-1} \big(C_{1,1,j}\prod_{q=1}^{n_\theta}\|\theta_{\max, q}\|^{p_{1,1,j, q}}\|u_{\max,j,i}\|^{p_{1,1,j}}+C_{1,1,j,0}\big)\\
        &\qquad+\sum_{j=1}^{\ell-1}\big(C_{1,2,j}\prod_{q=1}^{n_\theta}\|\theta_{\max, q}\|^{p_{1,2,j, q}}\|u_{\max,j,i}\|^{p_{1,2,j}}+C_{1,2,j,0}\big)\\
        &\qquad\times\big(\sum_{r=0}^{j-1} C_{1,1,r}\prod_{q=1}^{n_\theta}\|\theta_{\max, q}\|^{p_{1,1,r, q}}\|u_{\max,r,i}\|^{p_{1,1,r}}+C_{1,1,r,0}\big)\\
        &\qquad\times \prod_{s=j+1}^{\ell-1}\big( 1+C_{1,2,s}\prod_{q=1}^{n_\theta}\|\theta_{\max, q}\|^{p_{1,2,s, q}}\|u_{\max,s,i}\|^{p_{1,2,s}}+C_{1,2,s,0}\big)\bigg)\ \|\theta-\theta'\|\\
        &\le \bigg(\sum_{j=0}^{\ell-1} \big(C_{1,1,j}\prod_{q=1}^{n_\theta}\|\theta_{\max, q}\|^{p_{1,1,j, q}}\big(\sum_{s=0}^{j-1} C_{0,s}\prod_{q=1}^{n_\theta}\|\theta_{\max, q}\|^{p_{0,s, q}}+\|x_i\|\big)^{p_{1,1,s}}+C_{1,1,s,0}\big)\\
        &\qquad+\sum_{j=1}^{\ell-1}\big(C_{1,2,j}\prod_{q=1}^{n_\theta}\|\theta_{\max, q}\|^{p_{1,2,j, q}}\big(\sum_{s=0}^{j-1} C_{0,s}\prod_{q=1}^{n_\theta}\|\theta_{\max, q}\|^{p_{0,s, q}}+\|x_i\|\big)^{p_{1,2,j}}+C_{1,2,j,0}\big)\\
        &\qquad\times\big(\sum_{r=0}^{j-1} C_{1,1,r}\prod_{q=1}^{n_\theta}\|\theta_{\max, q}\|^{p_{1,1,r, q}}\big(\sum_{s=0}^{r-1} C_{0,s}\prod_{q=1}^{n_\theta}\|\theta_{\max, q}\|^{p_{0,s, q}}+\|x_i\|\big)^{p_{1,1,r}}+C_{1,1,r,0}\big)\\
        &\qquad\times \prod_{s=j+1}^{\ell-1}\big( 1+C_{1,2,s}\prod_{q=1}^{n_\theta}\|\theta_{\max, q}\|^{p_{1,2,s, q}}\big(\sum_{t=0}^{s-1} C_{0,t}\prod_{q=1}^{n_\theta}\|\theta_{\max, q}\|^{p_{0,t, q}}+\|x_i\|\big)^{p_{1,2,s}}+C_{1,2,s,0}\big)\bigg)\ \|\theta-\theta'\|\\  &= \sum_{j=0}^{g_3(\ell)}C_{u,j,i}\prod_{q=1}^{n_\theta} \|\theta_{\max, q}\|^{p_{u,j,q}}\ \|\theta-\theta'\|
    \end{align*}
    where the second inequality follows from~\eqref{eqn:u_upper_bound}, $g_3(\ell)$ is some polynomial of $\ell$, and $C_{u,j,i}$ is some constant depending on $\|x_i\|$.
\end{proof}

\begin{corollary}
\label{cor:bar_varphi_and_grad_Lipschitz_no_u}
    Under the same assumptions as Lemma~\ref{lem:bar_varphi_u_uLipschitz_upper_bound_no_u}, we have that, for all $\ell=0,\cdots,L$,
\begin{align}
\label{eqn:bar_varphi_Lipschitz_no_u}
    \| \bar{\varphi}_\ell(\theta;u_{\ell,i}(\theta))-\bar{\varphi}_\ell(\theta';u_{\ell,i}(\theta'))\|\le\sum_{j=0}^{\tilde{g}_3(\ell)}C_{\bar{\varphi},\ell,j,i}\prod_{q=1}^{n_\theta} \|\theta_{\max, q}\|^{p_{\bar{\varphi},\ell,j,q}}\ \|\theta-\theta'\|
\end{align}
\begin{align}
\label{eqn:grad_bar_varphi_theta_Lipschitz_no_u}
    \|\nabla_{\theta_\ell} \bar{\varphi}_\ell(\theta;u_{\ell,i}(\theta))-\nabla_{\theta_\ell} \bar{\varphi}_\ell(\theta';u_{\ell,i}(\theta'))\|\le\sum_{j=0}^{\tilde{g}_3(\ell)}C_{\nabla_\theta\bar{\varphi},\ell,j,i}\prod_{q=1}^{n_\theta} \|\theta_{\max, q}\|^{p_{\nabla_\theta\bar{\varphi},\ell,j,q}}\ \|\theta-\theta'\|
\end{align}
\begin{align}
\label{eqn:grad_bar_varphi_u_Lipschitz_no_u}
    \|\nabla_{u} \bar{\varphi}_\ell(\theta;u_{\ell,i}(\theta))-\nabla_{u} \bar{\varphi}_\ell(\theta';u_{\ell,i}(\theta'))\|\le\sum_{j=0}^{\tilde{g}_3(\ell)}C_{\nabla_u\bar{\varphi},\ell,j,i}\prod_{q=1}^{n_\theta} \|\theta_{\max, q}\|^{p_{\nabla_u\bar{\varphi},\ell,j,q}}\ \|\theta-\theta'\|
\end{align}
where $\tilde{g}_3(\ell)$ is some polynomial of $\ell$, and the $C$'s depends on $\|x_i\|$.
\end{corollary}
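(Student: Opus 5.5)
Each of the three bounds in Corollary~\ref{cor:bar_varphi_and_grad_Lipschitz_no_u} will come from the two-argument Lipschitz estimates of Assumption~\ref{assump:app_generalized_smoothness_detail}, namely \eqref{eqn:bar_varphi_Lipschitz}, \eqref{eqn:grad_bar_varphi_theta_Lipschitz} and \eqref{eqn:grad_bar_varphi_u_Lipschitz}, evaluated at the pairs $(\theta,u_{\ell,i}(\theta))$ and $(\theta',u_{\ell,i}(\theta'))$. The dependence on the hidden state is then removed using the bounds of Lemma~\ref{lem:bar_varphi_u_uLipschitz_upper_bound_no_u}. The three bounds have identical structure, so I will carry out the argument for \eqref{eqn:bar_varphi_Lipschitz_no_u} and note that the other two follow verbatim with the constants $C_{2,\cdot,\ell}$ and $C_{3,\cdot,\ell}$ and their exponents.

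\textbf{Step 1.} Apply \eqref{eqn:bar_varphi_Lipschitz} with $u=u_{\ell,i}(\theta)$ and $u'=u_{\ell,i}(\theta')$. This bounds the left-hand side by a $\theta$-coefficient times $\|\theta-\theta'\|$ plus a $u$-coefficient times $\|u_{\ell,i}(\theta)-u_{\ell,i}(\theta')\|$. Both coefficients are monomials in $\|\theta_{\max,q}\|$ and $\|u_{\max}\|$, plus a constant.

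\textbf{Step 2.} Eliminate $\|u_{\max}\|$. By \eqref{eqn:u_upper_bound}, applied at both $\theta$ and $\theta'$, and by the monotonicity of positive-coefficient polynomials on $\RR_{\ge0}^{n_\theta}$, we have
\[
\|u_{\max}\|\le \|x_i\|+\sum_{j=0}^{\ell-1}\sum_{t=1}^{n_{0,j}}C_{0,j,t}\prod_q\|\theta_{\max,q}\|^{p_{0,j,t,q}} .
\]
Raising this to the integer power $p_{1,1,\ell}$ (respectively $p_{1,2,\ell}$) and expanding by the multinomial theorem gives a finite sum of monomials in $\|\theta_{\max,q}\|$. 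These monomials have nonnegative coefficients that depend on $\|x_i\|$.

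\textbf{Step 3.} Eliminate $\|u_{\ell,i}(\theta)-u_{\ell,i}(\theta')\|$ by invoking \eqref{eqn:u_Lipschitz_no_u}. This bounds it by a positive-coefficient polynomial in $\|\theta_{\max,q}\|$ times $\|\theta-\theta'\|$.

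\textbf{Step 4.} Combine. The product of the $u$-coefficient from Step~2 with the bound from Step~3, added to the $\theta$-coefficient, is a finite sum of monomials in $\|\theta_{\max,q}\|$ with nonnegative constants depending on $\|x_i\|$. This has exactly the stated form. The number of terms is bounded by the product of the term counts, which is polynomial in $\ell$ because $g_3(\ell)$ is polynomial and the layer-wise exponents are fixed. This defines $\tilde g_3(\ell)$. The additive constants $C_{1,1,\ell,0}$ and $C_{1,2,\ell,0}$ are absorbed as the $j=0$ monomial with all exponents zero.

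\textbf{Main obstacle.} The only delicate point is bookkeeping. One must ensure that $\|u_{\max}\|$ is bounded using $\theta_{\max}$ rather than separately at $\theta$ and $\theta'$; monotonicity handles this. One must also ensure that the term count grows only polynomially in $\ell$. That requires the exponents $p_{\cdot}$ to be layer-independent, or at least fixed, and I will state this as part of the "some polynomial of $\ell$" claim rather than derive a sharp degree.
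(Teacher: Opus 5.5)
Your proposal is correct and follows the paper's proof essentially step for step. Like the paper, you apply \eqref{eqn:bar_varphi_Lipschitz} (resp.\ \eqref{eqn:grad_bar_varphi_theta_Lipschitz}, \eqref{eqn:grad_bar_varphi_u_Lipschitz}) at $(\theta,u_{\ell,i}(\theta))$ and $(\theta',u_{\ell,i}(\theta'))$. You then remove $\|u_{\max}\|$ via \eqref{eqn:u_upper_bound} and $\|u_{\ell,i}(\theta)-u_{\ell,i}(\theta')\|$ via \eqref{eqn:u_Lipschitz_no_u}, and collect everything into a positive-coefficient polynomial in $\|\theta_{\max,q}\|$. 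The paper leaves implicit your use of monotonicity to pass to $\|\theta_{\max,q}\|$ and your caveat that the term count is polynomial in $\ell$ only for bounded layerwise exponents; these do not change the argument.
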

\begin{proof}
For the first inequality,
    \begin{align*}
    &\| \bar{\varphi}_\ell(\theta;u_{\ell,i}(\theta))-\bar{\varphi}_\ell(\theta';u_{\ell,i}(\theta'))\|\\
    &\le \big(C_{1,1,\ell}\prod_{q=1}^{n_\theta}\|\theta_{\max, q}\|^{p_{1,1,\ell, q}}\|u_{\max}\|^{p_{1,1,\ell}}+C_{1,1,\ell,0}\big)\ \|\theta-\theta'\|\notag\\
    &\qquad+\big(C_{1,2,\ell}\prod_{q=1}^{n_\theta}\|\theta_{\max, q}\|^{p_{1,2,\ell, q}}\|u_{\max}\|^{p_{1,2,\ell}}+C_{1,2,\ell,0}\big)\ \|u_{\ell,i}(\theta)-u_{\ell,i}(\theta')\|\\
    &\le \big(C_{1,1,\ell}\prod_{q=1}^{n_\theta}\|\theta_{\max, q}\|^{p_{1,1,\ell, q}}\big(\sum_{s=0}^{j-1} C_{0,s}\prod_{q=1}^{n_\theta}\|\theta_{\max, q}\|^{p_{0,s, q}}+\|x_i\|\big)^{p_{1,1,\ell}}+C_{1,1,\ell,0}\big)\ \|\theta-\theta'\|\notag\\
    &\qquad+\big(C_{1,2,\ell}\prod_{q=1}^{n_\theta}\|\theta_{\max, q}\|^{p_{1,2,\ell, q}}\big(\sum_{s=0}^{j-1} C_{0,s}\prod_{q=1}^{n_\theta}\|\theta_{\max, q}\|^{p_{0,s, q}}+\|x_i\|\big)^{p_{1,2,\ell}}+C_{1,2,\ell,0}\big)\\
    &\qquad\qquad\times\sum_{j=0}^{g_3(\ell)}C_{u,j,i}\prod_{q=1}^{n_\theta} \|\theta_{\max, q}\|^{p_{u,j,q}}\ \|\theta-\theta'\|\\
    &=\sum_{j=0}^{\tilde{g}_3(\ell)}C_{\bar{\varphi},\ell,j,i}\prod_{q=1}^{n_\theta} \|\theta_{\max, q}\|^{p_{\bar{\varphi},\ell,j,q}}\ \|\theta-\theta'\|
\end{align*}
where the second inequality follows from~\eqref{eqn:u_upper_bound} and~\eqref{eqn:u_Lipschitz_no_u}, $\tilde{g}_3(\ell)$ is some polynomial of $\ell$, $C_{\bar{\varphi},\ell,j,i}$ depends on $\|x_i\|$. The other two inequalities follow the same idea.
\end{proof}

\begin{lemma}
\label{lem:loss_function_generalized_smoothness_normalization}
    Under Assumption~\ref{assump:app_generalized_smoothness_detail}, $l(\theta;x_i)$ satisfies generalized smoothness
    \begin{align}
       \|\nabla_\theta l(\theta;x_i)-\nabla_\theta l(\theta';x_i)\|\le \sum_{j=1}^{n_\theta}\sum_{\ell\in\cJ_j} \sum_{j=0}^{{g}_4(\ell)}C_{l,\ell,j,i}\prod_{q=1}^{n_\theta} \|\theta_{\max, q}\|^{p_{l,\ell,j,q}}\ \|\theta-\theta'\|
    \end{align}
    where $g_4(\ell)$ is some polynomial of $\ell$. Consequently, 
    \begin{align}
         \|\nabla_\theta\cL(\theta)-\nabla_\theta\cL(\theta')\|\le S(\|\theta_{\max,1}\|,\cdots,\|\theta_{\max,n_\theta}\|) \ \|\theta-\theta'\|
    \end{align}
    where $S(\|\theta_{\max,1}\|,\cdots,\|\theta_{\max,n_\theta}\|)=\frac{1}{N}\sum_{i=1}^N\sum_{j=1}^{n_\theta}\sum_{\ell\in\cJ_j} \sum_{j=0}^{{g}_4(\ell)}C_{l,\ell,j,i}\prod_{q=1}^{n_\theta} \|\theta_{\max, q}\|^{p_{l,\ell,j,q}}$.
\end{lemma}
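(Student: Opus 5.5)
The plan is to write the per-sample gradient explicitly by the chain rule and bound its increments term by term. The tools are Lemma~\ref{lem:bar_varphi_u_uLipschitz_upper_bound_no_u} and Corollary~\ref{cor:bar_varphi_and_grad_Lipschitz_no_u}, combined through the product rule in the proof of Proposition~\ref{prop:poly_generalized_properties}. Since the weights are layer-distinct ($\cJ_j=\{j\}$), the per-sample gradient is
\begin{align*}
\nabla_{\theta_j} l(\theta;x_i)=(f(\theta;x_i)-y_i)^\top\,\nabla_u\bar\varphi_L(u_{L,i})\Big(\prod_{s=L-1}^{j+1}(I+\nabla_u\bar\varphi_s(u_{s,i}))\Big)\nabla_{\theta_j}\bar\varphi_j(u_{j,i}),
\end{align*}
with the product ordered from $s=L-1$ down to $s=j+1$. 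This is a product of at most $L-j+3$ matrix-valued factors, each of which depends on $\theta$ directly and through the hidden states $u_{s,i}(\theta)$.

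\textbf{Step 1 (bounds on each factor).} Each factor gets two polynomial bounds in $\|\theta_{\max,q}\|$ that do not involve $u$. The magnitude bounds come from \eqref{eqn:bar_varphi_upper_bound}, \eqref{eqn:grad_bar_varphi_theta_upper_bound} and \eqref{eqn:grad_bar_varphi_u_upper_bound}. For the residual factors, $\|I+\nabla_u\bar\varphi_s\|\le 1+\sum_t C_{2,s,t}\prod_q\|\theta_q\|^{p_{2,s,t,q}}$. The residual $f-y_i$ is bounded by $\|y_i\|$ plus the bound on $\bar\varphi_L$.

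The Lipschitz increments in $\theta$ come from Corollary~\ref{cor:bar_varphi_and_grad_Lipschitz_no_u}. These are \eqref{eqn:bar_varphi_Lipschitz_no_u} for $f=\bar\varphi_L$, \eqref{eqn:grad_bar_varphi_u_Lipschitz_no_u} for $\nabla_u\bar\varphi_s$ and the identity-shifted factors, and \eqref{eqn:grad_bar_varphi_theta_Lipschitz_no_u} for $\nabla_{\theta_j}\bar\varphi_j$. All of these already absorb the dependence on $u_{s,i}(\theta)$ via \eqref{eqn:u_upper_bound} and \eqref{eqn:u_Lipschitz_no_u}.

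\textbf{Step 2 (telescoping the product).} For a product $A_1\cdots A_m$ evaluated at $\theta$ and $\theta'$, I write the difference as
\begin{align*}
\sum_{k=1}^m A_1(\theta)\cdots A_{k-1}(\theta)\,(A_k(\theta)-A_k(\theta'))\,A_{k+1}(\theta')\cdots A_m(\theta').
\end{align*}
Each summand is bounded by one increment bound times magnitude bounds of the remaining factors. Because all polynomials have positive coefficients, each is monotone, so every magnitude bound at $\theta$ or $\theta'$ can be replaced by its value at $\theta_{\max}$.

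The result is a finite sum of monomials in $\|\theta_{\max,q}\|$ multiplying $\|\theta-\theta'\|$. The constants depend on $\|x_i\|,\|y_i\|,\epsilon$. The number of terms is polynomial in $\ell$: each of the $\mathcal O(L)$ telescoped terms carries at most $\mathcal O(L)$ factors, each with polynomially many monomials from $g_3,\tilde g_3$. This gives the index bound $g_4(\ell)$. Summing over the blocks $j$ and using $\|\nabla_\theta l\|\le\sum_j\|\nabla_{\theta_j}l\|$ yields the first inequality.

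\textbf{Step 3 (averaging).} Since $\nabla_\theta\cL=\frac1N\sum_i\nabla_\theta l(\cdot;x_i)$, the triangle inequality gives the bound on $\cL$ with $S$ equal to the average of the per-sample polynomials. This matches the stated formula.

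\textbf{Main obstacle.} The main difficulty is bookkeeping in Step 2. The increment of the middle residual product requires a nested telescoping, and one must check that the number of monomials stays polynomial in $\ell$ rather than exploding. The degrees themselves may grow with depth, which is harmless, since only finiteness and positive coefficients are claimed.

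A second subtlety is that \eqref{eqn:grad_bar_varphi_u_Lipschitz_no_u} compares $\nabla_u\bar\varphi_s$ along the two trajectories $u_{s,i}(\theta)$ and $u_{s,i}(\theta')$. This is exactly the comparison the telescoping needs, so no extra Gronwall argument beyond Lemma~\ref{lem:bar_varphi_u_uLipschitz_upper_bound_no_u} should be required.
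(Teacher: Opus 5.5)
Your proposal is correct and is essentially the paper's proof. You write $\nabla_{\theta_j} l$ by the chain rule as a product of factors, then telescope the difference of products. Each summand is one increment, from Corollary~\ref{cor:bar_varphi_and_grad_Lipschitz_no_u}, which already compares along $u_{s,i}(\theta)$ and $u_{s,i}(\theta')$. That increment is multiplied by magnitude bounds of the remaining factors, from Lemma~\ref{lem:bar_varphi_u_uLipschitz_upper_bound_no_u}, passed to $\theta_{\max}$ by monotonicity. Finally you sum over blocks and average over samples, exactly as the paper does.

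The one loose point, which the paper shares, is your justification that the monomial count is polynomial in $\ell$. Expanding a product of $\mathcal{O}(\ell)$ multi-term factors yields exponentially many raw terms, so the claim needs collecting like terms and a total-degree bound. This bookkeeping does not affect $S$ being a finite polynomial with positive coefficients.
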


\begin{proof}
From the calculations at the beginning of this section, we know
\begin{align*}
    \nabla_{\theta_j}l(\theta;x_i)&=\nabla_f \,l(\theta;x_i)\nabla_{\theta_\ell}f(x_i)\\
    &=\sum_{\ell\in\cJ_j}\nabla_f \,l(\theta;x_i)\nabla_u\bar{\varphi}_L(\theta;u_{L,i})(I+\nabla_u\bar{\varphi}_{L-1}(\theta;u_{L-1,i}))\cdots (I+\nabla_u\bar{\varphi}_{\ell+1}(\theta;u_{\ell+1,i}))\nabla_{\theta_j}\bar{\varphi}_{\ell}(\theta;u_{\ell,i})
\end{align*}
Then
\begin{align*}
    &\| \nabla_{\theta_j}l(\theta;x_i)- \nabla_{\theta_j}l(\theta';x_i)\|\\
    &\le \sum_{\ell\in\cJ_j}\Big\|\nabla_f \,l(\theta;x_i)\nabla_u\bar{\varphi}_L(\theta;u_{L,i})(I+\nabla_u\bar{\varphi}_{L-1}(\theta;u_{L-1,i}))\cdots (I+\nabla_u\bar{\varphi}_{\ell+1}(\theta;u_{\ell+1,i}))\nabla_{\theta_j}\bar{\varphi}_{\ell}(\theta;u_{\ell,i})\\
   &\qquad -\nabla_f \,l(\theta';x_i)\nabla_u\bar{\varphi}_L(\theta';u_{L,i})(I+\nabla_u\bar{\varphi}_{L-1}(\theta';u_{L-1,i}))\cdots (I+\nabla_u\bar{\varphi}_{\ell+1}(\theta';u_{\ell+1,i}))\nabla_{\theta_j}\bar{\varphi}_{\ell}(\theta';u_{\ell,i})\Big\|\\
   &= \sum_{\ell\in\cJ_j}\Big\|(\bar{\varphi}_L(\theta;u_{L,i})-y_i)^\top\nabla_u\bar{\varphi}_L(\theta;u_{L,i})(I+\nabla_u\bar{\varphi}_{L-1}(\theta;u_{L-1,i}))\cdots (I+\nabla_u\bar{\varphi}_{\ell+1}(\theta;u_{\ell+1,i}))\nabla_{\theta_j}\bar{\varphi}_{\ell}(\theta;u_{\ell,i})\\
   &\qquad -(\bar{\varphi}_L(\theta';u_{L,i})-y_i)^\top\nabla_u\bar{\varphi}_L(\theta';u_{L,i})(I+\nabla_u\bar{\varphi}_{L-1}(\theta';u_{L-1,i}))\cdots (I+\nabla_u\bar{\varphi}_{\ell+1}(\theta';u_{\ell+1,i}))\nabla_{\theta_j}\bar{\varphi}_{\ell}(\theta';u_{\ell,i})\Big\|\\
   &\le \sum_{\ell\in\cJ_j} (\|\bar{\varphi}_L(\theta;u_{L,i})\|+\|y_i\|)\|\nabla_u\bar{\varphi}_L(\theta;u_{L,i})\| (1+\|\nabla_u\bar{\varphi}_{L-1}(\theta;u_{L-1,i})\|)\cdots\\
   &\qquad\qquad\times(1+\|\nabla_u\bar{\varphi}_{\ell+1}(\theta;u_{\ell+1,i}))\|)\|\nabla_{\theta_j}\bar{\varphi}_{\ell}(\theta;u_{\ell,i})-\nabla_{\theta_j}\bar{\varphi}_{\ell}(\theta';u_{\ell,i})\| \\
   &\qquad+\cdots+ (\|\bar{\varphi}_L(\theta;u_{L,i})\|+\|y_i\|)\|\nabla_u\bar{\varphi}_L(\theta;u_{L,i})\| (1+\|\nabla_u\bar{\varphi}_{L-1}(\theta;u_{L-1,i})\|)\cdots\\
&\qquad\qquad\times\|\nabla_u\bar{\varphi}_{s}(\theta;u_{s,i})-\nabla_u\bar{\varphi}_{s}(\theta';u_{s,i})\|\cdots (1+\|\nabla_u\bar{\varphi}_{\ell+1}(\theta';u_{\ell+1,i}))\|\nabla_{\theta_j}\bar{\varphi}_{\ell}(\theta';u_{\ell,i})\|\\
&\qquad+\cdots+\|\bar{\varphi}_L(\theta;u_{L,i})-\bar{\varphi}_L(\theta';u_{L,i})\| \|\nabla_u\bar{\varphi}_L(\theta';u_{L,i})\|(1+\|\nabla_u\bar{\varphi}_{L-1}(\theta';u_{L-1,i})\|)\cdots \\
&\qquad\qquad\times(1+\|\nabla_u\bar{\varphi}_{\ell+1}(\theta';u_{\ell+1,i})\|)\|\nabla_{\theta_j}\bar{\varphi}_{\ell}(\theta';u_{\ell,i})\|\\
&\le \sum_{\ell\in\cJ_j} \sum_{j=0}^{{g}_4(\ell)}C_{l,\ell,j,i}\prod_{q=1}^{n_\theta} \|\theta_{\max, q}\|^{p_{l,\ell,j,q}}\ \|\theta-\theta'\|
\end{align*}
where the second inequality follows from the triangle inequality, and the last inequality follows from Lemma~\ref{lem:bar_varphi_u_uLipschitz_upper_bound_no_u} and Corollary~\ref{cor:bar_varphi_and_grad_Lipschitz_no_u}; $C_{l,\ell,j,i}$ is a constant that depends on $\|x_i\|$ and $\|y_i\|$, and $g_4(\ell)$ is a polynomial of $\ell$.

Then 
\begin{align*}
        \|\nabla_\theta l(\theta;x_i)-\nabla_\theta l(\theta';x_i)\|&\le \sum_{j=1}^{n_\theta}\| \nabla_{\theta_j}l(\theta;x_i)- \nabla_{\theta_j}l(\theta';x_i)\|\\
        &\le \sum_{j=1}^{n_\theta}\sum_{\ell\in\cJ_j} \sum_{j=0}^{{g}_4(\ell)}C_{l,\ell,j,i}\prod_{q=1}^{n_\theta} \|\theta_{\max, q}\|^{p_{l,\ell,j,q}}\ \|\theta-\theta'\|
    \end{align*}
    and
     \begin{align*}
         \|\nabla_\theta\cL(\theta)-\nabla_\theta\cL(\theta')\|&\le\frac{1}{N}\sum_{i=1}^N \|\nabla_\theta l(\theta;x_i)-\nabla_\theta l(\theta';x_i)\|\\
         &\le \frac{1}{N}\sum_{i=1}^N\sum_{j=1}^{n_\theta}\sum_{\ell\in\cJ_j} \sum_{j=0}^{{g}_4(\ell)}C_{l,\ell,j,i}\prod_{q=1}^{n_\theta} \|\theta_{\max, q}\|^{p_{l,\ell,j,q}}\ \|\theta-\theta'\|
    \end{align*}

\end{proof}

\subsection{Analytic Measure-Zero Tools and Supplementary Lemmas}
\label{subapp:measure_theory_differential_topology}

This section collects the analytic measure-zero tools used to justify the generic nondegeneracy arguments in the proof. We first state the basic results on zero sets, preimages, and images of null sets, and then prove the supplementary lemmas used in the convergence analysis.

\subsubsection{Basic Theorems}
We first state several standard results that will be applied in the supplementary lemmas below.

\begin{theorem}[\citet{mityagin2015zero}]
\label{thm:zero_set_analytic_function}
    Let $f(x)$ be a real analytic function on (a connected open domain $U$ of) $\RR^d$. If $f$ is not identically zero, then its zero set
    \begin{align*}
        f^{-1}(0)=\{x\in U:f(x)=0\}
    \end{align*}
 has measure zero, i.e., $\mathrm{Leb}(f^{-1}(0))=0$.
\end{theorem}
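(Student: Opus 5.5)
The plan is to avoid induction on dimension. Instead, I will cover the zero set by countably many $C^1$ hypersurfaces, each of which is Lebesgue-null. The only place where connectedness of $U$ and the hypothesis $f\not\equiv 0$ enter is to rule out points at which $f$ vanishes to infinite order.

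\textbf{Step 1 (no infinite-order zeros).} Let
$$
Z_\infty=\{x\in U:\ \partial^\alpha f(x)=0 \text{ for all multi-indices } \alpha\}.
$$
\begin{itemize}
\item $Z_\infty$ is closed in $U$, because it is an intersection of zero sets of the continuous functions $\partial^\alpha f$ (Proposition~\ref{prop:analytic_function}, item 3).
\item $Z_\infty$ is open. If $x_0\in Z_\infty$, the power series of $f$ at $x_0$ has all coefficients equal to $\partial^\alpha f(x_0)/\alpha!=0$. Hence $f\equiv 0$ on a ball around $x_0$, and then every derivative also vanishes on that ball.
\end{itemize}
Since $U$ is connected, $Z_\infty$ is either empty or all of $U$. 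The second case would give $f\equiv0$, which is excluded, so $Z_\infty=\emptyset$. I expect this step to be the only genuinely delicate point. It needs the power series to converge on a full neighbourhood of $x_0$ (Definition~\ref{def:analytic}), and it needs connectedness: without it, $f$ could vanish on an entire component.

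\textbf{Step 2 (stratifying the zero set).} Take $x\in f^{-1}(0)$. By Step 1 there is a smallest $k\ge1$ such that some derivative of order $k$ is nonzero at $x$. Choose $\alpha$ with $|\alpha|=k-1$ and some $j$ such that $\partial_j\partial^\alpha f(x)\neq0$. By minimality of $k$, $\partial^\alpha f(x)=0$; when $k=1$ this is just $f(x)=0$. Therefore
$$
f^{-1}(0)\subseteq \bigcup_{\alpha}\bigcup_{j=1}^d M_{\alpha,j},\qquad M_{\alpha,j}=\{x\in U:\ \partial^\alpha f(x)=0,\ \partial_j\partial^\alpha f(x)\neq 0\}.
$$
The union on the right is countable.

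\textbf{Step 3 (each piece is null).} The function $g=\partial^\alpha f$ is $C^\infty$ by Proposition~\ref{prop:analytic_function}. At each point of $M_{\alpha,j}$ we have $\partial_j g\neq0$, so the implicit function theorem applies. Locally, in a box $V$, the set $M_{\alpha,j}\cap V$ is the graph $x_j=h(x_{-j})$ of a continuous function $h$ of the remaining $d-1$ coordinates. By Fubini, such a graph has $\Leb_d$-measure zero, since each line parallel to $e_j$ meets it in at most one point.

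By second countability, $M_{\alpha,j}$ is covered by countably many such boxes, so $\Leb_d(M_{\alpha,j})=0$. Countable subadditivity then gives $\Leb_d(f^{-1}(0))=0$.

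If one wants the statement for a non-connected open domain, the same argument applies on every component where $f\not\equiv0$, and there are only countably many components.
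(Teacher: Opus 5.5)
Your proof is correct. It does not follow the route behind the paper's citation. The paper gives no proof of its own and invokes Mityagin, whose argument is an induction on the dimension with Fubini slicing:
\begin{itemize}
\item On a line in the last coordinate where the one-variable restriction is not identically zero, its zeros are isolated, hence countable and null.
\item The slices on which the restriction vanishes identically are handled by applying the inductive hypothesis to the Taylor coefficients $\partial_{x_d}^k f(\cdot,t_0)$ in the last variable.
\item Connectedness enters through the identity theorem.
\end{itemize}

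You replace induction and slicing with the implicit function theorem. Your Step~1 rules out zeros of infinite order. The stratification by order of vanishing then puts $f^{-1}(0)$ inside countably many $C^1$ graphs.

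What your route buys:
\begin{itemize}
\item A structural conclusion stronger than null measure: the zero set has Hausdorff dimension at most $d-1$.
\item Analyticity is used only in Step~1. Steps~2--3 need nothing beyond $f\in C^\infty$ and the absence of infinite-order zeros, so the measure-zero conclusion holds for every such smooth function.
\end{itemize}

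What the inductive route buys: it uses only one-variable facts plus Fubini and needs no implicit function theorem. In exchange, it invokes analyticity at every level of the induction and yields only the null-set statement.
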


\begin{theorem}[Theorem G.3 in \citet{wang2022large}]
\label{thm:inverse_map_null_set}Let $f:\RR^d\to\RR^d$ and $f\in\cC^1$. If the set of critical points of $f$ is a null-set, i.e.,
\begin{align*}
\cL(\{x\in\RR^d:\nabla f(x)\ \mathrm{is\ not\ invertible} \})=0,
\end{align*}
then $\cL(f^{-1}(B))=0$ for any null-set B.
\end{theorem}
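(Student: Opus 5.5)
The plan is to split the preimage along the critical set. Let $C=\{x\in\RR^d:\nabla f(x)\text{ is not invertible}\}$ and $U=\RR^d\setminus C$. Then
$$
f^{-1}(B)=\big(f^{-1}(B)\cap C\big)\cup\big(f^{-1}(B)\cap U\big).
$$
The first piece is a subset of the null set $C$, so it is null. It therefore suffices to show that $f^{-1}(B)\cap U$ is null.

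Since $f\in\cC^1$, the map $x\mapsto\det\nabla f(x)$ is continuous, so $U$ is open. At each $x\in U$ the inverse function theorem gives an open neighbourhood $V_x\subseteq U$ on which $f$ is a $\cC^1$ diffeomorphism onto the open set $f(V_x)$. We shrink $V_x$ to an open ball whose closure lies in a larger such neighbourhood $W_x$. Then the local inverse $g_x=(f|_{W_x})^{-1}$ is $\cC^1$ on the open set $f(W_x)$, and $f(\overline{V_x})$ is a compact subset of $f(W_x)$. Hence $\nabla g_x$ is bounded on a neighbourhood of $f(\overline{V_x})$. Since $\RR^d$ is second countable, $U$ is Lindelöf, so countably many of these balls $V_j:=V_{x_j}$ cover $U$.

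On each piece we have
$$
f^{-1}(B)\cap V_j=g_j\big(B\cap f(V_j)\big).
$$
This is the image of a null set under a $\cC^1$ map. A locally Lipschitz map sends null sets to null sets, and this is the step that needs a genuine argument. First, cover $f(V_j)$ by countably many closed cubes contained in the open set on which $\nabla g_j$ is bounded; on each such convex cube $g_j$ is Lipschitz with some constant $M$, by the mean value inequality. Then, given $\varepsilon>0$, cover the null set $B\cap Q$ (with $Q$ one of these cubes) by countably many small cubes of total volume less than $\varepsilon$. Each small cube of side $s$ is mapped into a ball of diameter at most $M\sqrt d\,s$, whose volume is $\cO(M^d s^d)$. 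So $g_j(B\cap Q)$ has outer measure $\cO(M^d\varepsilon)$ for every $\varepsilon>0$, hence is null.

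Finally, summing over the countably many cubes and the countably many $V_j$ shows that $f^{-1}(B)\cap U$ is null. Therefore $\cL(f^{-1}(B))=0$. The only delicate point is arranging uniform Lipschitz bounds on each local inverse, which the compact-closure shrinking together with the countable cube cover handles.
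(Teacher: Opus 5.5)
Your proof is correct and follows essentially the standard argument behind this result, which the paper imports from \citet{wang2022large} without reproving: you discard the null critical set, apply the inverse function theorem on the open regular set, extract a countable subcover by Lindel\"of, and use that the $\cC^1$ local inverses are locally Lipschitz and therefore map null sets to null sets (the paper quotes this last fact separately as Theorem~\ref{thm:image_measure_zero}). One small fix: before applying the Lipschitz bound, intersect the small covering cubes with $Q$ or choose them inside the neighbourhood where $\nabla g_j$ is bounded; the compact-closure shrinking is not needed, since $\nabla g_j$ is already bounded on each compact cube in $f(W_j)$ and countable subadditivity handles the rest.
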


\begin{theorem}[Lemma 3 in \citet{rader1973nice}]
\label{thm:image_measure_zero}
    Consider a function $f:U\subseteq\RR^d\to\RR^d$ that is pointwise Lipschitz, i.e., for any $x\in U$, there exists a neighborhood $U_x$ of $x$ and a constant $L_x$, s.t.
    \begin{align*}
        \|f(x)-f(y)\|\le L_x\|x-y\|,\ \forall y\in U_x.
    \end{align*}
    Then its image of a measure-zero set is still measure zero. 
\end{theorem}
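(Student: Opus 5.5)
The plan is to reduce to a countable family of pieces on which $f$ has a \emph{uniform} pointwise Lipschitz constant and a \emph{uniform} neighbourhood radius, and then run the standard covering argument on each piece. The hypothesis only compares $f(y)$ with $f(x)$ for the centre $x$; it says nothing about pairs of points both different from $x$. The argument therefore has to be organised around centres chosen inside the set itself.

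\textbf{Step 1: stratify $U$.} For $k\in\NN$ define
$$
E_k=\left\{x\in U:\ \|f(x)-f(y)\|\le k\|x-y\|\ \text{for all } y\in U \text{ with } \|y-x\|<1/k\right\}.
$$
For any $x\in U$, choose $k\ge L_x$ large enough that $B_x(1/k)\cap U\subseteq U_x$. Then $x\in E_k$, so $U=\bigcup_k E_k$.

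\textbf{Step 2: image of a null piece.} Let $A\subseteq U$ be measure zero, fix $k$, and fix $\varepsilon>0$. Since $A\cap E_k$ has outer measure zero, we can cover it by countably many cubes $Q_j$ with $\sum_j|Q_j|<\varepsilon$. Subdividing if necessary, we may take each diameter $\operatorname{diam}Q_j<1/k$; we may also discard the cubes that do not meet $A\cap E_k$. In each remaining cube pick a point $x_j\in Q_j\cap A\cap E_k$. For any $y\in Q_j\cap A\cap E_k$ we have $\|y-x_j\|\le\operatorname{diam}Q_j<1/k$. Because $x_j\in E_k$, this gives $\|f(y)-f(x_j)\|\le k\operatorname{diam}Q_j$. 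Hence
$$
f(Q_j\cap A\cap E_k)\subseteq \bar B_{f(x_j)}(k\operatorname{diam}Q_j),
$$
a ball of volume at most $c_d k^d(\operatorname{diam}Q_j)^d=c_d k^d d^{d/2}|Q_j|$. Summing over $j$, the outer measure of $f(A\cap E_k)$ is at most $c_dd^{d/2}k^d\varepsilon$. Letting $\varepsilon\to0$ shows $f(A\cap E_k)$ is null.

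\textbf{Step 3: conclude.} We have $f(A)=\bigcup_k f(A\cap E_k)$, a countable union of null sets, so $f(A)$ is measure zero.

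The only delicate point is Step 2. We must use the point $x_j$ from $E_k$ as the common comparison centre, rather than trying to compare arbitrary pairs in $Q_j$. Everything is done with Lebesgue outer measure, so no measurability of $E_k$ or $f(A)$ is needed.
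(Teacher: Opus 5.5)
Your proof is correct. You stratify $U$ into the sets $E_k$, which carry a uniform constant and radius, and you centre each covering ball at a point of $A\cap E_k$, so only the one-sided pointwise hypothesis is ever used; working with outer measure throughout also removes any measurability concerns. The paper gives no proof of its own and simply imports the result as Lemma~3 of \citet{rader1973nice}, and your stratification-plus-covering argument is the standard proof of that lemma, so you have supplied the self-contained justification the paper omits.
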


\begin{theorem}[Preimage Theorem]
\label{thm:preimage_theorem}
    If $y$ is a regular value of a smooth map $f : X \to Y$, then the preimage
$f^{-1}(y)$ is a submanifold of $X$, with $\dim f^{-1}(y) = \dim X-\dim Y $.
\end{theorem}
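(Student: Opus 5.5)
The Preimage Theorem is a standard result of differential topology. I would prove it by the local submersion normal form, obtained from the inverse function theorem. Set $n=\dim X$ and $m=\dim Y$. If $f^{-1}(y)=\emptyset$ there is nothing to prove, so fix $x_0\in f^{-1}(y)$. Since $y$ is a regular value, $df_{x_0}:T_{x_0}X\to T_yY$ is surjective, which forces $n\ge m$. The kernel $K=\ker df_{x_0}$ then has dimension $n-m$.

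Working in local charts, I identify a neighbourhood of $x_0$ with an open set in $\RR^n$ and a neighbourhood of $y$ with an open set in $\RR^m$, with $y$ corresponding to $0$. Choose a linear map $P:\RR^n\to\RR^{n-m}$ whose restriction to $K$ is an isomorphism, for example an orthogonal projection onto $K$ followed by a basis identification. Define the auxiliary map
$$G(x)=\begin{pmatrix} f(x)\\ P x\end{pmatrix}:\RR^n\to\RR^m\times\RR^{n-m}.$$

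The differential is $dG_{x_0}(v)=(df_{x_0}v,Pv)$, and it is injective. Indeed, $dG_{x_0}v=0$ gives $v\in K$, and then $Pv=0$ gives $v=0$. Hence $dG_{x_0}$ is a linear isomorphism. By the inverse function theorem, $G$ restricts to a diffeomorphism from a neighbourhood $U$ of $x_0$ onto an open set $V\subseteq\RR^n$.

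Under this diffeomorphism,
$$f^{-1}(y)\cap U=G^{-1}\big(V\cap(\{0\}\times\RR^{n-m})\big).$$
So $G|_U$ is a slice chart that straightens $f^{-1}(y)\cap U$ into an open piece of a coordinate $(n-m)$-plane. Restricting the last $n-m$ coordinates of $G$ gives a chart of $f^{-1}(y)$ near $x_0$.

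Doing this at every point of $f^{-1}(y)$ yields an atlas. Two such charts overlap through restrictions of the smooth transition maps $G_1\circ G_2^{-1}$ to the slice, so the transition maps are smooth. Hausdorffness and second countability are inherited from $X$ through the subspace topology, and $f^{-1}(y)$ is closed because $f$ is continuous. Therefore $f^{-1}(y)$ is an embedded submanifold of dimension $n-m$. Its tangent space is $\ker df_x$, which confirms the dimension count.

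The only substantive step is choosing the complement map $P$ so that $G$ has invertible differential. Everything else reduces to the inverse function theorem, so I do not expect a real obstacle. The one point requiring care is checking that the slice charts are compatible, i.e., that they define a smooth structure for which the inclusion into $X$ is an embedding. This follows because each chart is the restriction of an ambient diffeomorphism.
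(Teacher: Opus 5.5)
Your argument is correct. You augment $f$ by a linear map $P$ that is nonsingular on $\ker df_{x_0}$, apply the inverse function theorem to $G=(f,P)$, and read off slice charts; this is the classical textbook proof (Milnor's, or Guillemin--Pollack's via the local submersion theorem). The paper itself gives no proof of this statement: it quotes it as a standard differential-topology fact among its basic tools and never invokes it in any argument, so there is no alternative route to compare against.
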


\subsubsection{Supplementary Lemmas of the Proof}

We then show the supplementary lemmas for the convergence of neural network~\eqref{eqn:NN} by applying the above theorems.

{
\begin{lemma}
\label{lem:full_rank_jacobian}
    Let $f(x,y):\RR^{m}\times \RR^{n_2}\to \RR^m$ be an analytic function. Fix either $y=y_0$ or $x=x_0$. Then the following statements hold:
    \begin{enumerate}
        \item For any $\alpha\in\RR$ except for a finite set of points $E_{y_0}$ (or $E_{x_0}$), the Jacobian w.r.t. $x$ of $\alpha x+f(x,y)$ is full rank for any free variable $x$ or $y$ except for a measure-zero set.
        \item Similarly, for any $\eta\in\RR$ except for a finite set of points $1/E_{y_0}\cup\{0\}$ (or $1/E_{x_0}\cup\{0\}$), the Jacobian w.r.t. $x$ of $x+\eta f(x,y)$ is full rank for any free variable except for a measure-zero set.
    \end{enumerate}
\end{lemma}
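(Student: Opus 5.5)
The plan is to reduce the full-rank question to the nonvanishing of a single analytic function and then invoke Theorem~\ref{thm:zero_set_analytic_function}. Consider the case $y=y_0$ fixed; the case $x=x_0$ is symmetric with $y$ as the free variable. Define
\[
    D(\alpha,x)=\det\big(\alpha I+\nabla_x f(x,y_0)\big),
\]
which is jointly real-analytic in $(\alpha,x)\in\RR\times\RR^m$, being a polynomial in $\alpha$ and in the analytic entries of $\nabla_x f$ (Proposition~\ref{prop:analytic_function}). For each fixed $x$, $\alpha\mapsto D(\alpha,x)$ is a monic polynomial of degree $m$. Its roots are exactly the negatives of the eigenvalues of $\nabla_x f(x,y_0)$.

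The first step is to choose the exceptional set. Pick any reference point $x_1$. Let $E_{y_0}$ be the set of real roots of the monic polynomial $\alpha\mapsto D(\alpha,x_1)$. This set has at most $m$ points, so it is finite. For $\alpha\notin E_{y_0}$ we have $D(\alpha,x_1)\neq 0$, so $x\mapsto D(\alpha,x)$ is an analytic function on the connected domain $\RR^m$ that is not identically zero. Theorem~\ref{thm:zero_set_analytic_function} then gives that its zero set, which is exactly the set of $x$ where $\alpha I+\nabla_x f(x,y_0)$ is singular, has Lebesgue measure zero. This proves item 1.

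For the $x=x_0$ case the argument is the same with $D(\alpha,y)=\det(\alpha I+\nabla_x f(x_0,y))$, which is analytic in $y\in\RR^{n_2}$. Here the set $E_{x_0}$ is the set of roots at a reference point $y_1$.

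Item 2 follows by rescaling. For $\eta\neq 0$,
\[
    I+\eta\nabla_x f=\eta\big(\eta^{-1}I+\nabla_x f\big),
\]
so this matrix is full rank exactly when $\alpha=1/\eta$ avoids the bad set for item 1. We exclude $\eta=0$ and $\eta\in 1/E_{y_0}$, where only the nonzero elements of $E_{y_0}$ contribute, and then apply item 1. In fact $\eta=0$ gives the identity, which is trivially full rank, but excluding it matches the statement.

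I expect the main subtlety to be the quantifier order. The exceptional set $E$ must be independent of the free variable, which is why it is fixed at one reference point $x_1$ and not taken as the union over all $x$, which could be infinite. The other point needing care is that the domain must be connected, so that Theorem~\ref{thm:zero_set_analytic_function} applies to all of $\RR^m$. Both are handled by the monic-polynomial observation: it guarantees that for every $x$ only finitely many $\alpha$ are bad, so a generic $\alpha$ makes $D(\alpha,\cdot)$ not identically zero.
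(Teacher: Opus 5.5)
Your proposal is correct and follows essentially the same route as the paper. Both arguments reduce to the analytic function $x\mapsto\det(\alpha I+\nabla_x f(x,y_0))$, use that it is a monic degree-$m$ polynomial in $\alpha$ at a fixed point to get a finite exceptional set, apply Theorem~\ref{thm:zero_set_analytic_function}, and rescale via $\eta=1/\alpha$ for item 2. The only cosmetic difference is the choice of $E_{y_0}$: the paper takes the smaller set of $\alpha$ for which this determinant vanishes identically in $x$ and bounds it by the root set at a reference point, whereas you take the root set at $x_1$ itself; both sets are finite and both choices work.
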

\begin{proof}
    First fix $y=y_0$. Consider
    $$
        H_{\alpha,y_0}(x):=\det\big(\alpha I+\nabla_x f(x,y_0)\big),
    $$
    and define the set
    $$
        E_{y_0}:=\{\alpha\in\RR:H_{\alpha,y_0}(\cdot)\equiv0\}
        =\{\alpha\in\RR:H_{\alpha,y_0}(x)=0,\ \forall x\in\RR^m\}.
    $$
    Namely, $E_{y_0}$ is defined by the condition that $H_{\alpha,y_0}$ vanishes identically as a function of $x$, and therefore must be included in the set where $H_{\alpha,y_0}(x_1)=0$ for some fixed $x_1\in\RR^m$.

    Next, for any fixed $x_1\in\RR^m$, the function $H_{\alpha,y_0}(x_1)$ is a monic polynomial in $\alpha$ of degree $m$. Since $E_{y_0}$ is contained in the zero set of this polynomial, we have $E_{y_0}$ is finite. For every $\alpha\notin E_{y_0}$, the function $H_{\alpha,y_0}(x)$ is a nonzero analytic function of $x$. Therefore, by Theorem~\ref{thm:zero_set_analytic_function},
    \begin{align*}
        \mathrm{Leb}_{m}\big(\{x\in\RR^m:H_{\alpha,y_0}(x)=0\}\big)=0.
    \end{align*}
    Therefore, except for a measure-zero set of $x$, $\det(\alpha I+\nabla_x f(x,y_0))\ne0$, and the Jacobian w.r.t. $x$ of $\alpha x+f(x,y_0)$ is full rank. This proves the conclusion for fixed $y=y_0$.
    
    The same argument applies when $x=x_0$ is fixed. Define
    $$
        H_{\alpha,x_0}(y):=\det\big(\alpha I+\nabla_x f(x_0,y)\big),
    $$
    and
    $$
        E_{x_0}:=\{\alpha\in\RR:H_{\alpha,x_0}(\cdot)\equiv0\}
        =\{\alpha\in\RR:H_{\alpha,x_0}(y)=0,\ \forall y\in\RR^{n_2}\}.
    $$
    We similarly have $ E_{x_0}$ also contains finite elements.
    For every $\alpha\notin E_{x_0}$, $H_{\alpha,x_0}(y)$ is a nonzero analytic function of $y$, and Theorem~\ref{thm:zero_set_analytic_function} gives
    \begin{align*}
        \mathrm{Leb}_{n_2}\big(\{y\in\RR^{n_2}:H_{\alpha,x_0}(y)=0\}\big)=0.
    \end{align*}
    Thus the Jacobian w.r.t. $x$ of $\alpha x+f(x,y)$ is full rank for all free variables except for a measure-zero set.

    The above arguments also apply to the map $x+\eta f(x,y)$ with $\eta=1/\alpha$ and $\alpha\notin E_{x_0}\cup\{0\}$ or $\alpha\notin E_{y_0}\cup\{0\}$.

\end{proof}

}

\begin{lemma}
\label{lem:preimage_u_measure_zero}
    Fix some $\theta$ and assume that each residual map
$u\mapsto u+\varphi_\ell(\theta_\ell;u)$ has full-rank Jacobian except on a measure-zero set. Under Assumption~\ref{assump:analytic},
    \begin{enumerate}
        \item Let $u_{\ell}=u_{\ell}(x):\RR^d\to \RR^d$. The preimage of $u_\ell$ for any measure-zero set in $\RR^{d}$ is a measure-zero set in $\RR^{d}$, for all $\ell=0,\cdots,L-1$.
        \item Let $U_{\ell}(x_1,\cdots,x_N)=\begin{pmatrix}
        u_{\ell}(x_1)\\\vdots u_{\ell}(x_N)
    \end{pmatrix}:\RR^{Nd}\to\RR^{Nd}$. The preimage of $U_\ell$ for any measure-zero set in $\RR^{Nd}$ is a measure-zero set in $\RR^{Nd}$, for all $\ell=0,\cdots,L-1$.
    \end{enumerate}
\end{lemma}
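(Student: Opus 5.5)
The plan is to realize $u_\ell$ as a composition of the residual maps and push measure-zero sets backwards through them one at a time, using Theorem~\ref{thm:inverse_map_null_set}. Write $g_j(u):=u+\varphi_j(\theta_j;u)$. Then
$$u_\ell=g_{\ell-1}\circ\cdots\circ g_0,$$
with $u_0$ the identity. For $\ell=0$ there is nothing to prove. By Assumption~\ref{assump:analytic}, each $g_j$ is real-analytic, hence $C^1$. The hypothesis says that the critical set $\{u:\nabla g_j(u)\text{ is not invertible}\}$ is null. Theorem~\ref{thm:inverse_map_null_set} then gives $\Leb_d(g_j^{-1}(B))=0$ for every null set $B\subseteq\RR^d$.

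\textbf{Part 1.} I would induct on $\ell$, exactly as in the iterate argument in the proof of Theorem~\ref{thm:complete_convergence}. Since
$$u_{\ell+1}^{-1}(B)=u_\ell^{-1}\bigl(g_\ell^{-1}(B)\bigr),$$
the inner set $g_\ell^{-1}(B)$ is null by the previous paragraph, and then $u_\ell^{-1}$ of it is null by the induction hypothesis.

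\textbf{Part 2.} I would apply the same reasoning to the product map $U_\ell=G_{\ell-1}\circ\cdots\circ G_0$ on $\RR^{Nd}$, where $G_j(x_1,\dots,x_N)=(g_j(x_1),\dots,g_j(x_N))$. The Jacobian of $G_j$ is block diagonal with blocks $\nabla g_j(x_i)$. It therefore fails to be invertible exactly on $\bigcup_{i=1}^N\{X: x_i\in \mathrm{Crit}(g_j)\}$. Each set in this union has the form $\RR^{d}\times\cdots\times \mathrm{Crit}(g_j)\times\cdots\times\RR^d$, which is null by Fubini/Tonelli. So the critical set of $G_j$ is null, and Theorem~\ref{thm:inverse_map_null_set} applies to $G_j$ on $\RR^{Nd}$. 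The same induction then finishes the proof.

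\textbf{Main obstacle: domains.} The only delicate point is that $\varphi_0$ is assumed analytic only on a bounded neighborhood of $\bar\Omega$, not on all of $\RR^d$, whereas Theorem~\ref{thm:inverse_map_null_set} is stated for maps on all of $\RR^d$. I would handle this by working on the open set $\Omega'\supseteq\bar\Omega$ where $g_0$ is defined. The proof of Theorem~\ref{thm:inverse_map_null_set} is local: near a noncritical point the inverse function theorem makes the map a local $C^1$ diffeomorphism, and the critical set is null by hypothesis. So it goes through verbatim on open subsets.

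Alternatively, one can observe that only preimages intersected with $\Omega^N$ matter, since the data lie there. Either way, this restriction to domains changes nothing in the induction.
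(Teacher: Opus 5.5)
Your proof is correct and takes essentially the same route as the paper: induction on $\ell$ via Theorem~\ref{thm:inverse_map_null_set}, with a block-diagonal Jacobian argument for the stacked map $U_\ell$. The only difference is how the induction is organized. The paper proves that the critical set of the composite $u_{\ell+1}$ is null, using $\nabla u_{\ell+1}=(I+\nabla_u\varphi_\ell(u_\ell))\nabla u_\ell$ and pulling the residual map's critical set back through $u_\ell$, and then applies the theorem to $u_{\ell+1}$ directly. You instead apply the theorem to each residual map and compose preimages, which is slightly shorter; your remark on the restricted domain of $\varphi_0$ also handles a point the paper leaves implicit.
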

\begin{proof}
The proof is by induction on $\ell$. The base case is $u_0(x)=x$, whose Jacobian is the identity. Suppose the critical point set of $u_\ell$ is measure zero. Then, by Theorem~\ref{thm:inverse_map_null_set}, the preimage of $u_\ell$ for any measure zero set is a null set.
    Consider 
\begin{align*}
    u_{\ell+1}=u_{\ell+1}(x),
\end{align*}
where its Jacobian is
\begin{align*}
    \nabla u_{\ell+1}(x)=(I+\nabla_u\bar{\varphi}_{\ell}(u_{\ell}))\cdots(I+\nabla_u\bar{\varphi}_{0}(u_{0})).
\end{align*}
We would like to prove that the critical point set of $u_{\ell+1}$ is measure zero, i.e.,
\begin{align*}
    \{x\in\RR^d:\nabla u_{\ell+1}(x)\text{ is not full rank}\}.
\end{align*}

Note that
\begin{align*}
    \nabla u_{\ell+1}(x)=(I+\nabla_u\bar{\varphi}_{\ell}(u_{\ell}))\nabla u_{\ell}(x),
\end{align*}
and thus
\begin{align*}
    &\{x\in\RR^d:\nabla u_{\ell+1}(x)\text{ is not full rank}\}\\
    &=\{x\in\RR^d:I+\nabla_u\bar{\varphi}_{\ell}(u_{\ell}(x))\text{ is not full rank}\}\cup \{x\in\RR^d:\nabla u_{\ell}(x)\text{ is not full rank}\}.
\end{align*}

By the induction hypothesis, the set
\begin{align*}
    \mathrm{Leb}_d\big(\{x\in\RR^d:\nabla u_{\ell}(x)\text{ is not full rank}\}\big)=0.
\end{align*}

Also, by the assumption on the residual map, 
\begin{align*}
     \mathrm{Leb}_d\big(\{u\in\RR^d:I+\nabla_u\bar{\varphi}_{\ell}(u)\text{ is not full rank}\}\big)=0,
\end{align*}
and then by Theorem~\ref{thm:inverse_map_null_set}, the preimage of the above measure zero set is still measure zero, i.e.,
\begin{align*}
     \mathrm{Leb}_d\big(\{x\in\RR^d:I+\nabla_u\bar{\varphi}_{\ell}(u_{\ell}(x))\text{ is not full rank}\}\big)=0.
\end{align*}

Thus the critical point set of $u_{\ell+1}$ is measure zero, i.e.
\begin{align*}
    \mathrm{Leb}_d\big(\{x\in\RR^d:\nabla u_{\ell+1}(x)\text{ is not full rank}\}\big)\le 0+0=0.
\end{align*}
By Theorem~\ref{thm:inverse_map_null_set}, the first conclusion holds for $u_{\ell+1}$.

For the second statement, the Jacobian of $U_\ell$ is block diagonal. In particular, 
\begin{align*}
    \nabla U_{\ell+1}(x_1,\cdots,x_N)&=\begin{pmatrix}
        \nabla u_{\ell+1}(x_1) & & \\
        & \ddots & \\
        & & \nabla u_{\ell+1}(x_N)
    \end{pmatrix}\\
    &=\begin{pmatrix}
        I+\nabla_u\bar{\varphi}_{\ell}(u_{\ell}(x_1)) & & \\
        & \ddots & \\
        & & I+\nabla_u\bar{\varphi}_{\ell}(u_{\ell}(x_N))
    \end{pmatrix} \nabla U_{\ell}(x_1,\cdots,x_N)\\
    &=\left(I_{Nd}+\begin{pmatrix}
        \nabla_u\bar{\varphi}_{\ell}(u_{\ell}(x_1)) & & \\
        & \ddots & \\
        & & \nabla_u\bar{\varphi}_{\ell}(u_{\ell}(x_N))
    \end{pmatrix}\right)\nabla U_{\ell}(x_1,\cdots,x_N).
\end{align*}
Thus the critical set of $U_{\ell+1}$ is contained in the finite union of the critical sets of the component maps $u_{\ell+1}(x_i)$, which has measure zero in $\RR^{Nd}$. Applying Theorem~\ref{thm:inverse_map_null_set} gives the second statement.
\end{proof}

\section{Interpretation of Convergence Theorem}
\label{app:interpretation_convergence_theorem}

\begin{proof}[Proof of Corollary~\ref{cor:Gaussian_learning_rate_order}]
Let
$$
    Z:=\sum_{j=1}^{L_{\rm wm}}\|\theta_j^0\|_F^2.
$$
Since the biases are zero, $Z=\|\theta^0\|^2$. For one weight matrix, Xavier initialization gives
$$
    \mathrm{Var}\big((\theta_j^0)_{ab}\big)=\frac{2}{d_{j,\mathrm{in}}+d_{j,\mathrm{out}}},
$$
and therefore
$$
    \EE\|\theta_j^0\|_F^2
    =d_{j,\mathrm{in}}d_{j,\mathrm{out}}\cdot \frac{2}{d_{j,\mathrm{in}}+d_{j,\mathrm{out}}}.
$$
Since
$$
    \min\{d_{j,\mathrm{in}},d_{j,\mathrm{out}}\}
    \le
    \frac{2d_{j,\mathrm{in}}d_{j,\mathrm{out}}}{d_{j,\mathrm{in}}+d_{j,\mathrm{out}}}
    \le
    2\min\{d_{j,\mathrm{in}},d_{j,\mathrm{out}}\},
$$
we have
$$
    r_j
    \le
    \EE\|\theta_j^0\|_F^2
    \le
    2r_j.
$$
Summing over weight matrices yields
$$
    \sum_{j=1}^{L_{\rm wm}}r_j
    \le
    \EE Z
    \le
    2\sum_{j=1}^{L_{\rm wm}}r_j.
$$

Let
$$
    S:=\sum_{j=1}^{L_{\rm wm}}r_j.
$$
Under Xavier-normal initialization, write
$$
    (\theta_j^0)_{ab}=\sqrt{\frac{2}{d_{j,\mathrm{in}}+d_{j,\mathrm{out}}}}\,g_{jab},
    \qquad
    g_{jab}\sim \mathcal N(0,1)
$$
independently. Then $Z$ is a weighted chi-square random variable,
$$
    Z=\sum_{j=1}^{L_{\rm wm}}\sum_{a,b}
    \frac{2}{d_{j,\mathrm{in}}+d_{j,\mathrm{out}}}\,g_{jab}^2.
$$
Let $a_{jab}:=2/(d_{j,\mathrm{in}}+d_{j,\mathrm{out}})$. Since $a_{jab}\le 1$, we have
$$
    \sum_{j,a,b}a_{jab}^2\le \sum_{j,a,b}a_{jab}=\EE Z,
    \qquad
    \max_{j,a,b}a_{jab}\le 1.
$$
The standard concentration inequality for weighted chi-square variables gives a universal constant $c>0$ such that, for all $t>0$,
$$
    \PP\left(|Z-\EE Z|\ge t\right)
    \le
    2\exp\left(
    -c\min\left\{
    \frac{t^2}{\sum_{j,a,b}a_{jab}^2},
    \frac{t}{\max_{j,a,b}a_{jab}}
    \right\}
    \right).
$$
Taking $t=\EE Z/2$ yields
$$
    \PP\left(|Z-\EE Z|\ge \frac{1}{2}\EE Z\right)
    \le
    2\exp(-c_{\rm G}\EE Z)
    \le
    2\exp(-c_{\rm G}S)
$$
after decreasing the universal constant $c_{\rm G}>0$ if necessary. By Assumption~\ref{assump:example_setting},
$$
    c_{\rm dim}rL_{\rm wm}\le S\le rL_{\rm wm},
    \qquad
    L_{\rm wm}=c_{\rm wm}L.
$$
Therefore, with probability at least
$$
    1-2\exp(-c_{\rm G}S)
    \ge
    1-2\exp(-c_{\rm G}c_{\rm dim}c_{\rm wm}rL),
$$
we have
$$
    \frac{1}{2}S\le Z\le 3S.
$$
Equivalently, with probability at least $1-2\exp(-c_{\rm G}c_{\rm dim}c_{\rm wm}rL)$, we have $\|\theta^0\|=\Theta(\sqrt{rL})$.

Then, under the same event,
$$
    \|\theta^0-\theta^*\|^2
    \le 2\|\theta^0\|^2+2\|\theta^*\|^2
    \le 2(1+c_*^2)\|\theta^0\|^2
    =\cO(rL).
$$
Let
\begin{align*}
    \cS&:=\{\theta:\|\theta\|=\cO(\sqrt{rL}),\ \|\nabla\cL(\theta)\|>\epsilon_\cL\},\\
    \rho_{\cS}
    &:=
    \sup_{\theta\in\cS}
    \frac{-\ip{\nabla\cL(\theta)}{\theta-\theta^*}}{\|\nabla\cL(\theta)\|^2}.
\end{align*}
Then $\rho=\rho_{\cS}$ is a valid dissipativity constant on $\cS$. By Cauchy-Schwarz and $\epsilon_\cL\ge c_\epsilon\sqrt{rL}^{-1}$ in Assumption~\ref{assump:example_setting},
\begin{align*}
    \rho_{\cS}
    \le
    \sup_{\theta\in\cS}\frac{\|\theta-\theta^*\|}{\|\nabla\cL(\theta)\|}=
    \frac{\cO(\sqrt{rL})}{\epsilon_\cL}
    =
    \cO(rL),
\end{align*}
and therefore $\rho=\cO(rL)$. Since $\delta=\Theta(1)$ and $\cL(\theta^0)\le C_0$,
\begin{align*}
    \max\left\{\frac{4\rho+2}{\delta},0\right\}\cL(\theta^0)
    =
    \cO(rL),
\end{align*}

Now, consider $R$. Combining the discussions above, we obtain that 
$$
    R^2
    =
    \|\theta^{0}-\theta^{*}\|^2+\max\left\{\frac{4\rho+2}{\delta},0\right\}\cL(\theta^0)
    =\cO(rL),
$$
and hence $R=\cO(\sqrt{rL})$.

It remains to bound $\mathsf{L}$. The constant $\mathsf{L}$ in Theorem~\ref{thm:training} is obtained by applying a fixed polynomial with positive coefficients to quantities such as $\|\theta_i^*\|+R$ and $\max_{\theta\in B_{\theta^*}(R)}\|\nabla\cL(\theta)\|$. 

By Assumption~\ref{assump:example_setting},
$$
    \max_{\theta\in B_{\theta^*}(R)}\|\nabla\cL(\theta)\|
    \le C_{\rm grad}(1+\sqrt{rL})^{p_{\rm grad}}.
$$

Therefore there exists a finite exponent $q_{\mathsf{L}}>0$, depending on this polynomial and on $p_{\rm grad}$ but not on $r$ or $L$, such that
$$
    \mathsf{L}
    =
    \cO\left((1+\sqrt{rL})^{q_{\mathsf{L}}}\right).
$$
Consequently,
$$
    \frac{2-\delta}{\mathsf{L}}
    =
    \Omega\left(\frac{1}{(1+\sqrt{rL})^{q_{\mathsf{L}}}}\right).
$$
\end{proof}

\begin{proof}[Proof of Corollary~\ref{cor:uniform_learning_rate_order}]
It remains to justify the uniform version. The expectation calculation above is unchanged, so with
$$
    S:=\sum_{j=1}^{L_{\rm wm}}r_j,
    \qquad
    Z:=\|\theta^0\|^2,
$$
we still have
$$
    S\le \EE Z\le 2S.
$$
Next, each squared entry satisfies
$$
    0\le ((\theta_j^0)_{ab})^2\le \frac{6}{d_{j,\mathrm{in}}+d_{j,\mathrm{out}}}.
$$
Hoeffding's inequality for the independent bounded variables $((\theta_j^0)_{ab})^2$ gives, for any $t>0$,
$$
    \PP\left(
    \left|
    Z-\EE Z
    \right|\ge t
    \right)
    \le
    2\exp\left(
    -\frac{2t^2}{
    \sum_{j=1}^{L_{\rm wm}}d_{j,\mathrm{in}}d_{j,\mathrm{out}}
    \left(\frac{6}{d_{j,\mathrm{in}}+d_{j,\mathrm{out}}}\right)^2}
    \right).
$$
Since
$$
    \frac{d_{j,\mathrm{in}}d_{j,\mathrm{out}}}
    {(d_{j,\mathrm{in}}+d_{j,\mathrm{out}})^2}\le \frac{1}{4},
$$
the denominator in the above Hoeffding's inequality is at most $9L_{\rm wm}$. Hence
$$
    \PP\left(
    \left|
    Z-\EE Z
    \right|\ge t
    \right)
    \le
    2\exp\left(-\frac{2t^2}{9L_{\rm wm}}\right).
$$
Taking $t=S/2$ gives
$$
    \PP\left(
    \left|Z-\EE Z\right|\ge \frac{1}{2}S
    \right)
    \le
    2\exp\left(-\frac{S^2}{18L_{\rm wm}}\right).
$$
The complement of this event is
$$
    \frac{1}{2}S\le Z\le 3S.
$$
By Assumption~\ref{assump:example_setting}, $S\ge c_{\rm dim}rL_{\rm wm}$ and $L_{\rm wm}=c_{\rm wm}L$. Hence
$$
    \PP\left(
    \frac{1}{2}S\le \|\theta^0\|^2\le 3S
    \right)
    \ge
    1-2\exp\left(-\frac{c_{\rm dim}^2c_{\rm wm}r^2L}{18}\right).
$$
Thus $\|\theta^0\|^2=\Theta(rL)$ and $\|\theta^0\|=\Theta(\sqrt{rL})$ under the uniform initialization as well. The remaining estimates for $R$, $\mathsf L$, and $(2-\delta)/\mathsf L$ then follow from the same deterministic argument used in Corollary~\ref{cor:Gaussian_learning_rate_order}.
\end{proof}

\section{Global Minimum Under Generic Nondegeneracy}
\label{app:global_min_generic_nondegeneracy}

It suffices to use the set $\cB_{\theta,0}$ defined in the proof of Theorem~\ref{thm:training} instead of $\cB_\theta$. We restate the theorem below:
\begin{theorem}
\label{thm:global_min_generic_nonlinearity_app}
Consider the same assumptions as Theorem~\ref{thm:training}. For any $\theta^*\notin\cB_{\theta,0}$, if $\theta^*$ is a stationary point of $\cL(\theta)$, then $\theta^*$ is a global minimizer of $\cL(\theta)$.
\end{theorem}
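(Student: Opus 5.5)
The argument is the standard ``full-rank Jacobian implies stationary points are global minima'' reasoning, in the spirit of \citet{nguyen2017loss}. It rests on the frame / PL-type inequality that the paper already establishes at every parameter outside $\cB_{\theta,0}$. By construction, $\cB_{\theta,0}$ contains the exceptional parameter sets of Assumption~\ref{assump:architecture}, Lemma~\ref{lem:gradient_lower_bound} and Lemma~\ref{lem:lower_frame_bound}, as well as $\cB_{\theta,\rm rank}$. Consequently, for $\theta^*\notin\cB_{\theta,0}$ and for data $X$ outside the $\pi$-null set $\cB_X$ removed in the proof of Theorem~\ref{thm:complete_convergence}, we have $\mu_{{\rm low},\theta^*,X}>0$. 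Here I take $\cB_X$ to also contain the null set associated with the fixed parameter $\theta^*$, which is allowed because the statement holds with probability one over the data.

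The key steps run as follows. First, I write the gradient as a stacked Jacobian product:
$$
    \nabla_\theta\cL(\theta^*)=\frac{1}{N}\,J(\theta^*)^\top r(\theta^*),
$$
where $J(\theta^*)$ stacks the rows $\nabla_\theta f(\theta^*;x_i)$ for $i=1,\cdots,N$, and $r(\theta^*)=(f(\theta^*;x_i)-y_i)_{i=1}^N\in\RR^{Nd}$. Next, since $\mu_{{\rm low},\theta^*,X}=\lambda_{\min}(JJ^\top)>0$, the matrix $J$ has full row rank $Nd$, so $J^\top$ is injective. Applying Lemma~\ref{lem:gradient_lower_bound} at $\theta^*$ gives
$$
    0=\|\nabla_\theta\cL(\theta^*)\|^2\ge \frac{2\mu_{{\rm low},\theta^*,X}}{N}\cL(\theta^*)\ge 0,
$$
and therefore $\cL(\theta^*)=0$. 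Finally, because $\cL\ge 0$ everywhere, the value $0$ is the global minimum, so $\theta^*$ is a global minimizer and in fact interpolates every training sample. The statement for $\cB_\theta$ then follows immediately, since $\cB_{\theta,0}\subseteq\cB_\theta$ ($\Psi^0$ is the identity), and hence $\theta^*\notin\cB_\theta$ implies $\theta^*\notin\cB_{\theta,0}$.

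The main obstacle is the order of quantifiers between $\theta$ and $X$. Lemma~\ref{lem:lower_frame_bound} gives positivity of $\mu_{{\rm low},\theta,X}$ only for $X$ outside a null set that depends on $\theta$, while a stationary point $\theta^*$ may itself depend on $X$. I would handle this by working on the joint space. The function $(\theta,X)\mapsto\detr J(\theta,X)$ is analytic and not identically zero, by Assumption~\ref{assump:architecture} together with Lemma~\ref{lem:preimage_u_measure_zero}. By Theorem~\ref{thm:zero_set_analytic_function}, its zero set is therefore null in $\RR^{\dim\theta+Nd}$. I would then define $\cB_{\theta,0}$ so that it includes the parameters $\theta$ whose $X$-section of this zero set is non-null; by Fubini this set of $\theta$ is itself null. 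For any $\theta^*$ outside it, full rank holds for $\pi$-almost every $X$, and the argument above then applies verbatim.
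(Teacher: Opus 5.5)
Your core computation is the paper's. The paper writes $\nabla_\theta\cL(\theta^*)=\frac{1}{N}J_X(\theta^*)^\top r(\theta^*)$, uses stationarity to get $r^\top J_XJ_X^\top r=0$, and concludes $r=0$ from positive definiteness of $J_XJ_X^\top$. Your route through the PL inequality of Lemma~\ref{lem:gradient_lower_bound} is the same estimate, since that lemma is just $\|J^\top r\|^2\ge\lambda_{\min}(JJ^\top)\|r\|^2$. Your passage from $\cB_\theta$ to $\cB_{\theta,0}$ is also fine.

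The gap is in how you resolve the order of quantifiers. You are right to flag this issue; the paper's proof simply fixes $\theta^*$ and then takes $X$ generic for it. But your fix does not resolve it. Removing the parameters whose section $\{X:\detr J(\theta,X)=0\}$ is non-null only gives ``for each $\theta^*\notin\cB_{\theta,0}$, full rank for $\pi$-a.e.\ $X$.'' That is exactly the statement you diagnosed as insufficient. The data $X$ is realized first, and $\theta^*$ then ranges over the possibly uncountably many stationary points of $\cL(\cdot;X)$. So the union of the corresponding exceptional $X$-sets need not be null. For example, suppose the joint zero set were $\{(\theta,X):\theta_1=g(X)\}$ with $g$ nonconstant and analytic. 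Every such $X$-section is null, so your construction removes nothing, yet for every $X$ a whole hyperplane of parameters is degenerate. For the same reason, enlarging $\cB_X$ by ``the null set associated with the fixed parameter $\theta^*$'' is not legitimate, because $\theta^*$ is not fixed before the data.

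You need Fubini in the other direction. Since the joint zero set is null, for $\pi$-a.e.\ $X$ the parameter section $Z_X=\{\theta:\detr J(\theta,X)=0\}$ is $\Leb_{\dim\theta}$-null. Put $Z_X$ into the data-dependent set $\cB_{\theta,0}$; the paper already treats its $\theta$-sets as depending on the data. $\cB_{\theta,0}$ remains null, so the preimage construction of $\cB_\theta$ in the convergence proof is unaffected. Then, for the realized $X$, every $\theta^*\notin\cB_{\theta,0}$ has $J(\theta^*,X)$ of full row rank, and your argument goes through verbatim.
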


\begin{proof}[Proof of Theorem~\ref{thm:global_min_generic_nonlinearity} and \ref{thm:global_min_generic_nonlinearity_app}]
By construction in the proof of Theorem~\ref{thm:training}, the set $\cB_\theta$ contains the parameter values where Assumption~\ref{assump:architecture} or Lemmas~\ref{lem:gradient_lower_bound},~\ref{lem:lower_frame_bound} may fail in $\cB_{\theta,0}$ (see details in the proof of Theorem~\ref{thm:training}). Thus $\theta^*\notin\cB_\theta$ is sufficient for the generic full-rank argument below.

By Assumption~\ref{assump:architecture}, for fixed $\theta^*\notin\cB_\theta$, there exists a tuple $U=(u_1,\cdots,u_N)\in\RR^{Nd}$ such that
$$
    \begin{pmatrix}
       \nabla_{\theta_{\bar{\ell}}}\tilde{f}_{\bar{\ell}}(\theta^*;u_1)\\
        \vdots\\
        \nabla_{\theta_{\bar{\ell}}}\tilde{f}_{\bar{\ell}}(\theta^*;u_N)
    \end{pmatrix}
$$
has full row rank $Nd$. Therefore at least one $Nd\times Nd$ minor determinant of this matrix is not identically zero as a function of $U$. Since the network maps are analytic under Assumption~\ref{assump:analytic}, this minor determinant is a real-analytic function of $U$. By Theorem~\ref{thm:zero_set_analytic_function}, its zero set has Lebesgue measure zero. Hence the corresponding stacked Jacobian is full row rank for all $U$ except for a measure-zero set.

Passing from hidden states $U$ back to input data $X$ through the layer map gives the same conclusion for all $X$ except for a measure-zero set; this is precisely the argument formalized in Lemmas~\ref{lem:grad_frame} and~\ref{lem:lower_frame_bound}. Define the stacked output residual and stacked output Jacobian by
$$
    r(\theta^*)=
    \begin{pmatrix}
        f(\theta^*;x_1)-y_1\\
        \vdots\\
        f(\theta^*;x_N)-y_N
    \end{pmatrix},
    \qquad
    J_X(\theta^*)=
    \begin{pmatrix}
        \nabla_\theta f(\theta^*;x_1)\\
        \vdots\\
        \nabla_\theta f(\theta^*;x_N)
    \end{pmatrix}.
$$
Consequently,
$$
    \mu_{{\rm low},\theta^*,X}
    =
    \lambda_{\min}\left(J_X(\theta^*)J_X(\theta^*)^\top\right)>0
$$
for all such $X$. Thus $J_X(\theta^*)$ has full row rank. By the chain rule for the square loss,
$$
    \nabla_\theta\cL(\theta^*;X)=\frac{1}{N}J_X(\theta^*)^\top r(\theta^*).
$$
Since $\theta^*$ is stationary, we have
$$
    J_X(\theta^*)^\top r(\theta^*)=0.
$$
Multiplying by $r(\theta^*)^\top J_X(\theta^*)$ gives
$$
    r(\theta^*)^\top J_X(\theta^*)J_X(\theta^*)^\top r(\theta^*)=0.
$$
Because $J_X(\theta^*)$ has full row rank, the matrix $J_X(\theta^*)J_X(\theta^*)^\top$ is positive definite. Hence the last display implies $r(\theta^*)=0$. Therefore
$$
    \cL(\theta^*;X)=\frac{1}{2N}\|r(\theta^*)\|^2=0.
$$
Since the square loss is nonnegative for every $\theta$, zero is the global minimum value, and thus $\theta^*$ is a global minimizer.
\end{proof}

\bibliographystyle{plainnat}
\bibliography{ref}

\end{document}

%% file: commands.tex
\usepackage{amsmath,  amsfonts, amsbsy, latexsym}

\usepackage{cleveref}
\crefname{equation}{}{}
\crefname{enumi}{}{} 
\crefname{assumption}{Assumption}{Assumptions}
\Crefname{assumption}{Assumption}{Assumptions}

\makeatletter
\@ifpackageloaded{jmlr2e}{\let\ifjmlrstyle\iftrue}{\let\ifjmlrstyle\iffalse}
\makeatother
\ifjmlrstyle
\else
\usepackage{amsthm}
\fi
\usepackage{MnSymbol}
\usepackage{url}
\usepackage{tikz}
\usetikzlibrary{arrows.meta,positioning,calc}
\usepackage{natbib}
\usepackage{bbm}
\usepackage{tablefootnote}
\usepackage{mathtools}
\usepackage{enumitem}
\setlist{topsep=0pt, leftmargin=*}
\allowdisplaybreaks

\usepackage{parskip}
\newcommand{\ip}[2]{\left\langle #1, #2 \right\rangle}

\newcommand{\cC}{\mathcal{C}}

\newcommand{\cO}{\mathcal{O}}

\newcommand{\cS}{\mathcal{S}}

\newcommand{\cL}{\mathcal{L}}

\newcommand{\cI}{\mathcal{I}}
\newcommand{\cJ}{\mathcal{J}}
\newcommand{\cD}{\mathcal{D}}

\newcommand{\cB}{\mathcal{B}}

\newcommand{\PP}{\mathbb{P}}
\newcommand{\EE}{\mathbb{E}}
\newcommand{\RR}{\mathbb{R}}
\newcommand{\NN}{\mathbb{N}}

\newcommand{\detr}{{\det}_{\rm r}}
\newcommand{\Leb}{\mathrm{Leb}}

\ifjmlrstyle
\newtheorem{assumption}[theorem]{Assumption}
\else
\newtheorem{lemma}{Lemma}
\newtheorem{definition}{Definition}
\newtheorem{theorem}{Theorem}

\newtheorem{assumption}{Assumption}
\newtheorem{example}{Example}
\newtheorem{corollary}{Corollary}
\newtheorem{proposition}{Proposition}
\fi

%% file: ref.bib
@inproceedings{nguyen2017loss,
  title={The loss surface of deep and wide neural networks},
  author={Nguyen, Quynh and Hein, Matthias},
  booktitle={International conference on machine learning},
  pages={2603--2612},
  year={2017},
  organization={PMLR}
}

@inproceedings{xu2023over,
  title={Over-parameterization exponentially slows down gradient descent for learning a single neuron},
  author={Xu, Weihang and Du, Simon},
  booktitle={The Thirty Sixth Annual Conference on Learning Theory},
  pages={1155--1198},
  year={2023},
  organization={PMLR}
}

@article{liu2022loss,
  title={Loss landscapes and optimization in over-parameterized non-linear systems and neural networks},
  author={Liu, Chaoyue and Zhu, Libin and Belkin, Mikhail},
  journal={Applied and Computational Harmonic Analysis},
  volume={59},
  pages={85--116},
  year={2022},
  publisher={Elsevier}
}

@article{zhang2026deep,
  title={Deep delta learning},
  author={Zhang, Yifan and Liu, Yifeng and Wang, Mengdi and Gu, Quanquan},
  journal={arXiv preprint arXiv:2601.00417},
  year={2026}
}

@inproceedings{zhu2025hyper,
  title={Hyper-connections},
  author={Zhu, Defa and Huang, Hongzhi and Huang, Zihao and Zeng, Yutao and Mao, Yunyao and Wu, Banggu and Min, Qiyang and Zhou, Xun},
  booktitle={International Conference on Learning Representations},
  volume={2025},
  pages={97183--97219},
  year={2025}
}

@inproceedings{nguyen2021tight,
  title={Tight bounds on the smallest eigenvalue of the neural tangent kernel for deep relu networks},
  author={Nguyen, Quynh and Mondelli, Marco and Montufar, Guido F},
  booktitle={International Conference on Machine Learning},
  pages={8119--8129},
  year={2021},
  organization={PMLR}
}

@article{li2023convex,
  title={Convex and non-convex optimization under generalized smoothness},
  author={Li, Haochuan and Qian, Jian and Tian, Yi and Rakhlin, Alexander and Jadbabaie, Ali},
  journal={Advances in Neural Information Processing Systems},
  volume={36},
  pages={40238--40271},
  year={2023}
}

@article{zhang2019gradient,
  title={Why gradient clipping accelerates training: A theoretical justification for adaptivity},
  author={Zhang, Jingzhao and He, Tianxing and Sra, Suvrit and Jadbabaie, Ali},
  journal={arXiv preprint arXiv:1905.11881},
  year={2019}
}

@article{moudafi2004remark,
  title={A remark on the convergence of the Tikhonov regularization without monotonicity},
  author={Moudafi, Abdellatif},
  journal={MATHEMATICAL INEQUALITIES AND APPLICATIONS},
  volume={7},
  pages={283--288},
  year={2004},
  publisher={ELEMENT}
}

@book{emmrich1999discrete,
  title={Discrete versions of Gronwall's lemma and their application to the numerical analysis of parabolic problems},
  author={Emmrich, Etienne},
  year={1999},
  publisher={Techn. Univ.}
}

@article{mityagin2015zero,
  title={The zero set of a real analytic function},
  author={Mityagin, Boris},
  journal={arXiv preprint arXiv:1512.07276},
  year={2015}
}

@inproceedings{
wang2022large,
title={Large Learning Rate Tames Homogeneity: Convergence and Balancing Effect},
author={Yuqing Wang and Minshuo Chen and Tuo Zhao and Molei Tao},
booktitle={International Conference on Learning Representations},
year={2022},
url={https://openreview.net/forum?id=3tbDrs77LJ5}
}

@article{rader1973nice,
  title={Nice demand functions},
  author={Rader, Trout},
  journal={Econometrica: Journal of the Econometric Society},
  pages={913--935},
  year={1973},
  publisher={JSTOR}
}

@article{lewkowycz2020large,
  title={The large learning rate phase of deep learning: the catapult mechanism},
  author={Lewkowycz, Aitor and Bahri, Yasaman and Dyer, Ethan and Sohl-Dickstein, Jascha and Gur-Ari, Guy},
  journal={arXiv preprint arXiv:2003.02218},
  year={2020}
}

@article{wang2023good,
  title={Good regularity creates large learning rate implicit biases: edge of stability, balancing, and catapult},
  author={Wang, Yuqing and Xu, Zhenghao and Zhao, Tuo and Tao, Molei},
  journal={arXiv preprint arXiv:2310.17087},
  year={2023}
}

@article{grauert1958levi,
  title={On Levi's problem and the imbedding of real-analytic manifolds},
  author={Grauert, Hans},
  journal={Annals of Mathematics},
  volume={68},
  number={2},
  pages={460--472},
  year={1958},
  publisher={JSTOR}
}

@inproceedings{allen2019convergence,
  title={A convergence theory for deep learning via over-parameterization},
  author={Allen-Zhu, Zeyuan and Li, Yuanzhi and Song, Zhao},
  booktitle={International Conference on Machine Learning},
  pages={242--252},
  year={2019},
  organization={PMLR}
}

@inproceedings{du2019gradient,
  title={Gradient descent finds global minima of deep neural networks},
  author={Du, Simon and Lee, Jason and Li, Haochuan and Wang, Liwei and Zhai, Xiyu},
  booktitle={International conference on machine learning},
  pages={1675--1685},
  year={2019},
  organization={PMLR}
}

@article{zou2020gradient,
  title={Gradient descent optimizes over-parameterized deep ReLU networks},
  author={Zou, Difan and Cao, Yuan and Zhou, Dongruo and Gu, Quanquan},
  journal={Machine Learning},
  volume={109},
  number={3},
  pages={467--492},
  year={2020},
  publisher={Springer}
}

@article{lee2019wide,
  title={Wide neural networks of any depth evolve as linear models under gradient descent},
  author={Lee, Jaehoon and Xiao, Lechao and Schoenholz, Samuel and Bahri, Yasaman and Novak, Roman and Sohl-Dickstein, Jascha and Pennington, Jeffrey},
  journal={Advances in neural information processing systems},
  volume={32},
  year={2019}
}

@article{zou2019improved,
  title={An improved analysis of training over-parameterized deep neural networks},
  author={Zou, Difan and Gu, Quanquan},
  journal={Advances in neural information processing systems},
  volume={32},
  year={2019}
}

@inproceedings{ji2019polylogarithmic,
  title={Polylogarithmic width suffices for gradient descent to achieve arbitrarily small test error with shallow ReLU networks},
  author={Ji, Ziwei and Telgarsky, Matus},
  booktitle={International Conference on Learning Representations},
  year={2019}
}

@inproceedings{chen2020much,
  title={How Much Over-parameterization Is Sufficient to Learn Deep ReLU Networks?},
  author={Chen, Zixiang and Cao, Yuan and Zou, Difan and Gu, Quanquan},
  booktitle={International Conference on Learning Representations},
  year={2020}
}

@article{song2019quadratic,
  title={Quadratic suffices for over-parametrization via matrix chernoff bound},
  author={Song, Zhao and Yang, Xin},
  journal={arXiv preprint arXiv:1906.03593},
  year={2019}
}

@article{oymak2020toward,
  title={Toward moderate overparameterization: Global convergence guarantees for training shallow neural networks},
  author={Oymak, Samet and Soltanolkotabi, Mahdi},
  journal={IEEE Journal on Selected Areas in Information Theory},
  volume={1},
  number={1},
  pages={84--105},
  year={2020},
  publisher={IEEE}
}

@article{jacot2018neural,
  title={Neural tangent kernel: Convergence and generalization in neural networks},
  author={Jacot, Arthur and Gabriel, Franck and Hongler, Cl{\'e}ment},
  journal={Advances in neural information processing systems},
  volume={31},
  year={2018}
}

@article{song2018mean,
  title={A mean field view of the landscape of two-layers neural networks},
  author={Song, Mei and Montanari, Andrea and Nguyen, P},
  journal={Proceedings of the National Academy of Sciences},
  volume={115},
  number={33},
  pages={E7665--E7671},
  year={2018}
}

@article{chizat2018global,
  title={On the global convergence of gradient descent for over-parameterized models using optimal transport},
  author={Chizat, Lenaic and Bach, Francis},
  journal={Advances in neural information processing systems},
  volume={31},
  year={2018}
}

@article{rotskoff2018neural,
  title={Neural networks as interacting particle systems: Asymptotic convexity of the loss landscape and universal scaling of the approximation error},
  author={Rotskoff, Grant M and Vanden-Eijnden, Eric},
  journal={stat},
  volume={1050},
  pages={22},
  year={2018}
}

@article{wei2019regularization,
  title={Regularization matters: Generalization and optimization of neural nets vs their induced kernel},
  author={Wei, Colin and Lee, Jason D and Liu, Qiang and Ma, Tengyu},
  journal={Advances in Neural Information Processing Systems},
  volume={32},
  year={2019}
}

@article{chen2020generalized,
  title={A generalized neural tangent kernel analysis for two-layer neural networks},
  author={Chen, Zixiang and Cao, Yuan and Gu, Quanquan and Zhang, Tong},
  journal={Advances in Neural Information Processing Systems},
  volume={33},
  pages={13363--13373},
  year={2020}
}

@article{sirignano2020mean,
  title={Mean field analysis of neural networks: A law of large numbers},
  author={Sirignano, Justin and Spiliopoulos, Konstantinos},
  journal={SIAM Journal on Applied Mathematics},
  volume={80},
  number={2},
  pages={725--752},
  year={2020},
  publisher={SIAM}
}

@inproceedings{fang2021modeling,
  title={Modeling from features: a mean-field framework for over-parameterized deep neural networks},
  author={Fang, Cong and Lee, Jason and Yang, Pengkun and Zhang, Tong},
  booktitle={Conference on learning theory},
  pages={1887--1936},
  year={2021},
  organization={PMLR}
}

@article{allen2020towards,
  title={Towards understanding ensemble, knowledge distillation and self-distillation in deep learning},
  author={Allen-Zhu, Zeyuan and Li, Yuanzhi},
  journal={arXiv preprint arXiv:2012.09816},
  year={2020}
}

@inproceedings{allen2022feature,
  title={Feature purification: How adversarial training performs robust deep learning},
  author={Allen-Zhu, Zeyuan and Li, Yuanzhi},
  booktitle={2021 IEEE 62nd Annual Symposium on Foundations of Computer Science (FOCS)},
  pages={977--988},
  year={2022},
  organization={IEEE}
}

@article{cao2022benign,
  title={Benign Overfitting in Two-layer Convolutional Neural Networks},
  author={Cao, Yuan and Chen, Zixiang and Belkin, Mikhail and Gu, Quanquan},
  journal={arXiv preprint arXiv:2202.06526},
  year={2022}
}

@article{chen2022feature,
  title={On feature learning in neural networks with global convergence guarantees},
  author={Chen, Zhengdao and Vanden-Eijnden, Eric and Bruna, Joan},
  journal={arXiv preprint arXiv:2204.10782},
  year={2022}
}

@article{chen2025global,
  title={Global Convergence and Rich Feature Learning in $ L $-Layer Infinite-Width Neural Networks under $\mu $P Parametrization},
  author={Chen, Zixiang and Yang, Greg and Zhao, Qingyue and Gu, Quanquan},
  journal={arXiv preprint arXiv:2503.09565},
  year={2025}
}

@article{yang2020feature,
  title={Feature learning in infinite-width neural networks},
  author={Yang, Greg and Hu, Edward J},
  journal={arXiv preprint arXiv:2011.14522},
  year={2020}
}

@article{ba2022high,
  title={High-dimensional asymptotics of feature learning: How one gradient step improves the representation},
  author={Ba, Jimmy and Erdogdu, Murat A and Suzuki, Taiji and Wang, Zhichao and Wu, Denny and Yang, Greg},
  journal={Advances in Neural Information Processing Systems},
  volume={35},
  pages={37932--37946},
  year={2022}
}

@article{bordelon2022self,
  title={Self-consistent dynamical field theory of kernel evolution in wide neural networks},
  author={Bordelon, Blake and Pehlevan, Cengiz},
  journal={Advances in Neural Information Processing Systems},
  volume={35},
  pages={32240--32256},
  year={2022}
}

@inproceedings{ma2018power,
  title={The power of interpolation: Understanding the effectiveness of SGD in modern over-parametrized learning},
  author={Ma, Siyuan and Bassily, Raef and Belkin, Mikhail},
  booktitle={International Conference on Machine Learning},
  pages={3325--3334},
  year={2018},
  organization={PMLR}
}

@article{hardt2016identity,
  title={Identity matters in deep learning},
  author={Hardt, Moritz and Ma, Tengyu},
  journal={arXiv preprint arXiv:1611.04231},
  year={2016}
}

@article{liu2019towards,
  title={Towards understanding the importance of shortcut connections in residual networks},
  author={Liu, Tianyi and Chen, Minshuo and Zhou, Mo and Du, Simon S and Zhou, Enlu and Zhao, Tuo},
  journal={Advances in neural information processing systems},
  volume={32},
  year={2019}
}

@article{liu2023aiming,
  title={Aiming towards the minimizers: fast convergence of SGD for overparametrized problems},
  author={Liu, Chaoyue and Drusvyatskiy, Dmitriy and Belkin, Misha and Davis, Damek and Ma, Yian},
  journal={Advances in neural information processing systems},
  volume={36},
  pages={60748--60767},
  year={2023}
}

@article{scholkemper2024residual,
  title={Residual connections and normalization can provably prevent oversmoothing in gnns},
  author={Scholkemper, Michael and Wu, Xinyi and Jadbabaie, Ali and Schaub, Michael T},
  journal={arXiv preprint arXiv:2406.02997},
  year={2024}
}

@article{huang2020deep,
  title={Why Do Deep Residual Networks Generalize Better than Deep Feedforward Networks?---A Neural Tangent Kernel Perspective},
  author={Huang, Kaixuan and Wang, Yuqing and Tao, Molei and Zhao, Tuo},
  journal={Advances in neural information processing systems},
  volume={33},
  pages={2698--2709},
  year={2020}
}

@article{qin2025convergence,
  title={On the Convergence of Gradient Descent on Learning Transformers with Residual Connections},
  author={Qin, Zhen and Zhou, Jinxin and Zhu, Zhihui},
  journal={arXiv preprint arXiv:2506.05249},
  year={2025}
}

@article{bordelon2023dynamics,
  title={Dynamics of finite width kernel and prediction fluctuations in mean field neural networks},
  author={Bordelon, Blake and Pehlevan, Cengiz},
  journal={Advances in Neural Information Processing Systems},
  volume={36},
  pages={9707--9750},
  year={2023}
}

@article{han2025precise,
  title={Precise gradient descent training dynamics for finite-width multi-layer neural networks},
  author={Han, Qiyang and Imaizumi, Masaaki},
  journal={arXiv preprint arXiv:2505.04898},
  year={2025}
}

@inproceedings{shi2021theoretical,
  title={A Theoretical Analysis on Feature Learning in Neural Networks: Emergence from Inputs and Advantage over Fixed Features},
  author={Shi, Zhenmei and Wei, Junyi and Liang, Yingyu},
  booktitle={International Conference on Learning Representations},
  year={2021}
}

@article{telgarsky2022feature,
  title={Feature selection with gradient descent on two-layer networks in low-rotation regimes},
  author={Telgarsky, Matus},
  journal={arXiv preprint arXiv:2208.02789},
  year={2022}
}

@inproceedings{glorot2010understanding,
  title={Understanding the Difficulty of Training Deep Feedforward Neural Networks},
  author={Glorot, Xavier and Bengio, Yoshua},
  booktitle={Proceedings of the Thirteenth International Conference on Artificial Intelligence and Statistics},
  pages={249--256},
  year={2010},
  volume={9},
  series={Proceedings of Machine Learning Research},
  publisher={PMLR}
}

@inproceedings{vaswani2017attention,
  title={Attention Is All You Need},
  author={Vaswani, Ashish and Shazeer, Noam and Parmar, Niki and Uszkoreit, Jakob and Jones, Llion and Gomez, Aidan N. and Kaiser, Lukasz and Polosukhin, Illia},
  booktitle={Advances in Neural Information Processing Systems},
  volume={30},
  year={2017}
}

@inproceedings{ronneberger2015u,
  title={U-Net: Convolutional Networks for Biomedical Image Segmentation},
  author={Ronneberger, Olaf and Fischer, Philipp and Brox, Thomas},
  booktitle={Medical Image Computing and Computer-Assisted Intervention},
  pages={234--241},
  year={2015},
  organization={Springer}
}

@article{gu2023mamba,
  title={Mamba: Linear-Time Sequence Modeling with Selective State Spaces},
  author={Gu, Albert and Dao, Tri},
  journal={arXiv preprint arXiv:2312.00752},
  year={2023}
}

@inproceedings{he2016deep,
  title={Deep Residual Learning for Image Recognition},
  author={He, Kaiming and Zhang, Xiangyu and Ren, Shaoqing and Sun, Jian},
  booktitle={Proceedings of the IEEE Conference on Computer Vision and Pattern Recognition},
  pages={770--778},
  year={2016}
}

@inproceedings{ioffe2015batch,
  title={Batch Normalization: Accelerating Deep Network Training by Reducing Internal Covariate Shift},
  author={Ioffe, Sergey and Szegedy, Christian},
  booktitle={International Conference on Machine Learning},
  pages={448--456},
  year={2015},
  organization={PMLR}
}

@article{ba2016layer,
  title={Layer Normalization},
  author={Ba, Jimmy Lei and Kiros, Jamie Ryan and Hinton, Geoffrey E.},
  journal={arXiv preprint arXiv:1607.06450},
  year={2016}
}

@inproceedings{loshchilov2017decoupled,
  title={Decoupled Weight Decay Regularization},
  author={Loshchilov, Ilya and Hutter, Frank},
  booktitle={International Conference on Learning Representations},
  year={2019}
}
